\definecolor{mydarkblue}{rgb}{0,0.08,0.45}
\newcommand{\calB}{\mathcal{B}}
\newcommand{\calD}{\mathcal{D}}
\newcommand{\calF}{\mathcal{F}}
\newcommand{\calZ}{\mathcal{Z}}
\newcommand{\scrO}{\mathscr{O}}
\newcommand{\Var}{\mathrm{Var}}
\newcommand{\Ex}{\mathbb{E}}
\newcommand{\Prob}{\mathbb{P}}
\newcommand{\RR}{\mathbb{R}}
\newcommand{\NN}{\mathbb{N}}
\newcommand{\e}{\mathrm{e}}
\DeclareMathOperator*{\argmin}{argmin}
\DeclareMathOperator*{\minimize}{minimize}
\newcommand{\sumK}{\sum_{k=1}^K}
\newcommand{\sumM}{\sum_{m=1}^M}
\newcommand{\sfT}{\mathsf{T}}
\newcommand{\xbar}{\overline{x}}
\renewcommand{\mid}{\,|\,}
\newcommand{\midd}{\,|\kern-0.25ex|\,}
\newcommand{\setn}{\llbracket n\rrbracket}
\newcommand{\setM}{\llbracket M\rrbracket}
\newcommand{\set}[1]{\llbracket #1\rrbracket}
\newcommand{\dotp}[2]{\langle #1, #2\rangle}
\newcommand{\Probhat}{\widehat{\mathbb{P}}}
\DeclareMathAlphabet\rsfscr{U}{rsfso}{m}{n}
\let\le\leqslant
\let\ge\geqslant
\let\hat\widehat
\let\tilde\widetilde
\DeclareFontFamily{OMX}{MnSymbolE}{}
\DeclareSymbolFont{MnLargeSymbols}{OMX}{MnSymbolE}{m}{n}
\DeclareFontShape{OMX}{MnSymbolE}{m}{n}{
	<-6>  MnSymbolE5
	<6-7>  MnSymbolE6
	<7-8>  MnSymbolE7
	<8-9>  MnSymbolE8
	<9-10> MnSymbolE9
	<10-12> MnSymbolE10
	<12->   MnSymbolE12
}{}
\DeclareFontShape{OMX}{MnSymbolE}{b}{n}{
	<-6>  MnSymbolE-Bold5
	<6-7>  MnSymbolE-Bold6
	<7-8>  MnSymbolE-Bold7
	<8-9>  MnSymbolE-Bold8
	<9-10> MnSymbolE-Bold9
	<10-12> MnSymbolE-Bold10
	<12->   MnSymbolE-Bold12
}{}
\let\llangle\@undefined
\let\rrangle\@undefined
\DeclareMathDelimiter{\llangle}{\mathopen}%
{MnLargeSymbols}{'164}{MnLargeSymbols}{'164}
\DeclareMathDelimiter{\rrangle}{\mathclose}%
{MnLargeSymbols}{'171}{MnLargeSymbols}{'171}
\renewcommand{\left}{\mleft}
\renewcommand{\right}{\mright}
\newcommand{\iid}{i.i.d.\xspace}
\newcommand{\Adam}{\textsc{Adam}\xspace}
\newcommand{\AdamW}{\textsc{AdamW}\xspace}
\newcommand{\SGD}{\textsc{SGD}\xspace}
\newcommand{\SHB}{\textsc{SHB}\xspace}
\newcommand{\AdaGrad}{\textsc{AdaGrad}\xspace}
\newcommand{\ResNet}{\textsc{ResNet}\xspace}
\theoremstyle{definition}
\newtheorem{assumption}{Assumption}
\newtheorem{definition}{Definition}
\theoremstyle{plain}
\newtheorem{theorem}{Theorem}
\newtheorem{lemma}{Lemma}
\theoremstyle{remark}
\newtheorem{remark}{Remark}
\crefname{assumption}{Assumption}{Assumptions}
\Crefname{assumption}{Assumption}{Assumptions}
\crefname{problem}{Problem}{Problems}
\Crefname{problem}{Problem}{Problems}
\crefname{example}{Example}{Examples}
\Crefname{example}{Example}{Examples}
\newcommand{\norm}[1]{\left\lVert#1\right\rVert}
\newcommand{\matsnorm}[2]{\lvert\kern-0.25ex\lvert\kern-0.25ex\lvert #1 \rvert\kern-0.25ex\rvert\kern-0.25ex\rvert_{#2}}
\newcommand{\out}{\mathsf{out}}
\newcommand{\gkm}{g_k^m}
\newcommand{\xkm}{x_k^m}
\newcommand{\xbark}{\xbar_k}
\newcommand{\xbarkpo}{\xbar_{k+1}}
\newcommand{\xstar}{x^\star}
\newcommand{\gradFxkm}{\nabla F(x_k^m)}
\newcommand{\gradFxbark}{\nabla F(\xbar_k)}
\newcommand{\Fstar}{F^\star}
\newcommand{\gbar}{\overline{g}}
\newcommand{\gbark}{\gbar_k}
\begin{document}
    \title{\sffamily Communication-Efficient Adaptive Batch Size Strategies \\for Distributed Local Gradient Methods
    }
    \author{
        Tim Tsz-Kit Lau%
        \thanks{Department of Biostatistics, Epidemiology and Informatics, Perelman School of Medicine, University of Pennsylvania, Philadelphia, PA 19104, USA; Email: 
        \href{mailto:timlautk@upenn.edu}{\texttt{timlautk@upenn.edu}}. Part of the work of Tim Tsz-Kit Lau was performed at The University of Chicago Booth School of Business. }
        \and
        Weijian Li%
        \thanks{Department of Computer Science, Northwestern University, Evanston, IL 60208, USA; Email: \href{mailto:weijianli2021@u.northwestern.edu}{\texttt{weijianli2021@u.northwestern.edu}}, \href{mailto:hanliu@northwestern.edu}{\texttt{hanliu@northwestern.edu}}.}
        \and
        Chenwei Xu%
        \thanks{Department of Statistics and Data Science, Northwestern University, Evanston, IL 60208, USA; \href{mailto:chenweixu2023@u.northwestern.edu}{\texttt{chenweixu2023@u.northwestern.edu}}.}
        \and 
        Han Liu%
        \footnotemark[2]
        \footnotemark[3]
        \and Mladen Kolar%
        \thanks{Department of Data Sciences and Operations, University of Southern California Marshall School of Business, Los Angeles, CA 90089, USA; Email: \href{mailto:mkolar@marshall.usc.edu}{\texttt{mkolar@marshall.usc.edu}}. Part of the work of Mladen Kolar was performed at The University of Chicago Booth School of Business. }
    }
    
    \maketitle

\begin{abstract} 
    Modern deep neural networks often require distributed training with many workers due to their large size. As the number of workers increases, communication overheads become the main bottleneck in data-parallel minibatch stochastic gradient methods with per-iteration gradient synchronization. Local gradient methods like Local \SGD reduce communication by only synchronizing model parameters and/or gradients after several local steps. Despite an understanding of their convergence and the importance of batch sizes for training efficiency and generalization, optimal batch sizes for local gradient methods are difficult to determine. We introduce adaptive batch size strategies for local gradient methods that increase batch sizes adaptively to reduce minibatch gradient variance. We provide convergence guarantees under homogeneous data conditions and support our claims with image classification and language modeling experiments, demonstrating the effectiveness of our strategies for both training efficiency and generalization. 
\end{abstract}

\section{Introduction}
\label{sec:intro}
Recent advances in deep learning, particularly large language models (LLMs) with more than billions of parameters, necessitate efficient distributed training on many workers such as GPUs or TPUs. Data-parallel optimization \citep{zinkevich2010parallelized} simplifies this by having each worker compute and average batch gradients from different \iid~data batches, followed by a global model update. Despite its efficiency with numerous workers, this method faces challenges such as costly gradient synchronization and reduced generalization from large batches. Frequent synchronization of model parameters or gradients among workers leads to significant communication overheads, exacerbated as worker counts and model sizes increase \citep{tang2021one,li2022one,xu2023slamb}. This challenge intensifies with workers distributed across nodes with limited inter-node communication speeds. To mitigate this, communication-efficient distributed optimization methods \citep{lan2020communication}, such as local gradient methods \citep{stich2019local, zhou2018convergence_ijcai, yu2019parallel, haddadpour2019local, wang2019adaptive} and their variants, have been developed. In local stochastic gradient methods (Local \SGD), workers train locally and synchronize less frequently by averaging model parameters after several local steps, effectively reducing communication overheads. In data-parallel optimization, the total batch size is the sum of all workers' local batch sizes, increasing with more workers. Attributed to the ever-rising memory of workers, large batch sizes are preferred to fully utilize worker memory, making large-batch training the \emph{de facto} paradigm in large-scale model training. However, this approach often results in worse model generalization, a problem noted by \citet{lecun2002efficient}. Large-batch training typically creates a loss landscape with sharp minima, worsening generalization \citep{keskar2017on}. Although scaling the learning rate with the batch size can mitigate the so-called \emph{generalization gap} \citep{hoffer2017train,smith2018bayesian,smith2018dont}, it cannot fully close it. \citet{keskar2017on} recommended adaptive sampling to gradually increase batch sizes during training \citep{byrd2012sample,friedlander2012hybrid}, explored in \citet{de2016big,de2017automated,lau2024adadagrad,ostroukhov2024adabatchgrad}. However, this method is still confined to single- or few-worker scenarios and does not scale well with larger models and more workers. 

Motivated by the above two challenges, we answer the following question affirmatively: 
\begin{center}
    \emph{Can we develop communication-efficient and memory-efficient training strategies with desirable generalization performance for large-scale distributed training? }
\end{center}    
    
\paragraph{Contributions.}
We make the following contributions: (i) We develop adaptive batch size strategies for \emph{local gradient methods} by extending the adaptive sampling methods in \citet{byrd2012sample,bollapragada2018adaptive}. Such strategies are for local batch sizes, i.e., batch sizes of individual workers. This is particularly relevant when the workers are heterogeneous devices with different computational speeds and memories, as well as different local objectives and datasets. The proposed strategies are \emph{communication-efficient} by leveraging local gradient methods, and \emph{memory-efficient} by leveraging large batch sizes at later stages of training. (ii) Under various standard assumptions, we provide convergence guarantees of the proposed approaches for smooth (strongly) convex and nonconvex objectives under a homogeneous data setting. (iii) We empirically demonstrate the efficacy of the proposed strategies with \ResNet for image classification on the CIFAR-10 and ImageNet datasets and MicroLlama on the C4 dataset respectively, in terms of communication efficiency, memory efficiency, and model generalization performance. To grasp a thorough understanding of the strategies, we also study the effect of different hyperparameters such as the number of local steps and the one which controls the probability of increasing batch sizes. 

\section{Related Work}    

\paragraph{Minibatch gradient methods.}
\emph{Minibatch SGD} \citep{dean2012large, robbins1951, bottou2010large, dekel2012optimal} is the simplest and most widely used optimizer for large-scale machine learning \citep{bottou2018optimization}, and is easily extendable with data parallelism~\citep{li2020pytorch}. The convergence of minibatch \SGD and its variants like \AdaGrad \citep{duchi2011adagrad} and \Adam \citep{kingma2015} is well-understood across various settings, including convex \citep{gower2019sgd}, nonconvex \citep{khaled2023better}, and nonsmooth \citep{davis2020stochastic} objectives, under different gradient variance assumptions. However, frequent gradient synchronization among workers leads to communication overheads, posing a significant challenge in distributed training.

\paragraph{Local gradient methods.} 
Local SGD \citep{stich2019local, zinkevich2010parallelized, mcmahan2017communication}, a communication-efficient variant of minibatch SGD, reduces overheads from frequent synchronization in distributed training. Recent studies \citep{liu2024asynchronous,douillard2023diloco} highlight its efficacy in pre-training large language models with extensive computational resources. Optimization of Local SGD for varied data settings has shown promising convergence rates \citep{stich2019local, woodworth2020local, woodworth2020minibatch, koloskova2020unified, khaled2020tighter, stich2020error, wang2021cooperative}. It includes extensions like asynchronous SGD \citep{mishchenko2022asynchronous, koloskova2022sharper, nguyen2022federated, leconte2024queuing, islamov2024asgrad} and hierarchical SGD \citep{lin2020dont, castiglia2021multilevel, wang2022demystifying}. Federated learning \citep{mcmahan2017communication, chen2021communication, kairouz2021advances, karimireddy2020scaffold, reddi2021adaptive}, another model focusing on efficiency, addresses heterogeneity and partial participation. While applicable, we defer exploring federated learning's theoretical and empirical aspects to future work.

\paragraph{Generalization gap in large-batch training.} Large-batch training not only faces communication overheads in minibatch \SGD under distributed data parallelism but also shows poorer generalization despite better hardware efficiency. It has been successful in tasks like ImageNet classification and language model training \citep{you2020large,goyal2017accurate,akiba2017extremely,shallue2019measuring,liu2019roberta,puri2018large}. Addressing these, \citet{lin2020dont} introduced \emph{Post-local SGD}, a hybrid of minibatch and Local \SGD, where early training involves frequent synchronization of local models, followed by Local \SGD. In large-scale environments, \citet{ortiz2021trade} noted that Post-local \SGD does not consistently outperform Local \SGD in generalization, and the transition from minibatch to Local \SGD presents a trade-off between communication efficiency and generalization. Following these findings, \citet{gu2024quadratic,gu2023why} analyzed Local \SGD's generalization via an SDE approximation and proposed the Quadratic Synchronization Rule (QSR) for adjusting local gradient steps during training.

\paragraph{Adaptive batch size strategies.}
In stochastic optimization, adaptive sampling methods \citep{byrd2012sample,friedlander2012hybrid,bollapragada2018adaptive} adjust batch sizes based on gradient noise, enhancing training processes for smooth finite-sum problems. These methods, further explored in deep learning \citep{de2016big,de2017automated,lau2024adadagrad,ostroukhov2024adabatchgrad}, have not been applied to data parallelism with distributed batches. The strategies discussed here draw from batch size scheduling, a key technique in pretraining LLMs, evidenced by its use in GPT-3 \citep{brown2020language}, Nemotron-4 \citep{parmar2024nemotron}, OLMo 65B \citep{groeneveld2024olmo}, and DeepSeek-V2 \citep{deepseekai2024deepseekv2}, which involve increasing batch sizes in set stages. Despite their benefits for efficiency and parallelization, these strategies are heuristic and have unclear impacts on training. This work aims to develop theoretically grounded, communication-efficient adaptive batch size strategies akin to adaptive sampling, viewed as stochastic variance-reduction methods \citep{johnson2013accelerating,xiao2014proximal,gower2020variance}, focusing on scalable batch size increments.

Several studies have explored adaptive batch size strategies in deep learning, including Big Batch \SGD \citep{de2016big,de2017automated}, CABS \citep{balles2017coupling}, AdaBatch \citep{devarakonda2017adabatch}, and SimiGrad \citep{qin2021simigrad}. Conversely, AdaScale \SGD \citep{johnson2020adascale} modifies learning rates for large batches instead of batch sizes. These methods, however, are often not theoretically robust, lacking solid convergence proofs or are restricted to \SGD analyses under strict conditions like convexity or the Polyak--\L{}ojasiewicz condition. Their batch size adjustment strategies usually depend on heuristic rules, such as geometric growth or decay \citep{qin2021simigrad}, which can lead to unreliable outcomes due to the lack of theoretical guarantees. Crucially, these methods overlook the distributed data-parallel context, prevalent in practice, thus failing to address the needs of adaptive batch size strategies in large-scale distributed training.

\paragraph{Hyperparameter tuning for Local \SGD.}
Besides batch sizes, recent research has also focused on optimizing learning rate schedules for Local \SGD, particularly in identical data settings as discussed in \citet{balles2024on}. This optimization is crucial in federated learning with heterogeneous data, where local objectives differ significantly in geometry and smoothness, necessitating locally adaptive learning rate schedules \citep{mukherjee2024locally,kim2024adaptive}. Our adaptive local batch size strategies complement these efforts.

\section{Preliminaries}
\paragraph{Notation.} 
We define $\setn \coloneqq \{1, \ldots, n\}$ for $n\in\NN^*\coloneqq\NN\setminus\{0\}$, the set of positive integers. We also write $\set{m,n} \coloneqq \{m,m+1, \ldots, n\}$ for the set of integers ranging from $m$ to $n$ (inclusive) for $m<n$. We denote the Euclidean inner product in $\RR^d$ by $\dotp{\cdot}{\cdot}$ and its induced $L_2$-norm by $\|\cdot\|$. The ceiling function is denoted by $\lceil\cdot\rceil$. We also write $\updelta_x$ a Dirac measure at the point $x\in\RR^d$. 	

\subsection{Problem Formulation}    
We consider a distributed training setting with $M$ workers, each with a possibly heterogeneous (i.e. non-\iid) underlying true data distribution $\Prob_m$ ($m\in\setM$). The true data distribution $\Prob_m$ is usually unknown, but can be approximated by its empirical distribution $\Probhat_m = \frac{1}{n_m}\sum_{i=1}^{n_m}\updelta_{\xi_i^m}$, where $\calD_m \coloneqq\{\xi_i^m\}_{i\in\set{n_m}}$ is the set of $n_m$ data samples of worker $m$ and $\xi_i^m\in\calZ_m\subseteq\RR^p$ for $i\in\set{n_m}$. The goal is to find an approximate minimizer of the global objective $F$ which is the average of the local objectives $F_m$: 
\begin{equation*}
    \minimize_{x\in\RR^d} \ F(x) \coloneqq \frac1M\sumM F_m(x), 
\end{equation*}
where $F_m(x) \coloneqq \Ex_{\xi\sim\Probhat_m}[f_m(x;\xi)] = \frac{1}{n_m}\sum_{i=1}^{n_m} f_m(x; \xi_i^m)$ is the local objective for worker $m$. 

At each iteration $k\in\NN$, since the number of samples of each worker $n_m$ could be large, the per-worker gradient $\nabla F_m$ cannot be computed. It can however be approximated by its minibatch counterpart 
\begin{equation}\label{eqn:batch_grad}
    (\forall x\in\RR^d)\quad\nabla F_{\calB_k^m}(x) \coloneqq \frac{1}{b_k^m}\sum_{i\in\calB_k^m} \nabla f_m(x; \xi_i^m),
\end{equation}
where $\calB_k^m$ is the \emph{local batch} of worker $m$ at iteration $k$ and $b_k^m\coloneqq|\calB_k^m|$ is the corresponding \emph{local batch size}, assuming that $f_m(\cdot; \xi^m)$ is continuously differentiable for any $\xi^m\in\calZ_m$.

\subsection{Minibatch and Local  Stochastic Gradient Methods}
We present both the formulations of minibatch \SGD under data parallelism and Local \SGD. 

\paragraph{Minibatch \SGD.}
Minibatch \SGD under data parallelism is a fully synchronized stochastic gradient method. In particular, at each iteration $k$, after each worker $m$ has computed its local batch gradient $\nabla F_{\calB_k^m}(x_k)$, the global batch gradient $\nabla F_{\calB_k}(x_k)$ is computed by averaging all local batch gradients by $\nabla F_{\calB_k}(x_k) = \frac1M\sumM \nabla F_{\calB_k^m}(x_k)$ with a global batch size $b_k\coloneqq|\calB_k| = \sumM b_k^m = Mb_k^1$, assuming that all local batch sizes are equal. This is followed by the global model update, performed via $x_{k+1} = x_k - \alpha_k \nabla F_{\calB_k}(x_k)$, where $\alpha_k$ is the learning rate of iteration $k$. The same mechanism also applies to other optimizers such as \Adam \citep{kingma2015} and \AdaGrad \citep{duchi2011adagrad}.

\paragraph{Local \SGD.}     
As opposed to minibatch \SGD, Local \SGD \citep{stich2019local} reduces the communication frequency. Communication for \emph{model averaging} (through \emph{all-reduce} operations) is performed every $H$ local gradient steps, Local \SGD takes the following updates: for $m\in\setM)$ and $k\in\NN$, 
\begin{equation}\label{eqn:local_sgd}
    x_{k+1}^m = 
    \begin{cases}
        \frac1M\sumM \left(x_k^m - \alpha_k \nabla F_{\calB_k^m}(x_k^m)\right) & \text{if } H | k+1, \\
        x_k^m - \alpha_k \nabla F_{\calB_k^m}(x_k^m) & \text{otherwise}.
    \end{cases}
\end{equation}    
We also consider the following alternative representation that explicitly separates the numbers of local gradient steps $H$ and communication rounds $K$. In particular, for $M$ parallel workers, $K$ rounds of communication, $H$ local gradient steps per round, Local \SGD updates can also be expressed as: for $m\in\setM)$ and $k\in\set{0, K-1}$,      
\begin{equation}\label{eqn:local_sgd_alt}        
    \begin{cases}
        x_{k,h+1}^m = x_{k,h}^m - \alpha_{k,h} \nabla F_{\calB_{k,h}^m}(x_{k,h}^m) & \hspace*{-2mm}\text{for }h\in\set{0,H-1}, \\
        x_{k+1,0}^m = \frac1M\sumM x_{k,H}^m.
    \end{cases}
\end{equation}

\section{Adaptive Local Batch Size Strategies for Local Gradient Methods}
The proposed adaptive local batch size strategies are based on the adaptive sampling method \citep{byrd2012sample} originally developed for unconstrained stochastic optimization problems using minibatch \SGD. We outline its mechanism and its extension to Local \SGD below. 

\subsection{Adaptive Sampling Methods}
Introduced in \citet{byrd2012sample}, the adaptive sampling method for minibatch \SGD was designed for a single worker ($M=1$) with the objective $F(x) = \frac{1}{n}\sum_{i=1}^n f(x; \xi_i)$ over samples $\calD=\{\xi_i\}_{i=1}^n$. This method relies on a key stochastic optimization principle: when the sample size $n$ is large, the batch gradient $\nabla F_{\calB_k}$ approximates the full gradient $\nabla F$. If $f(\cdot;\xi)$ is continuously differentiable and convex for any $\xi \in \calZ \subseteq \RR^p$, then $-\nabla F_{\calB_k}(x_k)$ serves as a descent direction for $F$ at $x_k \in \RR^d$, provided there exists $\eta\in\left(0,1\right)$ such that
\begin{equation}\label{eqn:condition}
    \delta_{\calB_k}(x_k) \coloneqq \|\nabla F_{\calB_k}(x_k) - \nabla F(x_k)\| \le \eta\|\nabla F(x_k)\|. 
\end{equation}
Note that $\nabla F(x_k)$ is not available in minibatch \SGD, but $\delta_{\calB_k}(x_k)$ can be approximated by $\hat\delta_{\calB_k}(x_k)^2\coloneqq\frac{1}{b_k}\Var_{i\in\calB_k}(\nabla f(x_k; \xi_i))\cdot\frac{n-b_k}{n-1}$, where, for any vector-valued function $h\colon\RR^d\times\calZ\to\RR^d$, the batch variance is defined by
\begin{equation}\label{eqn:def_var}
    \Var_{i\in\calB_k}(h(x_k; \xi_i)) 
    \coloneqq \frac{1}{b_k-1}\sum_{i\in\calB_k} \norm{h(x_k; \xi_i) - \Ex_{i\in\calB_k}[h(x_k; \xi_i)]}^2, 
\end{equation}
with $\Ex_{i\in\calB_k}[h(x_k; \xi_i)] \coloneqq\frac{1}{b_k}\sum_{i\in\calB_k} h(x_k; \xi_i)$. Hence, as $n\to\infty$, condition \eqref{eqn:condition} can be approximated as 
\begin{equation}\label{eqn:approx_norm}
    \frac{1}{b_k}\Var_{i\in\calB_k}(\nabla f(x_k; \xi_i))\le \eta^2 \|\nabla F_{\calB_k}(x_k)\|^2. 
\end{equation}

In practice, for each iteration $k\in\NN$, the \emph{dynamic sample gradient algorithm} \citep{byrd2012sample} performs the following minibatch \SGD update: $x_{k+1} = x_k - \alpha_k \nabla F_{\calB_k}(x_k)$ with learning rate $\alpha_k>0$, and then checks the following \emph{(approximate) norm test} condition \eqref{eqn:approx_norm}, which can also be viewed as an approximation of the \emph{exact variance norm test} in the stochastic setting:
\begin{equation}\label{eqn:exact_norm}
    \Ex_k\left[\|\nabla F_{\calB_k}(x_k) - \nabla F(x_k) \|^2\right] \le \eta^2 \|\nabla F(x_k)\|^2, 
\end{equation}
i.e., condition \eqref{eqn:condition} holds in expectation. Here we abbreviate the conditional expectation on $\calF_k$ (i.e., the filtration $\calF_k\coloneqq\sigma(\{x_0, \calB_0, \calB_1, \ldots, \calB_{k-1}\})$) by $\Ex_k[\cdot]\coloneqq\Ex[\cdot\mid\calF_k]$. 
If condition \eqref{eqn:approx_norm} is violated, the batch size of the next iteration $b_{k+1}$ is increased, determined by: 
\begin{equation}\label{eqn:norm_test}
    b_{k+1} =\left \lceil \frac{\Var_{i\in\calB_k}(\nabla f(x_k; \xi_i))}{\eta^2  \|\nabla F_{\calB_k}(x_k)\|^2} \right\rceil, 
\end{equation}
and the next batch $\calB_{k+1}$ is sampled with this larger batch size accordingly. Otherwise, the next batch size remains the same as the current one. 
The value of the pre-specified constant $\eta$ controls the probability of obtaining a descent direction, thereby affecting the probability of increasing the batch size as well as the magnitude of the next batch size.

Condition \eqref{eqn:exact_norm} is more relaxed than the typical assumption of uniformly bounded gradient variance for minibatch stochastic gradient methods: for any \iid~batch $\calB\subset\setn$ and some positive constant $\sigma>0$, $\Ex_{\calB}\left[\|\nabla F_{\calB}(x) - \nabla F(x)\|^2\right] \le \sigma^2$ for all $x\in\RR^d$, which is usually unverifiable in practice. The adaptive batch size schedule using the approximate norm test may cause rapid batch size increases, undermining its purpose. To moderate these schedules, \citet{bollapragada2018adaptive} introduced the (augmented) inner product test, which manages the variance between the batch and full gradient's inner product. Given the preference for large batch sizes for memory efficiency, controlled via $\\eta$, we focus on the norm test and defer exploring local variants of the (augmented) inner product test to future work.

\subsection{Extensions to Local Gradient Methods}
We extend the above adaptive sampling methods beyond the single worker setting with data parallelism using multiple workers, more specifically for Local \SGD.     
We have to invoke a local variant of the norm test \eqref{eqn:exact_norm}, called the \emph{(exact variance) local norm test}, at every iteration $k\in\NN$, using formulation \eqref{eqn:local_sgd}: for some sequence of positive constants $(\eta_m)_{m\in\setM}\subset(0,1)$, for all $m\in\setM$, 
\begin{equation}\label{eqn:exact_norm_local}
    \Ex_{\calF_k^m}\left[\| \nabla F_{\calB_k^m}(x_k^m) - \nabla F_m(x_k^m) \|^2\right] \le \eta_m^2\norm{\nabla F_m(x_k^m)}^2,  
\end{equation}    
where $\Ex_{\calF_k^m}[\cdot]\coloneqq\Ex[\cdot\mid\calF_k^m]$ denotes the conditional expectation on $\calF_k^m\coloneqq\sigma(\{x_0, \calB_0^m, \ldots, \calB_{k-1}^m\})$. 
Again, $\nabla F_m$ is in general unavailable since the sample size $n_m$ could be large. The above local norm test is approximately enforced by the following (approximate) \emph{local norm test}: for all $m\in\setM$, 
\begin{equation}\label{eqn:approx_norm_local}
    \frac{1}{b_k^m}\Var_{i\in\calB_k^m}\left(\nabla f_m(x_k^m; \xi_i^m) \right) \le \eta_m^2\norm{\nabla F_{\calB_k^m}(x_k^m)}^2. 
\end{equation}
Should the condition \eqref{eqn:approx_norm_local} not be met, the subsequent local batch sizes are determined by the formula: 
\begin{equation}\label{eqn:test_stat}
    (\forall m\in\setM)\quad b_{k+1}^m =\left \lceil \frac{\Var_{i\in\calB_k^m}(\nabla f_m(x_k^m; \xi_i^m))}{\eta_m^2  \|\nabla F_{\calB_k^m}(x_k^m)\|^2} \right\rceil, 
\end{equation}
and the local batches $\calB_{k+1}^m$ are augmented accordingly.     
In a practical implementation, for simplicity, we can set $\eta_m \equiv \eta$ for all $m\in\setM$ and choose the next local batch sizes to be, for all $m\in\setM$,  
\begin{equation*}
    b_{k+1}^m \coloneqq \max_{m\in\setM} \ \left \lceil \frac{\Var_{i\in\calB_k^m}(\nabla f_m(x_k^m; \xi_i^m))}{\eta^2  \|\nabla F_{\calB_k^m}(x_k^m)\|^2} \right\rceil, 
\end{equation*}
which is particularly useful when homogeneous workers and identical data on the workers are used, so that there would not be discrepancy of training time for the local steps due to different local batch sizes and undesirable stragglers before model averaging. 

The proposed adaptive batch size strategies extend to local variants of minibatch stochastic gradient optimizers beyond \SGD \citep{robbins1951}, including momentum \SGD \citep{sutskever2013importance}, \AdaGrad \citep{duchi2011adagrad}, \Adam \citep{kingma2015}, and \AdamW \citep{loshchilov2019decoupled}. 

In a data-parallel setting, adaptive global batch size strategies can be developed using minibatch \SGD, leading to increased communication overheads from aggregating local batch gradient variances among workers to compute the quantities in \eqref{eqn:approx_norm}.

\subsection{Practical Considerations}
The (approximate) norm test \eqref{eqn:approx_norm} and its local variant \eqref{eqn:approx_norm_local} extend the norm test \citep{byrd2012sample} for Local \SGD, but implementing them in deep learning libraries like PyTorch \citep{paszke2019pytorch} is complex, as only batch gradients $\nabla F_{\calB_k^m}(x_k^m)$, not per-sample gradients $\nabla f_m(x_k^m; \xi_i^m)$, are available in the backward training step. Therefore, we cannot compute the variance of per-sample gradients without additional subroutines (e.g., using \texttt{torch.func}). These methods do not scale well to large models, as they require a full model replica and per-sample gradients matching the model's parameter size. Increasing batch sizes during training elevates memory demands, necessitating many gradient accumulation steps (a serial computation) to simulate large batches. We, therefore, suggest an alternative to approximate the variance term in the (local) norm test.

We can use local batch gradients as estimators instead of per-sample gradients. To illustrate, consider minibatch \SGD with data parallelism among multiple workers where $ \nabla F_{\calB_k}(x_k) = \frac1M\sumM\nabla F_{\calB_k^m}(x_k)$ and $x_k^m = x_k$ apply for $m\in\setM$. Note that 
$\Var_{m\in\setM}\left( \nabla F_{\calB_k^m}(x_k)\right) \coloneqq \frac1{M-1}\sumM \norm{\nabla F_{\calB_k^m}(x_k) - \nabla F_{\calB_k}(x_k) }^2$. 
Then, by the definition of $\nabla F_{\calB_k^m}(x_k)$, we can write 
\begin{equation*}
    \Var_{m\in\setM}\left( \nabla F_{\calB_k^m}(x_k)\right) 
    = \sum_{i\in\calB_k^m} \frac{\Var_{i\in\calB_k^m}\left(\nabla f(x_k; \xi_i^m)\right)}{(b_k^m)^2} 
    = \frac{1}{b_k^m}\Var_{i\in\calB_k}\left(\nabla f(x_k; \xi_i)\right) = \frac{M}{b_k}\Var_{i\in\calB_k}\left(\nabla f(x_k; \xi_i)\right), 
\end{equation*}
since $b_k^m = b_k/M$. Therefore, for the norm test \eqref{eqn:norm_test}, we can instead compute
$\Var_{i\in\calB_k}\left(\nabla f(x_k; \xi_i)\right) 
= b_k/M\cdot \frac{1}{M-1}\sumM \norm{\nabla F_{\calB_k^m}(x_k) - \nabla F_{\calB_k}(x_k) }^2$. 

We extend this workaround to Local \SGD, focusing on the \iid~setting where $\Probhat_m \equiv \Probhat$ and $f_m\equiv f$, thus resulting in $F_m=F$ for all $m\in\setM$. Varying $x_k^m$'s across workers $m\in\setM$ due to different $\calB_k^m$'s prevent local gradients from being evaluated at the same iterates as in minibatch \SGD during updates. Thus, we instead compute 
\begin{equation*}
    \Var_{i\in\calB_k}\left(\nabla f(x_k^m; \xi_i^m)\right) 
    = \frac{b_k}{M}\cdot \frac1{M-1}\sumM \norm{\nabla F_{\calB_k^m}(x_k^m) - \gbark }^2, 
\end{equation*}
where $\gbark \coloneqq\frac1M\sumM \nabla F_{\calB_k^m}(x_k^m)$. 

The actual \emph{(exact variance) norm test for local gradient methods} is given by, for all $m\in\setM$, 
\begin{equation}\label{eqn:exact_norm_actual}
    \Ex_{\calF_k^m}\left[\| \nabla F_{\calB_k^m}(x_k^m) - \nabla F(x_k^m) \|^2\right] \le \eta_m^2\norm{\nabla F(x_k^m)}^2,  
\end{equation}    
In our implementation, the above actual \emph{(exact variance) norm test for local gradient methods} is approximately enforced by the following (approximate) \emph{norm test} for local gradient methods: 
\begin{equation}\label{eqn:approx_norm_local_actual}
    \frac{1}{b_k}\Var_{i\in\calB_k}\left(\nabla f(x_k^m; \xi_i^m) \right) \le \eta^2\norm{\gbar_k}^2. 
\end{equation}
Should the condition \eqref{eqn:approx_norm_local_actual} not be met, the subsequent local batch sizes are determined by the formula: 
\begin{equation}\label{eqn:test_stat_actual}
    (\forall m\in\setM)\; b_{k+1}^m =\left \lceil \frac{\Var_{i\in\calB_k}\left(\nabla f(x_k^m; \xi_i^m) \right) }{M\eta^2 \norm{\gbar_k}^2} \right\rceil, 
\end{equation}
and the local batches $\calB_{k+1}^m$ are augmented accordingly. Note that the computation of $\gbar_k$ requires additional all-reduce communication of $g_k^m$ among the $M$ workers. As a result, to avoid unnecessary additional communication rounds due to the test, we only perform the test every $H$ local gradients steps, i.e., performing model parameter and gradient synchronization at the same time and frequency. The algorithm pseudocode is detailed in \Cref{sec:algorithm}.

\section{Convergence Analysis}
\label{sec:conv}

Our analysis focuses on the \iid~setting where $\Probhat_m \equiv \Probhat$ and $f_m\equiv f$, and hence $F_m=F$ for all $m\in\setM$. We set $\eta_m \equiv \eta \in (0,1)$ for all $m \in \setM$, typical in datacenter environments for distributed training \citep{dean2012large}, where workers access the entire data set. The analysis of heterogeneous data settings is deferred to future work. Our convergence analysis, following \citet{stich2020error}, does not assume bounded gradient variance across $\RR^d$. Instead, our adaptive local batch size strategies ensure the \emph{expected strong growth} condition \citep{khaled2023better,bottou2018optimization} at each local iterate $x_k^m$ for all workers $m \in \setM$. This approach relaxes the standard assumptions of uniform boundedness and expected strong growth conditions \citep{vaswani2019fast}, which are challenging to verify except in the interpolation regime \citep{ma2018power}.


We might need to invoke the following assumptions on the objective function $F$. 
\begin{assumption}[$L$-Lipschitz smoothness]\label{ass:smooth}
    The function $F\colon\RR^d\to\RR$ is continuously differentiable, bounded below by $F^\star\coloneqq\inf_{x\in\RR^d} F(x)=F(x^\star)\in\RR$ with $x^\star=\argmin_{x\in\RR^d} F(x)$, and is $L$-Lipschitz smooth for some $L>0$, i.e., for any $(x,y)\in\RR^d\times\RR^d$, $\norm{\nabla F(x) - \nabla F(y)} \le L\norm{x - y}$.         
\end{assumption}    

\begin{assumption}[$\mu$-strong convexity]\label{ass:strong_cvx}
    The function $F$ is $\mu$-strongly convex for some $\mu\ge0$, i.e., for any $(x,y)\in\RR^d\times\RR^d$, $F(x) - F(y) + \frac\mu2\norm{x-y}^2 \le \dotp{\nabla F(x)}{x-y}$. If $\mu=0$, we say that $F$ is convex.         
\end{assumption}


We derive the following convergence results for the proposed adaptive batch size strategies as follows. Their proofs are deferred to \Cref{sec:proofs}. 

\begin{theorem}[Strongly convex; $\mu>0$]\label{thm:conv_strong_cvx}
    Suppose that \Cref{ass:smooth,ass:strong_cvx} hold with $\mu>0$. 
    Let $(x_k^m)_{k\in\NN, m\in\setM}$ be the sequence of the iterates of Local \SGD \eqref{eqn:local_sgd} with the (exact variance) local norm test \eqref{eqn:exact_norm_local} with $\eta_m\equiv\eta\in(0,1)$ and a constant learning rate $\alpha_k\equiv\alpha$. Then, for some $\alpha\le 1/(10L(HM+\eta^2))$, we have
    \begin{equation}
        \Ex[F(x_\out)] - F^\star  
        = \tilde\scrO\left(L(HM+\eta^2)\e^{-\frac{\mu K}{10L(HM+\eta^2)}} \norm{x_0-x^\star}^2\right), 
    \end{equation}
    where $x_\out\in\{x_k^m\}_{k\in\set{0,K-1}, m\in\setM}$ is chosen as $x_k^m$ with probability proportional to $(1-\mu\alpha/2)^{-k}$ uniformly on $m\in\setM$ and $\tilde{\scrO}$ hides logarithmic factors. 
\end{theorem}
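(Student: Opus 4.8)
The plan is to run the standard perturbed-iterate / virtual-sequence analysis for Local \SGD (following \citet{stich2020error}), but with the key observation that the adaptive batch size machinery replaces the usual bounded-variance assumption by an \emph{expected strong growth condition} at every local iterate. First I would unwind the local norm test \eqref{eqn:exact_norm_local}: with $\eta_m\equiv\eta$, it says $\Ex_{\calF_k^m}\big[\|\nabla F_{\calB_k^m}(x_k^m)-\nabla F(x_k^m)\|^2\big]\le\eta^2\|\nabla F(x_k^m)\|^2$, which by expanding the square and using that $\nabla F_{\calB_k^m}(x_k^m)$ is conditionally unbiased gives $\Ex_{\calF_k^m}\big[\|\nabla F_{\calB_k^m}(x_k^m)\|^2\big]\le(1+\eta^2)\|\nabla F(x_k^m)\|^2$. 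So each local stochastic gradient satisfies a strong-growth bound with constant $1+\eta^2$, and this is exactly the quantity that will appear alongside $HM$ in the step-size restriction.

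Next I would introduce the averaged virtual iterate $\bar x_k\coloneqq\frac1M\sum_{m}x_k^m$ (well-defined for all $k$, and equal to the true iterate whenever $H\mid k$), and track two coupled quantities: the optimality-gap-type term $\Ex\|\bar x_k-x^\star\|^2$ (or a Lyapunov function incorporating $F(\bar x_k)-F^\star$), and the \emph{consensus error} $\frac1M\sum_m\Ex\|x_k^m-\bar x_k\|^2$ measuring drift between workers within a communication round. For the first, one step of the averaged recursion reads $\bar x_{k+1}=\bar x_k-\frac{\alpha}{M}\sum_m\nabla F_{\calB_k^m}(x_k^m)$; squaring, taking conditional expectation, using $L$-smoothness and $\mu$-strong convexity to produce a $-c\alpha(F(\bar x_k)-F^\star)$ term plus a contraction factor $(1-\mu\alpha/2)$, and bounding the cross terms and the second-moment term via the strong-growth bound above — crucially, one must pass from $\|\nabla F(x_k^m)\|^2$ evaluated at the drifted local iterate back to $\|\nabla F(\bar x_k)\|^2$ at the cost of an $L^2\|x_k^m-\bar x_k\|^2$ additive term. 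For the consensus error, within a round of $H$ steps the drift accumulates as a telescoping sum of $\alpha^2\|\nabla F_{\calB_j^m}(x_j^m)\|^2$ terms; bounding these again by strong growth yields a bound of order $\alpha^2 H^2\cdot(\text{gradient norms})$, which is where the $H$ (and, after averaging second moments of a sum of $M$ roughly-independent worker gradients, the $M$) enters the effective constant $HM+\eta^2$. Choosing $\alpha\le 1/(10L(HM+\eta^2))$ makes all the error terms absorbable into the negative progress term, giving a one-step inequality of the form $\Ex[\Phi_{k+1}]\le(1-\mu\alpha/2)\Ex[\Phi_k]$ for a suitable Lyapunov function $\Phi_k$ (after unrolling the consensus error over a round). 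Unrolling over $K$ rounds gives the $\e^{-\mu K/(10L(HM+\eta^2))}$ geometric decay; the randomized output $x_\out$ with probability $\propto(1-\mu\alpha/2)^{-k}$ is the usual device to convert the Lyapunov contraction into a bound on $\Ex[F(x_\out)]-F^\star$, and the $\tilde\scrO$ absorbs logarithmic factors from, e.g., replacing $\|\bar x_k-x^\star\|^2$-type terms by function values or from a minor step-size/round bookkeeping.

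The main obstacle I anticipate is the coupling between the consensus (drift) error and the progress term when the variance proxy is itself only controlled \emph{at the local iterates} $x_k^m$ rather than uniformly: the strong-growth bound gives $\Ex\|\nabla F_{\calB_k^m}(x_k^m)\|^2\le(1+\eta^2)\|\nabla F(x_k^m)\|^2$, but to close the recursion I need this in terms of $\|\nabla F(\bar x_k)\|^2$, and $\|\nabla F(x_k^m)\|^2\le 2\|\nabla F(\bar x_k)\|^2+2L^2\|x_k^m-\bar x_k\|^2$ reintroduces the drift. So one has to be careful that the drift bound (which is $O(\alpha^2 H^2 \times \text{gradient norms})$, hence itself multiplied by $1+\eta^2$) does not swallow the negative progress; this is precisely what forces the factor $HM+\eta^2$ (rather than just $H$) into the step-size condition, and getting the constants to line up so that $\alpha\le 1/(10L(HM+\eta^2))$ suffices is the delicate part. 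A secondary technical point is handling the averaging step at synchronization times: since averaging is a (non-expansive) projection in the $\|\cdot\|^2$ sense it only helps, $\frac1M\sum_m\|x_{k+1}^m-\bar x_{k+1}\|^2=0$ right after a sync, so the consensus error resets each round — this should be invoked cleanly rather than re-derived each time.
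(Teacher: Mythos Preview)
Your proposal follows the same virtual-iterate framework as the paper (both after \citet{stich2020error}), and your reading of the local norm test as a strong-growth condition $\Ex_{\calF_k^m}\|g_k^m\|^2\le(1+\eta^2)\|\nabla F(x_k^m)\|^2$ is exactly right. There are, however, two places where the paper's route is cleaner and sidesteps the obstacle you anticipate.

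First, you plan to produce a progress term $-c\alpha(F(\bar x_k)-F^\star)$ at the \emph{averaged} iterate and then convert $\|\nabla F(x_k^m)\|^2$ back to $\|\nabla F(\bar x_k)\|^2$ at the cost of an extra $L^2\|x_k^m-\bar x_k\|^2$; this is precisely the circular coupling you flag as the main obstacle. The paper avoids it entirely by keeping the progress term at the \emph{local} iterates: strong convexity is applied at each $x_k^m$ (not at $\bar x_k$) to yield $-\frac{\alpha}{2M}\sum_m(F(x_k^m)-F^\star)$ in the one-step inequality, and the consensus bound is likewise expressed, via $\|\nabla F\|^2\le 2L(F-F^\star)$, in terms of $F(x_{k-h}^m)-F^\star$. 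Since $x_\out$ is drawn from the local iterates $\{x_k^m\}$ anyway, no passage to $\bar x_k$ is ever needed and the absorption is immediate. Your route would still close, but the detour is self-inflicted.

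Second, you aim for a one-step contraction $\Ex[\Phi_{k+1}]\le(1-\mu\alpha/2)\Ex[\Phi_k]$. The paper does not obtain this, because the consensus error at step $k$ involves the previous $H$ iterates and resists packaging into a single-step Lyapunov function. Instead the paper multiplies the one-step inequality by weights $w_k=(1-\mu\alpha/2)^{-(k+1)}$, sums over $k\le K$, and only \emph{after summing} absorbs the accumulated consensus contribution into the negative progress term; this works because those weights are $2H$-slow increasing, so the index shift $k\mapsto k-h$ in the consensus bound costs only a constant factor. This weighted-sum device (a lemma from \citet{stich2019unified}) is also what simultaneously delivers the randomized-output convention and the $\tilde\scrO$ rate, rather than these being separate afterthoughts.
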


\begin{theorem}[Convex; $\mu=0$]\label{thm:conv_cvx}
    Suppose that \Cref{ass:smooth,ass:strong_cvx} hold with $\mu=0$. 
    Let $(x_k^m)_{k\in\NN, m\in\setM}$ be the sequence of the iterates of Local \SGD \eqref{eqn:local_sgd} with the (exact variance) local norm test \eqref{eqn:exact_norm_local} with $\eta_m\equiv\eta\in(0,1)$ and a constant learning rate $\alpha_k\equiv\alpha$. Then, for some $\alpha\le 1/(10L(HM+\eta^2))$, we have
    \begin{equation}
        \Ex[F(x_\out)] - F^\star = \scrO\left( \frac{L(HM+\eta^2)}{K}\norm{x_0-x^\star}^2\right), 
    \end{equation}
    where $x_\out$ is chosen uniformly at random from $\{x_k^m\}_{k\in\set{0,K-1}, m\in\setM}$.
\end{theorem}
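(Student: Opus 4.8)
The plan is a perturbed-iterate (virtual-average) analysis in the style of \citet{stich2020error}, with the adaptive norm test supplying an \emph{expected strong growth} bound in place of a uniform variance bound. Introduce the virtual average $\bar x_k \coloneqq \frac1M\sum_{m\in\setM} x_k^m$; from \eqref{eqn:local_sgd} one checks that $\bar x_{k+1} = \bar x_k - \alpha\gbar_k$ with $\gbar_k = \frac1M\sum_{m}\nabla F_{\calB_k^m}(x_k^m)$ whether or not $k$ is a synchronization step, and that $x_k^m = \bar x_k$ for all $m$ at synchronization steps. The local norm test \eqref{eqn:exact_norm_local} (with $F_m\equiv F$) gives $\Ex_{\calF_k^m}\norm{\nabla F_{\calB_k^m}(x_k^m)}^2 \le (1+\eta^2)\norm{\nabla F(x_k^m)}^2$, and $L$-smoothness together with $F$ bounded below yields $\norm{\nabla F(x_k^m)}^2 \le 2L(F(x_k^m) - \Fstar)$; this is the only place the sampling mechanism enters, and it replaces the usual bounded-variance assumption.

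First I would prove a one-step inequality for $\norm{\bar x_{k+1} - \xstar}^2$. Expanding and conditioning, unbiasedness of the minibatch gradients yields the cross term $-\frac{2\alpha}{M}\sum_m\dotp{\nabla F(x_k^m)}{\bar x_k - \xstar}$; writing $\bar x_k - \xstar = (x_k^m - \xstar) + (\bar x_k - x_k^m)$, convexity bounds the first piece by $-\frac{2\alpha}{M}\sum_m(F(x_k^m) - \Fstar) \le -2\alpha(F(\bar x_k) - \Fstar)$ (Jensen), while Young's inequality (parameter $\propto L$) handles the second at the cost of an $O(\alpha)\,\frac1M\sum_m\norm{\nabla F(x_k^m)}^2$ term and a consensus term $\propto\alpha L\,\Phi_k$, where $\Phi_k \coloneqq \frac1M\sum_m\Ex\norm{x_k^m - \bar x_k}^2$. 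The quadratic term $\alpha^2\Ex_k\norm{\gbar_k}^2$ splits into $\norm{\frac1M\sum_m\nabla F(x_k^m)}^2$ plus the minibatch variance, both bounded by a constant times $\frac1M\sum_m\norm{\nabla F(x_k^m)}^2$. Converting every $\norm{\nabla F(x_k^m)}^2$ through $2L(F(x_k^m) - \Fstar)$ and then using $L$-smoothness in the form $\frac1M\sum_m(F(x_k^m) - \Fstar) \le (F(\bar x_k) - \Fstar) + \frac L2\Phi_k$, and choosing the Young parameter so that the induced $(F(\bar x_k) - \Fstar)$-coefficient is a fixed fraction of $2\alpha$ while $\alpha L$ small kills the residual $O(\alpha^2 L)$ piece, this collapses to $\Ex\norm{\bar x_{k+1} - \xstar}^2 \le \Ex\norm{\bar x_k - \xstar}^2 - c\alpha\,\Ex[F(\bar x_k) - \Fstar] + C\,\Phi_k$ with $c>0$ a constant and $C = O(\alpha L)$.

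Next I would bound $\Phi_k$. Since $\Phi = 0$ right after each synchronization and a round has length $\le H$, one has $x_k^m - \bar x_k = -\alpha\sum_j(\nabla F_{\calB_j^m}(x_j^m) - \gbar_j)$ over those steps; Cauchy--Schwarz over the $\le H$ summands, then the norm-test bound and $\norm{\nabla F(\cdot)}^2 \le 2L(F(\cdot) - \Fstar)$, give $\Phi_k \lesssim \alpha^2 L H \sum_{j\text{ in round}}\bigl(\Ex[F(\bar x_j) - \Fstar] + \tfrac L2\Phi_j\bigr)$. This is self-referential; when $\alpha L H$ is small enough the $\Phi$-terms move to the left, yielding $\Phi_k \lesssim \alpha^2 L H\sum_{j\text{ in round}}\Ex[F(\bar x_j) - \Fstar]$. \textbf{Making the numerical constants in this recursion-closing step line up so that exactly $\alpha \le 1/(10L(HM+\eta^2))$ suffices is the main obstacle}; in particular the $M$-dependence of the admissible step size enters through this bookkeeping and the quadratic/variance terms (the analysis does not track the $M$-dependence tightly, and no linear speedup is claimed), and this is where I expect to spend most of the effort on constants.

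Finally, substitute the $\Phi_k$ bound into the one-step inequality and sum over $k\in\set{0,K-1}$, telescoping $\Ex\norm{\bar x_k - \xstar}^2$. Since $C\sum_k\Phi_k = O(\alpha^3 L^2 H^2)\sum_k\Ex[F(\bar x_k) - \Fstar]$ is, for $\alpha = \Theta(1/(L(HM+\eta^2)))$, lower order, it is absorbed by $c\alpha\sum_k\Ex[F(\bar x_k) - \Fstar]$, leaving $\frac{c\alpha}{2}\sum_{k=0}^{K-1}\Ex[F(\bar x_k) - \Fstar] \le \norm{x_0 - \xstar}^2$. Dividing by $\alpha K$ and taking $\alpha = 1/(10L(HM+\eta^2))$ gives $\frac1K\sum_k\Ex[F(\bar x_k) - \Fstar] = \scrO\bigl(L(HM+\eta^2)\norm{x_0 - \xstar}^2/K\bigr)$. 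Because $x_\out$ is drawn uniformly from $\{x_k^m\}_{k\in\set{0,K-1},m\in\setM}$, one more use of $L$-smoothness, $\Ex[F(x_\out) - \Fstar] \le \frac1K\sum_k\Ex[F(\bar x_k) - \Fstar] + \frac{L}{2K}\sum_k\Phi_k$, together with the (lower-order) bound on $\sum_k\Phi_k$, delivers the stated rate. The strongly convex case of \Cref{thm:conv_strong_cvx} follows the same template, with the extra $-\mu\alpha\norm{\bar x_k - \xstar}^2$ term producing the $(1 - \mu\alpha/2)$ contraction and the weighted choice of $x_\out$ being what telescopes the resulting weighted sum.
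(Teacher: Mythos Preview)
Your proposal is correct and follows the same perturbed-iterate template as the paper (virtual average $\bar x_k$, one-step descent for $\norm{\bar x_k-\xstar}^2$, consensus bound, telescoping), with two bookkeeping choices that differ slightly from the paper's proof.

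First, after obtaining $-\frac{2\alpha}{M}\sum_m(F(x_k^m)-\Fstar)$ from convexity, you immediately pass to $-2\alpha(F(\bar x_k)-\Fstar)$ via Jensen, whereas the paper's Lemma~\ref{lemma:descent_strong_cvx} keeps the average over workers, yielding a descent inequality with $-\frac{\alpha_k}{2M}\sum_m\Ex[F(x_k^m)-\Fstar]$ on the right. Since $x_\out$ is drawn from $\{x_k^m\}$, this means the paper's telescoped sum \emph{is} $\Ex[F(x_\out)-\Fstar]$ up to the weights, so no back-conversion via smoothness and no extra $\sum_k\Phi_k$ term is needed at the end. Second, and relatedly, your consensus bound becomes self-referential in $\Phi_j$ because you route $\norm{\nabla F(x_j^m)}^2\le 2L(F(x_j^m)-\Fstar)\le 2L(F(\bar x_j)-\Fstar)+L^2\Phi_j$. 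The paper's Lemma~\ref{lemma:bound_norm_sq} avoids this: it bounds $\frac1M\sum_m\norm{x_k^m-\bar x_k}^2$ directly by $\alpha^2(H+\eta^2)\sum_h\norm{\nabla F(x_{k-h}^m)}^2$ (using the variance inequality $\Ex\norm{X-\Ex X}^2\le\Ex\norm{X}^2$ to drop $\bar x_k$ in favor of the last synchronization point), then converts via \eqref{eqn:smooth_alt} to $F(x_{k-h}^m)-\Fstar$, which is exactly the quantity already appearing in the descent lemma. After multiplying by weights and summing, the consensus contribution is absorbed by the negative descent term with coefficient $2/5$ versus $1/2$, giving \eqref{eqn:bound_gap}; setting $\mu=0$ and $w_k\equiv1$ finishes via Lemma~\ref{lemma:sublinear}. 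Your recursion-closing step works too, but it is where your constants get stressed; staying at the $x_k^m$ level throughout, as the paper does, makes the $\alpha\le 1/(10L(HM+\eta^2))$ threshold fall out with less effort.
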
    

\begin{theorem}[Nonconvex]\label{thm:conv_noncvx}
    Suppose that \Cref{ass:smooth} holds. 
    Let $(x_k^m)_{k\in\NN, m\in\setM}$ be the sequence of the iterates of Local \SGD \eqref{eqn:local_sgd} with the (exact variance) local norm test \eqref{eqn:exact_norm_local} with $\eta_m\equiv\eta\in(0,1)$ and a constant learning rate $\alpha_k\equiv\alpha$. Then, for some $\alpha\le 1/(10L(HM+\eta^2))$, we have
    \begin{equation}
        \Ex\norm{\nabla F(x_\out)}^2 = \scrO\left(\frac{L(HM+\eta^2)}{K}(F(x_0) - F^\star) \right), 
    \end{equation}
    where $x_\out$ is chosen uniformly at random from $\{x_k^m\}_{k\in\set{0,K-1}, m\in\setM}$.
\end{theorem}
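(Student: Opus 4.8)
The plan is to follow the error-feedback / perturbed-iterate analysis of \citet{stich2020error} for Local \SGD, but with the bounded-variance assumption replaced by the expected strong growth condition that the local norm test \eqref{eqn:exact_norm_local} guarantees. First I would record the key consequence of \eqref{eqn:exact_norm_local}: since in the \iid{} setting $F_m = F$, taking conditional expectation gives, for every worker $m$ and iterate $x_k^m$,
\begin{equation*}
    \Ex_{\calF_k^m}\norm{\nabla F_{\calB_k^m}(x_k^m)}^2 \le (1+\eta^2)\norm{\nabla F(x_k^m)}^2,
\end{equation*}
i.e. an expected strong growth condition with parameter $1+\eta^2$ at \emph{every} local iterate simultaneously. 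This is the single structural input that replaces $\sigma^2$ in the standard proof.

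Next I would set up the standard Local \SGD quantities: the virtual average $\bar x_k \coloneqq \frac1M\sumM x_k^m$, which evolves as $\bar x_{k+1} = \bar x_k - \alpha\,\gbar_k$ with $\gbar_k = \frac1M\sumM \nabla F_{\calB_k^m}(x_k^m)$ (this identity holds on non-synchronization steps trivially and on synchronization steps because averaging commutes with the gradient step). Apply $L$-smoothness descent to $F(\bar x_{k+1})$, take conditional expectation, and bound the cross term using $\nabla F(\bar x_k) = \frac1M\sumM \nabla F(\bar x_k)$ against $\Ex\gbar_k = \frac1M\sumM\nabla F(x_k^m)$; the discrepancy is controlled by the \emph{consensus error} $\Xi_k \coloneqq \frac1M\sumM \norm{x_k^m - \bar x_k}^2$ via $L$-Lipschitzness of $\nabla F$. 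The second-order term $\Ex\norm{\gbar_k}^2 \le \frac1M\sumM\Ex\norm{\nabla F_{\calB_k^m}(x_k^m)}^2 \le \frac{1+\eta^2}{M}\sumM\norm{\nabla F(x_k^m)}^2$ is where the strong growth bound enters, and this in turn is $\le \frac{2(1+\eta^2)}{M}\sumM\norm{\nabla F(\bar x_k)}^2 + 2L^2(1+\eta^2)\Xi_k/M$ after adding and subtracting $\nabla F(\bar x_k)$.

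The technical heart is the consensus-error bound: over one synchronization window of length $H$, $x_k^m$ drifts from $\bar x_k$ by at most a sum of $H$ gradient steps, so $\Xi_k \le \alpha^2 H \sum (\text{squared batch gradients over the window})$, and I would bound the batch gradients again by strong growth, $\le O(\alpha^2 H^2 (1+\eta^2))\cdot(\text{average }\norm{\nabla F}^2\text{ over the window})$. Summing over $k$ and using $\alpha \le 1/(10L(HM+\eta^2))$ — so that $\alpha^2 H^2 L^2(1+\eta^2)$ is small enough to absorb the $\Xi_k$ terms back into the descent — yields a telescoping inequality of the form $\frac{c\alpha}{K}\sum_{k,m}\Ex\norm{\nabla F(\bar x_k)}^2 \le F(\bar x_0) - F^\star + (\text{absorbed terms})$, and then converting $\norm{\nabla F(\bar x_k)}^2$ to $\norm{\nabla F(x_k^m)}^2$ (again via $L$-smoothness and the consensus bound, which the step-size choice keeps under control) gives the claimed $O(L(HM+\eta^2)(F(x_0)-F^\star)/K)$ after dividing by $\alpha \gtrsim 1/(L(HM+\eta^2))$ and using that $x_\out$ is the uniform-random iterate.

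The main obstacle I anticipate is the circularity in the consensus-error step: bounding $\Xi_k$ requires bounding $\norm{\nabla F(x_k^m)}^2$, which via strong growth is what we are ultimately trying to sum, so one must carefully choose the step-size threshold $1/(10L(HM+\eta^2))$ — note the $HM$ rather than $H$, reflecting that the effective variance proxy scales with $M$ workers each contributing growth factor $1+\eta^2$ — so that the coefficients in the combined recursion stay strictly positive after the absorption, and then run a single clean induction (or a summation with nonnegative leftover terms) rather than nested estimates. Getting the constants to line up so that both the $\Xi_k$ terms and the second-order $\norm{\gbar_k}^2$ terms are dominated is the only delicate bookkeeping; everything else is the routine smoothness/telescoping machinery.
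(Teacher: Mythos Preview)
Your proposal is correct and follows essentially the same perturbed-iterate strategy as the paper (\`a la \citet{stich2020error}): descent on the virtual average $\bar x_k$, a consensus-error bound driven by the strong-growth condition implied by \eqref{eqn:exact_norm_local}, and absorption via the step-size restriction $\alpha\le 1/(10L(HM+\eta^2))$. The one bookkeeping difference is that the paper's descent lemma (Lemma~\ref{lemma:descent_noncvx}) works directly with $\sum_m\norm{\nabla F(x_k^m)}^2$ throughout---it writes $\langle\nabla F(\bar x_k),\nabla F(x_k^m)\rangle = \norm{\nabla F(x_k^m)}^2 - \langle\nabla F(x_k^m)-\nabla F(\bar x_k),\nabla F(x_k^m)\rangle$ and bounds the latter by Cauchy--Schwarz and smoothness---so it never passes through $\norm{\nabla F(\bar x_k)}^2$ and avoids your final back-conversion step; this is cleaner but not materially different.
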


The theorems confirm that Local \SGD, using the local norm test with exact variance, achieves linear convergence rates for both convex and nonconvex objectives. This scaling is proportional to the number of local gradient steps $H$ and workers $M$, akin to Local \SGD under constant batch sizes and bounded gradient variance \citep{stich2020error}.

\begin{remark}[Choice of $\eta$]
    Convergence rates depend quadratically on $\eta$, which controls the likelihood and size of increased local batch sizes. Choosing smaller $\eta$ values accelerates convergence but causes quick batch size escalation, defeating the original purpose of adaptive batch sizing and widening the generalization gap.
\end{remark}

\section{Numerical Experiments}
\label{sec:expt}
We evaluate the efficiency of the training (total steps or wall-clock time) and the generalization performance of our strategies through two standard deep learning applications in two domains---image classification and language modeling tasks. We use a \ResNet-50 \citep{he2016deep} on the CIFAR-10 dataset \citep{krizhevsky2009learning} and a \ResNet-101 on the ImageNet dataset \citep[see \Cref{subsec:resnet-101_imagenet_supp}]{russakovsky2015imagenet} for image classification, and a MicroLlama 300M \citep{microllama2024} on the C4 dataset \citep{raffel2020exploring} for language modeling, with four workers ($M=4$ GPUs). The models are trained using Local \AdamW and Local \SGD with momentum (a.k.a.~stochastic heavy ball; \SHB), synchronizing only the models every $H$ local steps. Further experimental details and results are given in \Cref{sec:expt_details}. When the chosen batch sizes cannot fit in the GPU memory, we use the technique of gradient accumulation.

\subsection{\ResNet-50 on CIFAR-10}
\label{sec:resnet50_on_cifar10}
We fit \ResNet-50 for CIFAR-10 using $M=4$ workers, training over 30 million samples (600 epochs) with global and local batch sizes of 50,000 and 12,500. Momentum \SGD \citep[a.k.a.~stochastic heavy ball]{sutskever2013importance} serves as the inner optimizer. Learning rates are managed with a 10\% linear warmup and cosine decay, peaking at 0.05 and bottoming out at 0.005. Additionally, we apply linear scaling of learning rates \citep{krizhevsky2014one,goyal2017accurate} relative to batch sizes for constant batch size baselines.

\begin{figure*}[t]    
    \centering
    \begin{subfigure}[h]{0.49\textwidth}
        \centering
        \includegraphics[width=\textwidth]{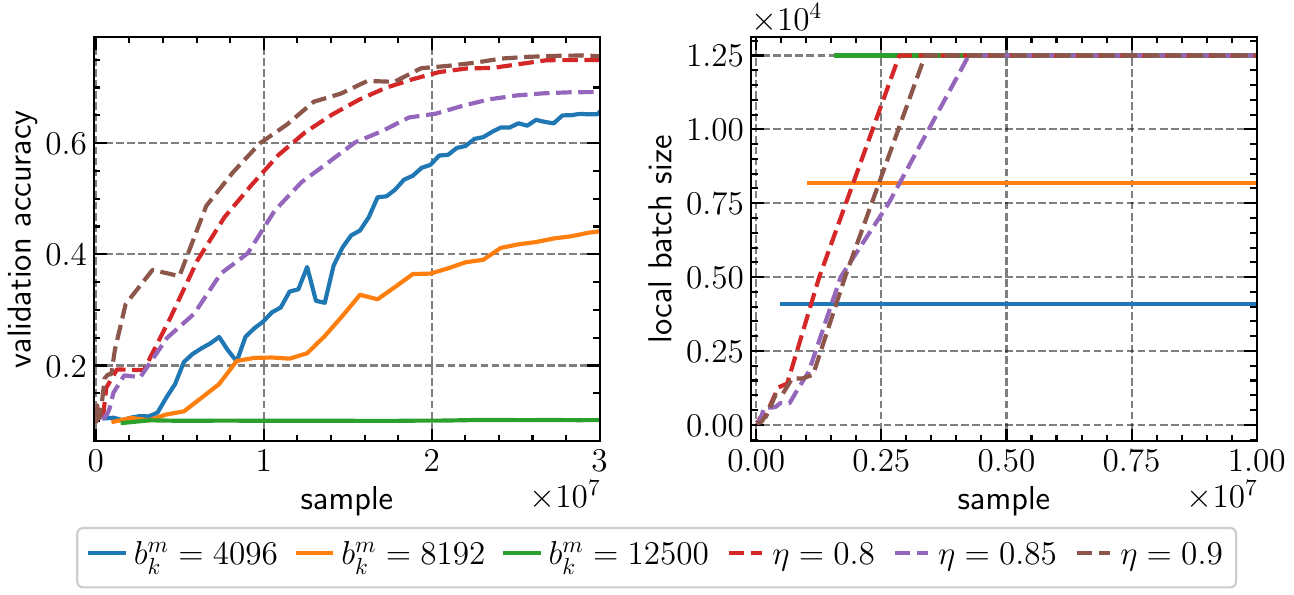}
        \caption{$H=32$ with $b_k^m\in\{4096, 8192, 12500\}$ and $\eta\in\{0.8,0.85,0.9\}$}
        \label{fig:resnet-50_cifar10_const_H32}
    \end{subfigure}%
    \hfill
    \begin{subfigure}[h]{0.49\textwidth}
        \centering
        \includegraphics[width=\textwidth]{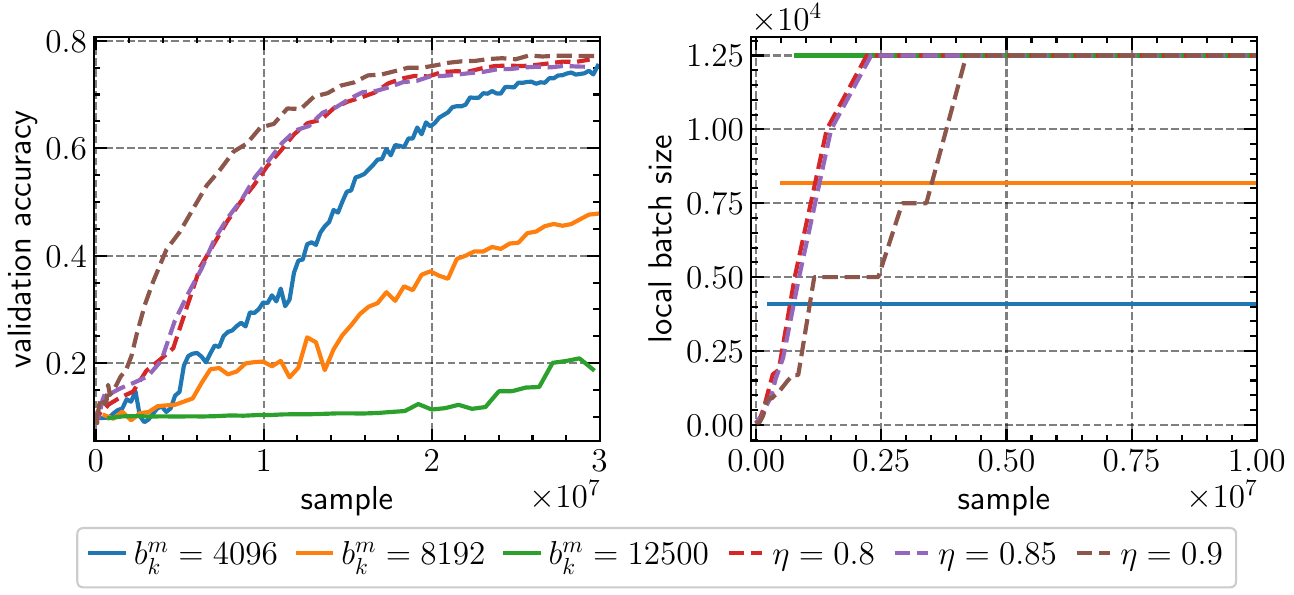}
        \caption{$H=16$ with $b_k^m\in\{4096, 8192, 12500\}$ and $\eta\in\{0.8,0.85,0.9\}$}
        \label{fig:resnet-50_cifar10_const_H16}
    \end{subfigure}%
    \hfill
    \begin{subfigure}[h]{0.49\textwidth}
        \centering
        \includegraphics[width=\textwidth]{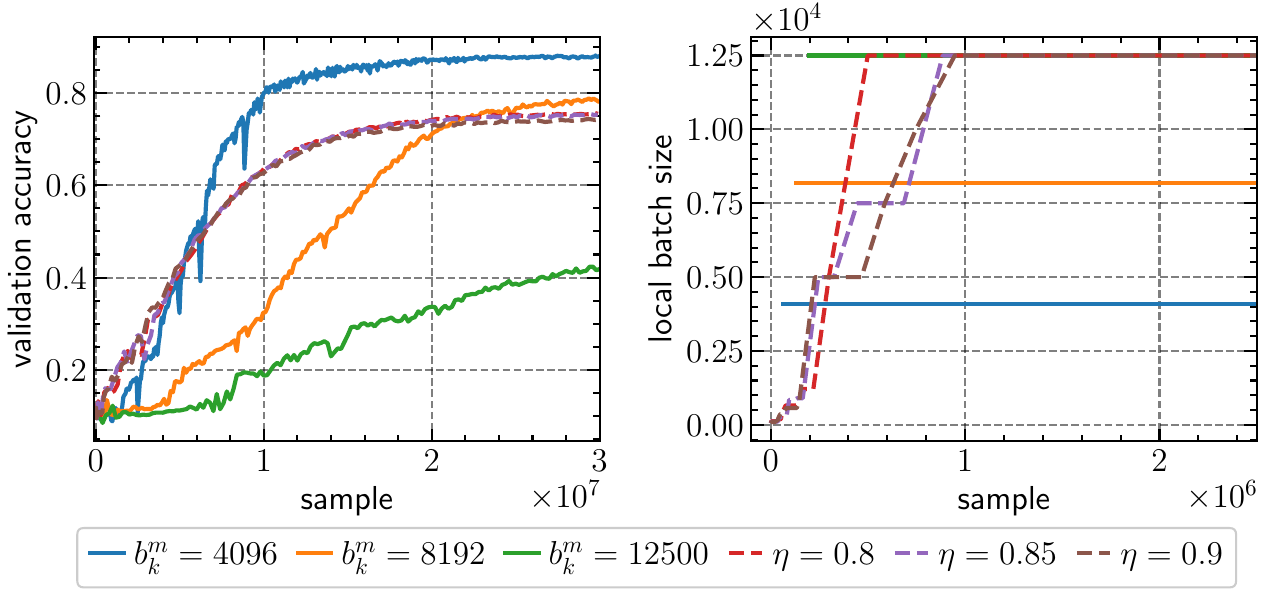}
        \caption{$H=4$ with $b_k^m\in\{4096, 8192, 12500\}$ and $\eta\in\{0.8,0.85,0.9\}$}
        \label{fig:resnet-50_cifar10_const_H4}
    \end{subfigure}%
    \hfill
    \begin{subfigure}[h]{0.49\textwidth}
        \centering
        \includegraphics[width=\textwidth]{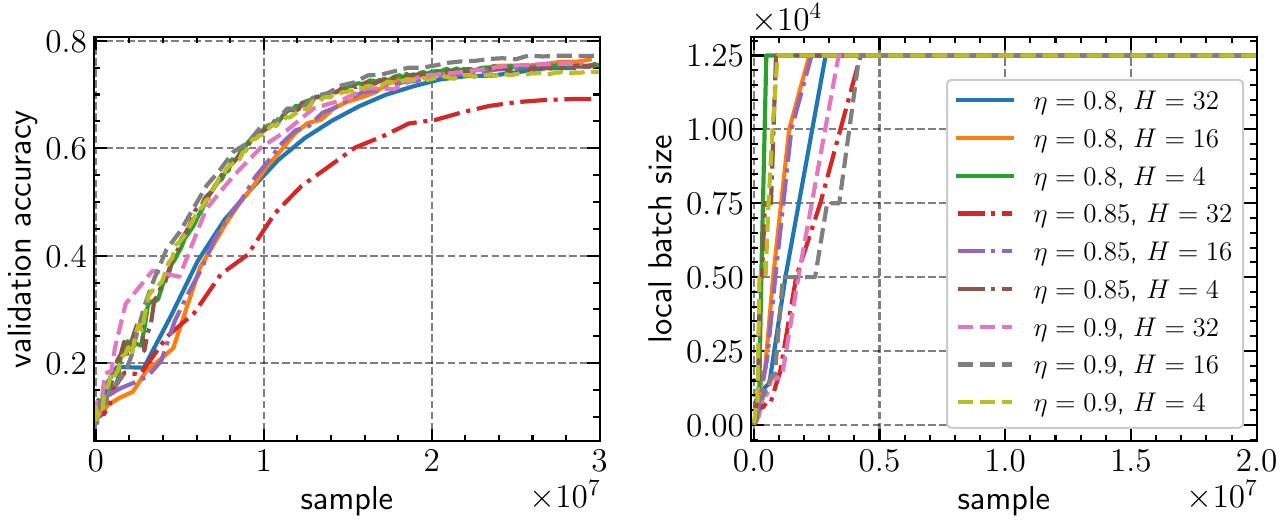}
        \caption{Adaptive batch size strategies for $H\in\{32, 16, 4\}$}
        \label{fig:resnet-50_cifar10_final}
    \end{subfigure}
    \caption{Validation accuracy and local batch sizes of Local \SHB with adaptive batch size strategies for \ResNet-50 on CIFAR-10. }
    \label{fig:resnet-50_cifar10}
\end{figure*}

\paragraph{Values of $\eta$.} \Cref{table:resnet-50_cifar10} demonstrates that adaptive local batch sizes offer better validation accuracy than constant batch sizes with similar averaged batch size. For example, using $H=16$ and $\eta = 0.8$, the adaptive approach averages a batch size of 8906, more than double a constant size of 4096, achieving over 1\% higher accuracy and nearly 1000 fewer steps. This highlights the effectiveness of adaptive strategies in reducing the generalization gap in large-batch training, though it requires 16\% more training time due to extra computations from the norm test. Similar results are observed with other local gradient step values $H$.

\paragraph{Local gradient steps $H$.}
We examine the impact of local gradient steps $H$ (\Cref{table:resnet-50_cifar10}, \Cref{fig:resnet-50_cifar10}). According to \citet{lin2020dont,ortiz2021trade,gu2023why,gu2024quadratic}, local gradient methods may enhance generalization compared to synchronized ones $H=1$, although we use \SHB instead of vanilla \SGD in synchronized models. This is evident with a constant batch size of 8192 and 4 local gradient steps. For adaptive batch sizes, smaller $H$ values rapidly increase batch sizes (\Cref{fig:resnet-50_cifar10}), supporting our theory that a smaller $\eta$ with a larger $H$ ensures convergence rates of $\scrO(HM+\eta^2)$. See \Cref{fig:resnet-50_cifar10} and \Cref{subsec:resnet-50_cifar10_supp} for more details.

\begin{table*}[h!]
    \centering
    \caption{Adaptive batch size strategies for \ResNet-50 on CIFAR-10; no.~of total steps (steps), wall-clock time (in hours), average local batch sizes (bsz.), and best validation accuracy (acc.; in \%)}
    \label{table:resnet-50_cifar10}
    \vspace*{-2mm}
    \tiny
    \renewcommand{\arraystretch}{1.}
    \begin{tabular}{l|rrrr|rrrr|rrrr|rrrr}
        \toprule[1pt]
        \multirow{2}{*}{Schedule}
        & \multicolumn{4}{c|}{$H=32$}
        & \multicolumn{4}{c|}{$H=16$}
        & \multicolumn{4}{c|}{$H=4$}
        & \multicolumn{4}{c}{$H=1$}
        \\
        & steps  & time & bsz. & acc. & steps  & time & bsz. & acc. & steps  & time & bsz. & acc. & steps & time & bsz. & acc. \\
        \midrule
        Constant & 1824 & 0.98 & 4096 & 67.02 & 1824 &  0.99 & 4096 & 75.32  & 1828 &  1.07 & 4096 & 88.12 & 1831 &  1.34 & 4096 & 89.40 \\ 
        Constant & 896 & 0.95 & 8192 & 44.27 & 912 &  0.98 & 8192 & 48.19  & 912  &  1.01 & 8192 & 78.81 & 915  &  1.15 & 8192 & 76.58 \\ 
        Constant & 576 & 1.07 & 12500 & 10.19 & 592  &  1.10 & 12500 & 20.89 & 596  &  1.13 & 12500 & 42.36 & 599  &  1.23 & 12500 & 53.80 \\ 
        $\eta=0.8$ &  928 & 1.13 & 7828  &  74.95   & 832  &  1.15 & 8906 & 76.50 & 744  &  1.16 & 10060 & 75.67 & 1241 &  1.41 & 6043 & 82.14 \\ 
        $\eta=0.85$ &  1088 & 1.18 & 7019  & 69.92  & 864  &  1.14 & 8607 & 75.32 & 756  &  1.16 & 9896 & 75.40 & 1270 &  1.43 & 5906 & 83.15 \\ 
        $\eta=0.9$ & 1216 & 1.15 & 6125  & 75.76  & 1088 &  1.16 & 6929 & 77.48 & 748  &  1.17 & 10022 & 74.35 & 1540 &  1.47 & 4868 & 84.61 \\ 
        \bottomrule
    \end{tabular}
    \renewcommand{\arraystretch}{1}
\end{table*}

\subsection{MicroLlama 300M on C4}

We pre-train MicroLlama 300M \citep{microllama2024} using the C4 dataset \citep{raffel2020exploring} processed by Allen AI\footnote{See \url{https://huggingface.co/datasets/allenai/c4}}, tokenized with Llama 2 7B's tokenizer \citep{touvron2023llama2} (vocabulary size: 32,000). Employing $M=4$ workers, the training budget is 4 billion tokens with a sequence length of 2048, and maximum batch sizes of 8192 (global) and 2048 (local) sequences, equating to $\approx$ 16M and 4M tokens. We use \AdamW \citep{loshchilov2019decoupled} as the inner optimizer, with a learning rate schedule featuring a 1\% linear warmup and cosine decay, peaking at 0.001 and dropping to 0.0001.

\begin{table*}[h!]
    \centering
    \caption{Adaptive batch size strategies for MicroLlama 300M on C4; no.~of total steps (steps), wall-clock time (in hours), average local batch sizes (bsz.), and best validation loss (cross-entropy loss; estimated by 100 iterations)}
    \label{table:microllama}
    \vspace*{-2mm}
    \scriptsize
    \renewcommand{\arraystretch}{1.}
    \begin{tabular}{l|rrrr|rrrr|rrrr}
        \toprule[1pt]
        \multirow{2}{*}{Schedule}
        & \multicolumn{4}{c|}{$H=32$}
        & \multicolumn{4}{c|}{$H=16$}
        & \multicolumn{4}{c}{$H=4$}
        \\
        & steps & time & bsz. & loss & steps & time & bsz. & loss & steps  & time & bsz. & loss \\
        \midrule
        Constant & 31744 & 10.59 & 512 & 4.10 & 15616 & 6.86 & 512 & 4.20 & 3888 & 11.91 & 512 & 3.93  \\
        Constant & 16384 & 10.53 & 1024 & 4.82 & 7936 & 10.64 & 1024 & 4.84 & 1968 & 11.31 & 1024 & 5.02 \\
        Constant & 8192 & 9.77 & 2048 & 5.72 & 4096 & 10.50 & 2048 & 5.73 & 992 & 10.96 & 2048 & 6.00 \\
        $\eta=0.8$ & 15360 & 11.13 & 1088 & 4.55 & 5632 & 10.96 & 1453 & 4.98 & 1216 & 11.13 & 1658 & 5.05  \\
        $\eta=0.9$ & 16384 & 11.54 & 1054 & 4.66 & 6400 & 11.22 & 1299 & 4.80 & 1360 & 11.18 & 1484 & 4.68 \\
        \bottomrule
    \end{tabular}
    \renewcommand{\arraystretch}{1}
\end{table*}

\paragraph{Values of $\eta$.} 
Similar to the observation in \Cref{sec:resnet50_on_cifar10}, the adaptive batch size strategy generally achieves lower losses compared to their constant batch size counterparts, with a significant reduction in the number of training steps. It illustrates the effectiveness of our adaptive batch size strategy in bridging the generalization gap in large-batch training.

\paragraph{Local gradient steps $H$.}
Again, batch sizes grows more rapidly as the number of local gradient steps $H$ decreases, indicating an inverse scaling relationship between $\eta$ and $H$. This in turn leads to smaller batch sizes at the earlier stages of training and hence better generalization. 
We also observe from \Cref{fig:microllama_final} that, for $\eta=0.9$, the generalization gaps between different values of $H$ is narrower than that for $\eta=0.8$.

\begin{figure*}[h!]    
    \centering
    \begin{subfigure}[h]{0.49\textwidth}
        \centering
        \includegraphics[width=\textwidth]{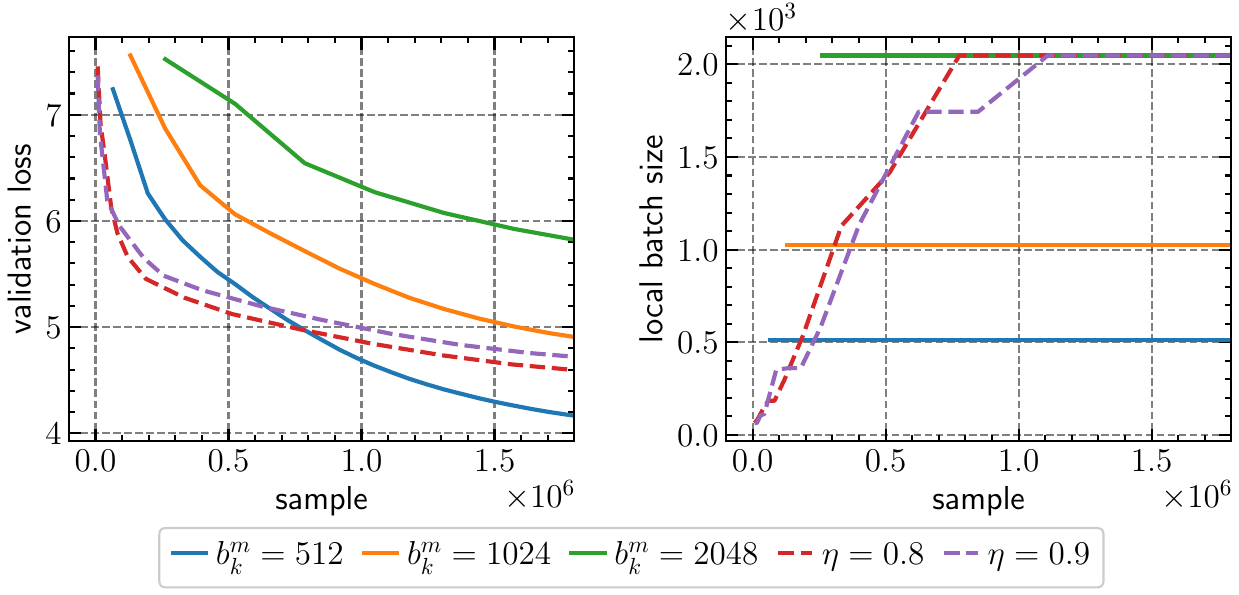}
        \caption{$H=32$ with $b_k^m\in\{512, 1024, 2048\}$ and $\eta\in\{0.8,0.9\}$}
        \label{fig:microllama_const_H32}
    \end{subfigure}%
    \hfill
    \begin{subfigure}[h]{0.49\textwidth}
        \centering
        \includegraphics[width=\textwidth]{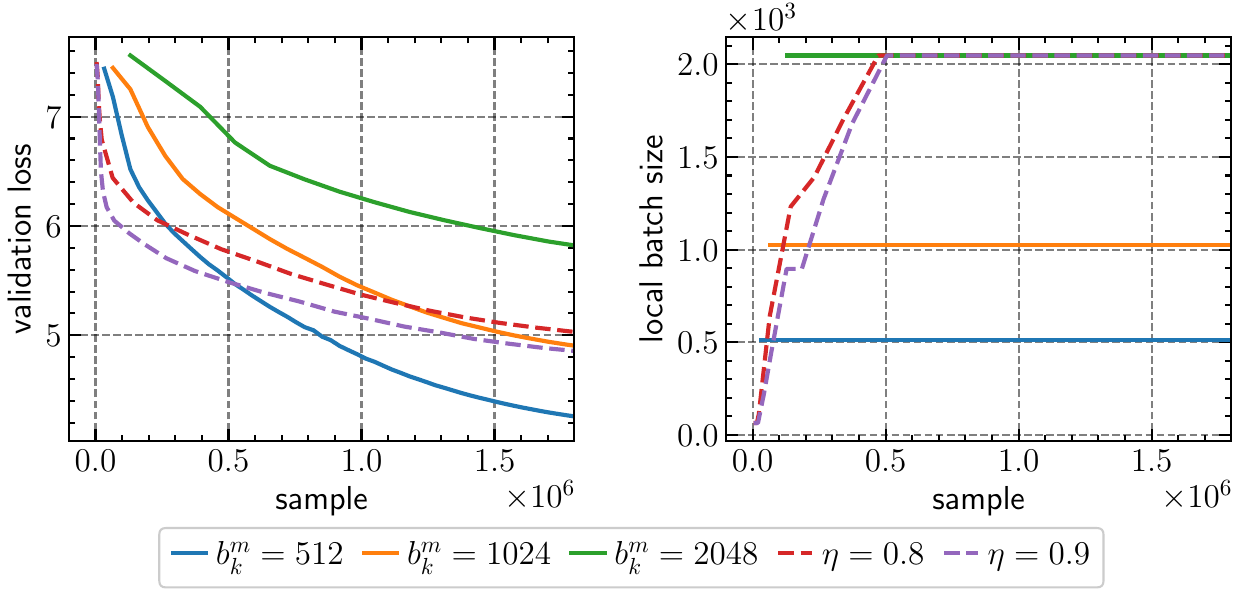}
        \caption{$H=16$ with $b_k^m\in\{512, 1024, 2048\}$ and $\eta\in\{0.8,0.9\}$}
        \label{fig:microllama_const_H16}
    \end{subfigure}%
    \hfill
    \begin{subfigure}[h]{0.49\textwidth}
        \centering
        \includegraphics[width=\textwidth]{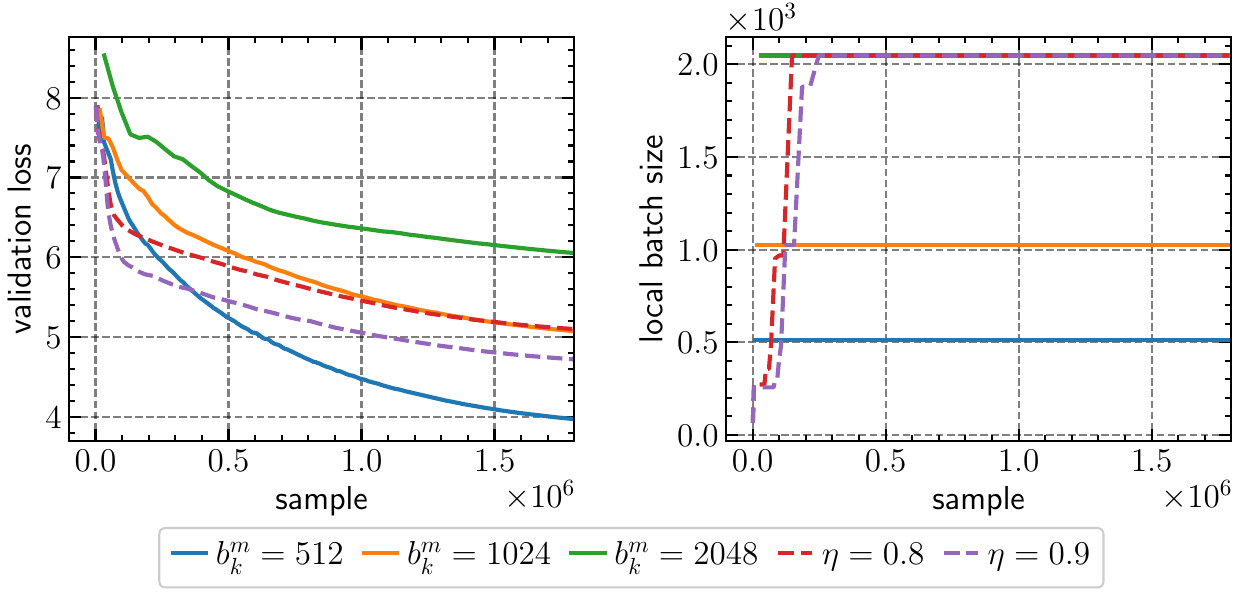}
        \caption{$H=4$ with $b_k^m\in\{512, 1024, 2048\}$ and $\eta\in\{0.8,0.9\}$}
        \label{fig:microllama_const_H4}
    \end{subfigure}%
     \hfill
    \begin{subfigure}[h]{0.49\textwidth}
        \centering
        \includegraphics[width=\textwidth]{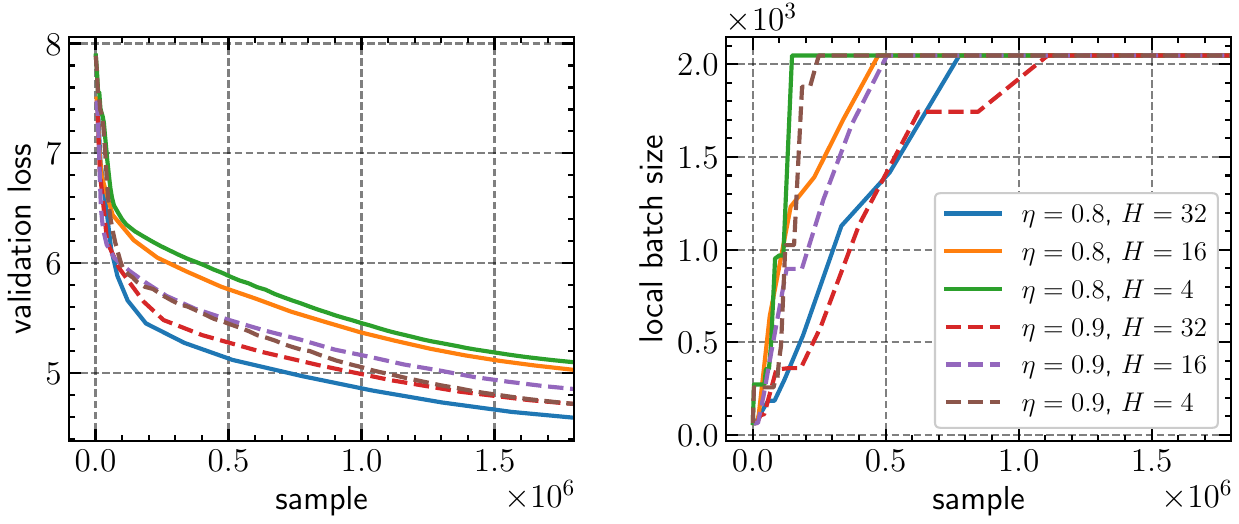}
        \caption{Adaptive batch size strategies for $H\in\{32, 16, 4\}$}
        \label{fig:microllama_final}
    \end{subfigure}
    \caption{Validation loss and local batch sizes of Local \AdamW with adaptive batch size strategies for MicroLlama 300M on C4. }
    \label{fig:microllama}
    \vspace*{-5mm}
\end{figure*}

\section{Concluding Remarks}
\label{sec:conclusion}
This work presents adaptive batch size strategies for local gradient methods under data parallelism to reduce communication overhead by increasing local batch sizes. To address possible efficiency issues in practical implementation, we propose an approximation of the norm test under the context of local gradient methods, by taking advantage of the availability of local batch gradients. 
Illustrated with experiments in both image classification and language modeling task, our proposed approach is able to reduce the generalization gap of large-batch training while only adding a small amount of additional training time. 
Future research will examine convergence in varied settings and federated learning \citep{mcmahan2017communication, kairouz2021advances}, including personalized approaches \citep{hanzely2023personalized}. Asynchronous \SGD \citep{mishchenko2022asynchronous, koloskova2022sharper, nguyen2022federated, leconte2024queuing, islamov2024asgrad} and hierarchical \SGD \citep{lin2020dont, castiglia2021multilevel, wang2022demystifying} extensions are promising for enhancing communication efficiency and tackling worker heterogeneity.

    \section*{Acknowledgments}
    The research of Tim Tsz-Kit Lau is supported in part through the computational resources and staff contributions from the Data Science Institute at the University of Chicago, through its AI + Science Research Funding Initiatives. 
    The research of Han Liu is supported by NIH R01LM01372201, NSF RI 1840857, NSF TRIPOD 1740735, NSF DMS1454377-CAREER, NSF IIS 1546482, along with an Alfred P.~Sloan Fellowship. 
    The research of Mladen Kolar is supported in part by NSF ECCS-2216912.

    \addcontentsline{toc}{section}{\protect\textbf{References}}
    \bibliographystyle{plainnat}
    \bibliography{ref}

\begin{thebibliography}{91}
\providecommand{\natexlab}[1]{#1}
\providecommand{\url}[1]{\texttt{#1}}
\expandafter\ifx\csname urlstyle\endcsname\relax
  \providecommand{\doi}[1]{doi: #1}\else
  \providecommand{\doi}{doi: \begingroup \urlstyle{rm}\Url}\fi

\bibitem[Akiba et~al.(2017)Akiba, Suzuki, and Fukuda]{akiba2017extremely}
Takuya Akiba, Shuji Suzuki, and Keisuke Fukuda.
\newblock Extremely large minibatch {SGD}: Training {ResNet}-50 on {ImageNet}
  in 15 minutes.
\newblock \emph{arXiv preprint arXiv:1711.04325}, 2017.

\bibitem[Balles et~al.(2017)Balles, Romero, and Hennig]{balles2017coupling}
Lukas Balles, Javier Romero, and Philipp Hennig.
\newblock Coupling adaptive batch sizes with learning rates.
\newblock In \emph{Proceedings of the Conference on Uncertainty in Artificial
  Intelligence (UAI)}, 2017.

\bibitem[Balles et~al.(2024)Balles, S, and Archambeau]{balles2024on}
Lukas Balles, Prabhu~Teja S, and Cedric Archambeau.
\newblock On the choice of learning rate for local {SGD}.
\newblock \emph{Transactions on Machine Learning Research}, 2024.

\bibitem[Bollapragada et~al.(2018)Bollapragada, Byrd, and
  Nocedal]{bollapragada2018adaptive}
Raghu Bollapragada, Richard Byrd, and Jorge Nocedal.
\newblock Adaptive sampling strategies for stochastic optimization.
\newblock \emph{SIAM Journal on Optimization}, 28\penalty0 (4):\penalty0
  3312--3343, 2018.

\bibitem[Bottou(2010)]{bottou2010large}
L{\'e}on Bottou.
\newblock Large-scale machine learning with stochastic gradient descent.
\newblock In \emph{Proceedings of COMPSTAT'2010: 19th International Conference
  on Computational Statistics}, pages 177--186. Springer, 2010.

\bibitem[Bottou et~al.(2018)Bottou, Curtis, and
  Nocedal]{bottou2018optimization}
L{\'e}on Bottou, Frank~E. Curtis, and Jorge Nocedal.
\newblock Optimization methods for large-scale machine learning.
\newblock \emph{SIAM Review}, 60\penalty0 (2):\penalty0 223--311, 2018.

\bibitem[Brown et~al.(2020)Brown, Mann, Ryder, Subbiah, Kaplan, Dhariwal,
  Neelakantan, Shyam, Sastry, Askell, Agarwal, Herbert-Voss, Krueger, Henighan,
  Child, Ramesh, Ziegler, Wu, Winter, Hesse, Chen, Sigler, Litwin, Gray, Chess,
  Clark, Berner, McCandlish, Radford, Sutskever, and Amodei]{brown2020language}
Tom~B. Brown, Benjamin Mann, Nick Ryder, Melanie Subbiah, Jared Kaplan,
  Prafulla Dhariwal, Arvind Neelakantan, Pranav Shyam, Girish Sastry, Amanda
  Askell, Sandhini Agarwal, Ariel Herbert-Voss, Gretchen Krueger, Tom Henighan,
  Rewon Child, Aditya Ramesh, Daniel~M. Ziegler, Jeffrey Wu, Clemens Winter,
  Chris Hesse, Mark Chen, Eric Sigler, Mateusz Litwin, Scott Gray, Benjamin
  Chess, Jack Clark, Christopher Berner, Sam McCandlish, Alec Radford, Ilya
  Sutskever, and Dario Amodei.
\newblock Language models are few-shot learners.
\newblock In \emph{Advances in Neural Information Processing Systems
  (NeurIPS)}, 2020.

\bibitem[Byrd et~al.(2012)Byrd, Chin, Nocedal, and Wu]{byrd2012sample}
Richard~H. Byrd, Gillian~M. Chin, Jorge Nocedal, and Yuchen Wu.
\newblock Sample size selection in optimization methods for machine learning.
\newblock \emph{Mathematical Programming}, 134\penalty0 (1):\penalty0 127--155,
  2012.

\bibitem[Castiglia et~al.(2021)Castiglia, Das, and
  Patterson]{castiglia2021multilevel}
Timothy Castiglia, Anirban Das, and Stacy Patterson.
\newblock Multi-level local {SGD}: Distributed {SGD} for heterogeneous
  hierarchical networks.
\newblock In \emph{International Conference on Learning Representations
  (ICLR)}, 2021.

\bibitem[Chen et~al.(2021)Chen, Shlezinger, Poor, Eldar, and
  Cui]{chen2021communication}
Mingzhe Chen, Nir Shlezinger, H.~Vincent Poor, Yonina~C. Eldar, and Shuguang
  Cui.
\newblock Communication-efficient federated learning.
\newblock \emph{Proceedings of the National Academy of Sciences}, 118\penalty0
  (17):\penalty0 e2024789118, 2021.

\bibitem[Davis et~al.(2020)Davis, Drusvyatskiy, Kakade, and
  Lee]{davis2020stochastic}
Damek Davis, Dmitriy Drusvyatskiy, Sham Kakade, and Jason~D. Lee.
\newblock Stochastic subgradient method converges on tame functions.
\newblock \emph{Foundations of Computational Mathematics}, 20:\penalty0
  119--154, 2020.

\bibitem[De et~al.(2016)De, Yadav, Jacobs, and Goldstein]{de2016big}
Soham De, Abhay Yadav, David Jacobs, and Tom Goldstein.
\newblock Big batch {SGD}: Automated inference using adaptive batch sizes.
\newblock \emph{arXiv preprint arXiv:1610.05792}, 2016.

\bibitem[De et~al.(2017)De, Yadav, Jacobs, and Goldstein]{de2017automated}
Soham De, Abhay Yadav, David Jacobs, and Tom Goldstein.
\newblock Automated inference with adaptive batches.
\newblock In \emph{Proceedings of the International Conference on Artificial
  Intelligence and Statistics (AISTATS)}, 2017.

\bibitem[Dean et~al.(2012)Dean, Corrado, Monga, Chen, Devin, Mao, Ranzato,
  Senior, Tucker, Yang, Le, and Ng]{dean2012large}
Jeffrey Dean, Greg Corrado, Rajat Monga, Kai Chen, Matthieu Devin, Mark Mao,
  Marc\textquotesingle~aurelio Ranzato, Andrew Senior, Paul Tucker, Ke~Yang,
  Quoc~V. Le, and Andrew~Y. Ng.
\newblock Large scale distributed deep networks.
\newblock In \emph{Advances in Neural Information Processing Systems
  (NeurIPS)}, 2012.

\bibitem[DeepSeek-AI(2024)]{deepseekai2024deepseekv2}
DeepSeek-AI.
\newblock {DeepSeek-V2}: A strong, economical, and efficient
  {Mixture-of-Experts} language model.
\newblock \emph{arXiv preprint arXiv:2405.04434}, 2024.

\bibitem[Dekel et~al.(2012)Dekel, Gilad-Bachrach, Shamir, and
  Xiao]{dekel2012optimal}
Ofer Dekel, Ran Gilad-Bachrach, Ohad Shamir, and Lin Xiao.
\newblock Optimal distributed online prediction using mini-batches.
\newblock \emph{Journal of Machine Learning Research}, 13\penalty0 (1), 2012.

\bibitem[Devarakonda et~al.(2017)Devarakonda, Naumov, and
  Garland]{devarakonda2017adabatch}
Aditya Devarakonda, Maxim Naumov, and Michael Garland.
\newblock Adabatch: Adaptive batch sizes for training deep neural networks.
\newblock \emph{arXiv preprint arXiv:1712.02029}, 2017.

\bibitem[Douillard et~al.(2023)Douillard, Feng, Rusu, Chhaparia, Donchev,
  Kuncoro, Ranzato, Szlam, and Shen]{douillard2023diloco}
Arthur Douillard, Qixuan Feng, Andrei~A. Rusu, Rachita Chhaparia, Yani Donchev,
  Adhiguna Kuncoro, Marc'Aurelio Ranzato, Arthur Szlam, and Jiajun Shen.
\newblock {DiLoCo}: Distributed low-communication training of language models.
\newblock \emph{arXiv preprint arXiv:2311.08105}, 2023.

\bibitem[Duchi et~al.(2011)Duchi, Hazan, and Singer]{duchi2011adagrad}
John Duchi, Elad Hazan, and Yoram Singer.
\newblock Adaptive subgradient methods for online learning and stochastic
  optimization.
\newblock \emph{Journal of Machine Learning Research}, 12:\penalty0 2121--2159,
  2011.

\bibitem[Friedlander and Schmidt(2012)]{friedlander2012hybrid}
Michael~P. Friedlander and Mark Schmidt.
\newblock Hybrid deterministic-stochastic methods for data fitting.
\newblock \emph{SIAM Journal on Scientific Computing}, 34\penalty0
  (3):\penalty0 A1380--A1405, 2012.

\bibitem[Gower et~al.(2020)Gower, Schmidt, Bach, and
  Richt{\'a}rik]{gower2020variance}
Robert~M. Gower, Mark Schmidt, Francis Bach, and Peter Richt{\'a}rik.
\newblock Variance-reduced methods for machine learning.
\newblock \emph{Proceedings of the IEEE}, 108\penalty0 (11):\penalty0
  1968--1983, 2020.

\bibitem[Gower et~al.(2019)Gower, Loizou, Qian, Sailanbayev, Shulgin, and
  Richt{\'a}rik]{gower2019sgd}
Robert~Mansel Gower, Nicolas Loizou, Xun Qian, Alibek Sailanbayev, Egor
  Shulgin, and Peter Richt{\'a}rik.
\newblock {SGD}: General analysis and improved rates.
\newblock In \emph{Proceedings of the International Conference on Machine
  Learning (ICML)}, 2019.

\bibitem[Goyal et~al.(2017)Goyal, Doll{\'a}r, Girshick, Noordhuis, Wesolowski,
  Kyrola, Tulloch, Jia, and He]{goyal2017accurate}
Priya Goyal, Piotr Doll{\'a}r, Ross Girshick, Pieter Noordhuis, Lukasz
  Wesolowski, Aapo Kyrola, Andrew Tulloch, Yangqing Jia, and Kaiming He.
\newblock Accurate, large minibatch {SGD}: Training {ImageNet} in 1 hour.
\newblock \emph{arXiv preprint arXiv:1706.02677}, 2017.

\bibitem[Groeneveld et~al.(2024)Groeneveld, Beltagy, Walsh, Bhagia, Kinney,
  Tafjord, Jha, Ivison, Magnusson, Wang, Arora, Atkinson, Authur, Chandu,
  Cohan, Dumas, Elazar, Gu, Hessel, Khot, Merrill, Morrison, Muennighoff, Naik,
  Nam, Peters, Pyatkin, Ravichander, Schwenk, Shah, Smith, Strubell, Subramani,
  Wortsman, Dasigi, Lambert, Richardson, Zettlemoyer, Dodge, Lo, Soldaini,
  Smith, and Hajishirzi]{groeneveld2024olmo}
Dirk Groeneveld, Iz~Beltagy, Pete Walsh, Akshita Bhagia, Rodney Kinney, Oyvind
  Tafjord, Ananya~Harsh Jha, Hamish Ivison, Ian Magnusson, Yizhong Wang, Shane
  Arora, David Atkinson, Russell Authur, Khyathi~Raghavi Chandu, Arman Cohan,
  Jennifer Dumas, Yanai Elazar, Yuling Gu, Jack Hessel, Tushar Khot, William
  Merrill, Jacob Morrison, Niklas Muennighoff, Aakanksha Naik, Crystal Nam,
  Matthew~E. Peters, Valentina Pyatkin, Abhilasha Ravichander, Dustin Schwenk,
  Saurabh Shah, Will Smith, Emma Strubell, Nishant Subramani, Mitchell
  Wortsman, Pradeep Dasigi, Nathan Lambert, Kyle Richardson, Luke Zettlemoyer,
  Jesse Dodge, Kyle Lo, Luca Soldaini, Noah~A. Smith, and Hannaneh Hajishirzi.
\newblock {OLMo}: Accelerating the science of language models.
\newblock \emph{arXiv preprint arXiv:2402.00838}, 2024.

\bibitem[Gu et~al.(2023)Gu, Lyu, Huang, and Arora]{gu2023why}
Xinran Gu, Kaifeng Lyu, Longbo Huang, and Sanjeev Arora.
\newblock Why (and when) does local {SGD} generalize better than {SGD}?
\newblock In \emph{International Conference on Learning Representations
  (ICLR)}, 2023.

\bibitem[Gu et~al.(2024)Gu, Lyu, Arora, Zhang, and Huang]{gu2024quadratic}
Xinran Gu, Kaifeng Lyu, Sanjeev Arora, Jingzhao Zhang, and Longbo Huang.
\newblock A quadratic synchronization rule for distributed deep learning.
\newblock In \emph{International Conference on Learning Representations
  (ICLR)}, 2024.

\bibitem[Haddadpour et~al.(2019)Haddadpour, Kamani, Mahdavi, and
  Cadambe]{haddadpour2019local}
Farzin Haddadpour, Mohammad~Mahdi Kamani, Mehrdad Mahdavi, and Viveck Cadambe.
\newblock Local {SGD} with periodic averaging: Tighter analysis and adaptive
  synchronization.
\newblock In \emph{Advances in Neural Information Processing Systems
  (NeurIPS)}, 2019.

\bibitem[Hanzely et~al.(2023)Hanzely, Zhao, and Kolar]{hanzely2023personalized}
Filip Hanzely, Boxin Zhao, and Mladen Kolar.
\newblock Personalized federated learning: A unified framework and universal
  optimization techniques.
\newblock \emph{Transactions on Machine Learning Research}, 2023.

\bibitem[He et~al.(2016)He, Zhang, Ren, and Sun]{he2016deep}
Kaiming He, Xiangyu Zhang, Shaoqing Ren, and Jian Sun.
\newblock Deep residual learning for image recognition.
\newblock In \emph{Proceedings of the IEEE Conference on Computer Vision and
  Pattern Recognition (CVPR)}, 2016.

\bibitem[Hoffer et~al.(2017)Hoffer, Hubara, and Soudry]{hoffer2017train}
Elad Hoffer, Itay Hubara, and Daniel Soudry.
\newblock Train longer, generalize better: closing the generalization gap in
  large batch training of neural networks.
\newblock In \emph{Advances in Neural Information Processing Systems
  (NeurIPS)}, 2017.

\bibitem[Islamov et~al.(2024)Islamov, Safaryan, and
  Alistarh]{islamov2024asgrad}
Rustem Islamov, Mher Safaryan, and Dan Alistarh.
\newblock {AsGrad}: A sharp unified analysis of asynchronous-{SGD} algorithms.
\newblock In \emph{Proceedings of the International Conference on Artificial
  Intelligence and Statistics (AISTATS)}, 2024.

\bibitem[Johnson and Zhang(2013)]{johnson2013accelerating}
Rie Johnson and Tong Zhang.
\newblock Accelerating stochastic gradient descent using predictive variance
  reduction.
\newblock In \emph{Advances in Neural Information Processing Systems
  (NeurIPS)}, 2013.

\bibitem[Johnson et~al.(2020)Johnson, Agrawal, Gu, and
  Guestrin]{johnson2020adascale}
Tyler Johnson, Pulkit Agrawal, Haijie Gu, and Carlos Guestrin.
\newblock {AdaScale SGD}: A user-friendly algorithm for distributed training.
\newblock In \emph{Proceedings of the International Conference on Machine
  Learning (ICML)}, 2020.

\bibitem[Kairouz et~al.(2021)Kairouz, McMahan, Avent, Bellet, Bennis, Bhagoji,
  Bonawitz, Charles, Cormode, Cummings, D'Oliveira, Eichner, Rouayheb, Evans,
  Gardner, Garrett, Gascón, Ghazi, Gibbons, Gruteser, Harchaoui, He, He, Huo,
  Hutchinson, Hsu, Jaggi, Javidi, Joshi, Khodak, Konečný, Korolova,
  Koushanfar, Koyejo, Lepoint, Liu, Mittal, Mohri, Nock, Özgür, Pagh,
  Raykova, Qi, Ramage, Raskar, Song, Song, Stich, Sun, Suresh, Tramèr,
  Vepakomma, Wang, Xiong, Xu, Yang, Yu, Yu, and Zhao]{kairouz2021advances}
Peter Kairouz, H.~Brendan McMahan, Brendan Avent, Aurélien Bellet, Mehdi
  Bennis, Arjun~Nitin Bhagoji, Kallista Bonawitz, Zachary Charles, Graham
  Cormode, Rachel Cummings, Rafael G.~L. D'Oliveira, Hubert Eichner, Salim~El
  Rouayheb, David Evans, Josh Gardner, Zachary Garrett, Adrià Gascón, Badih
  Ghazi, Phillip~B. Gibbons, Marco Gruteser, Zaid Harchaoui, Chaoyang He, Lie
  He, Zhouyuan Huo, Ben Hutchinson, Justin Hsu, Martin Jaggi, Tara Javidi,
  Gauri Joshi, Mikhail Khodak, Jakub Konečný, Aleksandra Korolova, Farinaz
  Koushanfar, Sanmi Koyejo, Tancrède Lepoint, Yang Liu, Prateek Mittal,
  Mehryar Mohri, Richard Nock, Ayfer Özgür, Rasmus Pagh, Mariana Raykova,
  Hang Qi, Daniel Ramage, Ramesh Raskar, Dawn Song, Weikang Song, Sebastian~U.
  Stich, Ziteng Sun, Ananda~Theertha Suresh, Florian Tramèr, Praneeth
  Vepakomma, Jianyu Wang, Li~Xiong, Zheng Xu, Qiang Yang, Felix~X. Yu, Han Yu,
  and Sen Zhao.
\newblock Advances and open problems in federated learning.
\newblock \emph{Foundations and Trends{\textregistered} in Machine Learning},
  14\penalty0 (1--2):\penalty0 1--210, 2021.

\bibitem[Karimireddy et~al.(2020)Karimireddy, Kale, Mohri, Reddi, Stich, and
  Suresh]{karimireddy2020scaffold}
Sai~Praneeth Karimireddy, Satyen Kale, Mehryar Mohri, Sashank Reddi, Sebastian
  Stich, and Ananda~Theertha Suresh.
\newblock {SCAFFOLD}: Stochastic controlled averaging for federated learning.
\newblock In \emph{Proceedings of the International Conference on Machine
  Learning (ICML)}, 2020.

\bibitem[Keskar et~al.(2017)Keskar, Mudigere, Nocedal, Smelyanskiy, and
  Tang]{keskar2017on}
Nitish~Shirish Keskar, Dheevatsa Mudigere, Jorge Nocedal, Mikhail Smelyanskiy,
  and Ping Tak~Peter Tang.
\newblock On large-batch training for deep learning: Generalization gap and
  sharp minima.
\newblock In \emph{International Conference on Learning Representations
  (ICLR)}, 2017.

\bibitem[Khaled and Richt{\'a}rik(2023)]{khaled2023better}
Ahmed Khaled and Peter Richt{\'a}rik.
\newblock Better theory for {SGD} in the nonconvex world.
\newblock \emph{Transactions on Machine Learning Research}, 2023.

\bibitem[Khaled et~al.(2020)Khaled, Mishchenko, and
  Richtarik]{khaled2020tighter}
Ahmed Khaled, Konstantin Mishchenko, and Peter Richtarik.
\newblock Tighter theory for local {SGD} on identical and heterogeneous data.
\newblock In \emph{Proceedings of the International Conference on Artificial
  Intelligence and Statistics (AISTATS)}, 2020.

\bibitem[Kim et~al.(2024)Kim, Toghani, Uribe, and Kyrillidis]{kim2024adaptive}
Junhyung~Lyle Kim, Taha Toghani, Cesar~A. Uribe, and Anastasios Kyrillidis.
\newblock Adaptive federated learning with auto-tuned clients.
\newblock In \emph{International Conference on Learning Representations
  (ICLR)}, 2024.

\bibitem[Kingma and Ba(2015)]{kingma2015}
Diederik~P. Kingma and Jimmy~Lei Ba.
\newblock Adam: a method for stochastic optimization.
\newblock In \emph{International Conference on Learning Representations
  (ICLR)}, 2015.

\bibitem[Koloskova et~al.(2020)Koloskova, Loizou, Boreiri, Jaggi, and
  Stich]{koloskova2020unified}
Anastasia Koloskova, Nicolas Loizou, Sadra Boreiri, Martin Jaggi, and Sebastian
  Stich.
\newblock A unified theory of decentralized {SGD} with changing topology and
  local updates.
\newblock In \emph{Proceedings of the International Conference on Machine
  Learning (ICML)}, 2020.

\bibitem[Koloskova et~al.(2022)Koloskova, Stich, and
  Jaggi]{koloskova2022sharper}
Anastasia Koloskova, Sebastian~U. Stich, and Martin Jaggi.
\newblock Sharper convergence guarantees for asynchronous {SGD} for distributed
  and federated learning.
\newblock In \emph{Advances in Neural Information Processing Systems
  (NeurIPS)}, 2022.

\bibitem[Krizhevsky(2009)]{krizhevsky2009learning}
Alex Krizhevsky.
\newblock Learning multiple layers of features from tiny images.
\newblock Technical report, University of Toronto, 2009.

\bibitem[Krizhevsky(2014)]{krizhevsky2014one}
Alex Krizhevsky.
\newblock One weird trick for parallelizing convolutional neural networks.
\newblock \emph{arXiv preprint arXiv:1404.5997}, 2014.

\bibitem[Lan et~al.(2020)Lan, Lee, and Zhou]{lan2020communication}
Guanghui Lan, Soomin Lee, and Yi~Zhou.
\newblock Communication-efficient algorithms for decentralized and stochastic
  optimization.
\newblock \emph{Mathematical Programming}, 180\penalty0 (1-2):\penalty0
  237--284, 2020.

\bibitem[Lau et~al.(2024)Lau, Liu, and Kolar]{lau2024adadagrad}
Tim Tsz-Kit Lau, Han Liu, and Mladen Kolar.
\newblock \textsc{AdAdaGrad}: Adaptive batch size schemes for adaptive gradient
  methods.
\newblock \emph{arXiv preprint arXiv:2402.11215}, 2024.

\bibitem[Leconte et~al.(2024)Leconte, Jonckheere, Samsonov, and
  Moulines]{leconte2024queuing}
Louis Leconte, Matthieu Jonckheere, Sergey Samsonov, and Eric Moulines.
\newblock Queuing dynamics of asynchronous federated learning.
\newblock In \emph{Proceedings of the International Conference on Artificial
  Intelligence and Statistics (AISTATS)}, 2024.

\bibitem[LeCun et~al.(2002)LeCun, Bottou, Orr, and
  M{\"u}ller]{lecun2002efficient}
Yann LeCun, Leon Bottou, Genevieve~B. Orr, and Klaus~Robert M{\"u}ller.
\newblock Efficient {BackProp}.
\newblock In Genevieve~B. Orr and Klaus-Robert M{\"u}ller, editors,
  \emph{Neural Networks: Tricks of the Trade}, pages 9--50. Springer Berlin
  Heidelberg, 2002.

\bibitem[Li et~al.(2022)Li, Awan, Tang, Rajbhandari, and He]{li2022one}
Conglong Li, Ammar~Ahmad Awan, Hanlin Tang, Samyam Rajbhandari, and Yuxiong He.
\newblock 1-bit {LAMB}: Communication efficient large-scale large-batch
  training with {LAMB}’s convergence speed.
\newblock In \emph{Proceedings of IEEE 29th International Conference on High
  Performance Computing, Data, and Analytics (HiPC)}, 2022.

\bibitem[Li et~al.(2020)Li, Zhao, Varma, Salpekar, Noordhuis, Li, Paszke,
  Smith, Vaughan, Damania, and Chintala]{li2020pytorch}
Shen Li, Yanli Zhao, Rohan Varma, Omkar Salpekar, Pieter Noordhuis, Teng Li,
  Adam Paszke, Jeff Smith, Brian Vaughan, Pritam Damania, and Soumith Chintala.
\newblock {PyTorch} distributed: Experiences on accelerating data parallel
  training.
\newblock In \emph{Proceedings of the VLDB Endowment}, 2020.

\bibitem[Lin et~al.(2020)Lin, Stich, Patel, and Jaggi]{lin2020dont}
Tao Lin, Sebastian~U. Stich, Kumar~Kshitij Patel, and Martin Jaggi.
\newblock Don't use large mini-batches, use local {SGD}.
\newblock In \emph{International Conference on Learning Representations
  (ICLR)}, 2020.

\bibitem[Liu et~al.(2024)Liu, Chhaparia, Douillard, Kale, Rusu, Shen, Szlam,
  and Ranzato]{liu2024asynchronous}
Bo~Liu, Rachita Chhaparia, Arthur Douillard, Satyen Kale, Andrei~A. Rusu,
  Jiajun Shen, Arthur Szlam, and Marc'Aurelio Ranzato.
\newblock Asynchronous local-{SGD} training for language modeling.
\newblock \emph{arXiv preprint arXiv:2401.09135}, 2024.

\bibitem[Liu et~al.(2019)Liu, Ott, Goyal, Du, Joshi, Chen, Levy, Lewis,
  Zettlemoyer, and Stoyanov]{liu2019roberta}
Yinhan Liu, Myle Ott, Naman Goyal, Jingfei Du, Mandar Joshi, Danqi Chen, Omer
  Levy, Mike Lewis, Luke Zettlemoyer, and Veselin Stoyanov.
\newblock {RoBERTa}: A robustly optimized {BERT} pretraining approach.
\newblock \emph{arXiv preprint arXiv:1907.11692}, 2019.

\bibitem[Loshchilov and Hutter(2019)]{loshchilov2019decoupled}
Ilya Loshchilov and Frank Hutter.
\newblock Decoupled weight decay regularization.
\newblock In \emph{International Conference on Learning Representations
  (ICLR)}, 2019.

\bibitem[Ma et~al.(2018)Ma, Bassily, and Belkin]{ma2018power}
Siyuan Ma, Raef Bassily, and Mikhail Belkin.
\newblock The power of interpolation: Understanding the effectiveness of {SGD}
  in modern over-parametrized learning.
\newblock In \emph{Proceedings of the International Conference on Machine
  Learning (ICML)}, 2018.

\bibitem[McMahan et~al.(2017)McMahan, Moore, Ramage, Hampson, and
  y~Arcas]{mcmahan2017communication}
H.~Brendan McMahan, Eider Moore, Daniel Ramage, Seth Hampson, and
  Blaise~Ag\"{u}era y~Arcas.
\newblock Communication-efficient learning of deep networks from decentralized
  data.
\newblock In \emph{Proceedings of the International Conference on Artificial
  Intelligence and Statistics (AISTATS)}, 2017.

\bibitem[Mishchenko et~al.(2022)Mishchenko, Bach, Even, and
  Woodworth]{mishchenko2022asynchronous}
Konstantin Mishchenko, Francis Bach, Mathieu Even, and Blake Woodworth.
\newblock Asynchronous {SGD} beats minibatch {SGD} under arbitrary delays.
\newblock In \emph{Advances in Neural Information Processing Systems
  (NeurIPS)}, 2022.

\bibitem[Mukherjee et~al.(2024)Mukherjee, Loizou, and
  Stich]{mukherjee2024locally}
Sohom Mukherjee, Nicolas Loizou, and Sebastian~U. Stich.
\newblock Locally adaptive federated learning.
\newblock \emph{arXiv preprint arXiv:2307.06306}, 2024.

\bibitem[Nguyen et~al.(2022)Nguyen, Malik, Zhan, Yousefpour, Rabbat, Malek, and
  Huba]{nguyen2022federated}
John Nguyen, Kshitiz Malik, Hongyuan Zhan, Ashkan Yousefpour, Mike Rabbat, Mani
  Malek, and Dzmitry Huba.
\newblock Federated learning with buffered asynchronous aggregation.
\newblock In \emph{Proceedings of the International Conference on Artificial
  Intelligence and Statistics (AISTATS)}, 2022.

\bibitem[Ortiz et~al.(2021)Ortiz, Frankle, Rabbat, Morcos, and
  Ballas]{ortiz2021trade}
Jose Javier~Gonzalez Ortiz, Jonathan Frankle, Mike Rabbat, Ari Morcos, and
  Nicolas Ballas.
\newblock Trade-offs of local {SGD} at scale: An empirical study.
\newblock \emph{arXiv preprint arXiv:2110.08133}, 2021.

\bibitem[Ostroukhov et~al.(2024)Ostroukhov, Zhumabayeva, Xiang, Gasnikov,
  Tak{\'a}{\v{c}}, and Kamzolov]{ostroukhov2024adabatchgrad}
Petr Ostroukhov, Aigerim Zhumabayeva, Chulu Xiang, Alexander Gasnikov, Martin
  Tak{\'a}{\v{c}}, and Dmitry Kamzolov.
\newblock {AdaBatchGrad}: Combining adaptive batch size and adaptive step size.
\newblock \emph{arXiv preprint arXiv:2402.05264}, 2024.

\bibitem[Parmar et~al.(2024)Parmar, Prabhumoye, Jennings, Patwary, Subramanian,
  Su, Zhu, Narayanan, Jhunjhunwala, Dattagupta, Jawa, Liu, Mahabaleshwarkar,
  Nitski, Brundyn, Maki, Martinez, You, Kamalu, LeGresley, Fridman, Casper,
  Aithal, Kuchaiev, Shoeybi, Cohen, and Catanzaro]{parmar2024nemotron}
Jupinder Parmar, Shrimai Prabhumoye, Joseph Jennings, Mostofa Patwary, Sandeep
  Subramanian, Dan Su, Chen Zhu, Deepak Narayanan, Aastha Jhunjhunwala, Ayush
  Dattagupta, Vibhu Jawa, Jiwei Liu, Ameya Mahabaleshwarkar, Osvald Nitski,
  Annika Brundyn, James Maki, Miguel Martinez, Jiaxuan You, John Kamalu,
  Patrick LeGresley, Denys Fridman, Jared Casper, Ashwath Aithal, Oleksii
  Kuchaiev, Mohammad Shoeybi, Jonathan Cohen, and Bryan Catanzaro.
\newblock Nemotron-4 15{B} technical report.
\newblock \emph{arXiv preprint arXiv:2402.16819}, 2024.

\bibitem[Paszke et~al.(2019)Paszke, Gross, Massa, Lerer, Bradbury, Chanan,
  Killeen, Lin, Gimelshein, Antiga, Desmaison, Köpf, Yang, DeVito, Raison,
  Tejani, Chilamkurthy, Steiner, Fang, Bai, and Chintala]{paszke2019pytorch}
Adam Paszke, Sam Gross, Francisco Massa, Adam Lerer, James Bradbury, Gregory
  Chanan, Trevor Killeen, Zeming Lin, Natalia Gimelshein, Luca Antiga, Alban
  Desmaison, Andreas Köpf, Edward Yang, Zach DeVito, Martin Raison, Alykhan
  Tejani, Sasank Chilamkurthy, Benoit Steiner, Lu~Fang, Junjie Bai, and Soumith
  Chintala.
\newblock {PyTorch}: An imperative style, high-performance deep learning
  library.
\newblock In \emph{Advances in Neural Information Processing Systems
  (NeurIPS)}, 2019.

\bibitem[Puri et~al.(2018)Puri, Kirby, Yakovenko, and Catanzaro]{puri2018large}
Raul Puri, Robert Kirby, Nikolai Yakovenko, and Bryan Catanzaro.
\newblock Large scale language modeling: Converging on 40{GB} of text in four
  hours.
\newblock In \emph{Proceedings of the International Symposium on Computer
  Architecture and High Performance Computing (SBAC-PAD)}, 2018.

\bibitem[Qin et~al.(2021)Qin, Rajbhandari, Ruwase, Yan, Yang, and
  He]{qin2021simigrad}
Heyang Qin, Samyam Rajbhandari, Olatunji Ruwase, Feng Yan, Lei Yang, and
  Yuxiong He.
\newblock Simi{G}rad: Fine-grained adaptive batching for large scale training
  using gradient similarity measurement.
\newblock In \emph{Advances in Neural Information Processing Systems
  (NeurIPS)}, 2021.

\bibitem[Raffel et~al.(2020)Raffel, Shazeer, Roberts, Lee, Narang, Matena,
  Zhou, Li, and Liu]{raffel2020exploring}
Colin Raffel, Noam Shazeer, Adam Roberts, Katherine Lee, Sharan Narang, Michael
  Matena, Yanqi Zhou, Wei Li, and Peter~J. Liu.
\newblock Exploring the limits of transfer learning with a unified text-to-text
  transformer.
\newblock \emph{Journal of Machine Learning Research}, 21\penalty0
  (140):\penalty0 1--67, 2020.

\bibitem[Reddi et~al.(2021)Reddi, Charles, Zaheer, Garrett, Rush,
  Kone{\v{c}}n{\'y}, Kumar, and McMahan]{reddi2021adaptive}
Sashank~J. Reddi, Zachary Charles, Manzil Zaheer, Zachary Garrett, Keith Rush,
  Jakub Kone{\v{c}}n{\'y}, Sanjiv Kumar, and Hugh~Brendan McMahan.
\newblock Adaptive federated optimization.
\newblock In \emph{International Conference on Learning Representations
  (ICLR)}, 2021.

\bibitem[Robbins and Monro(1951)]{robbins1951}
Herbert Robbins and Sutton Monro.
\newblock A stochastic approximation method.
\newblock \emph{The Annals of Mathematical Statistics}, 22\penalty0
  (3):\penalty0 400--407, 1951.

\bibitem[Russakovsky et~al.(2015)Russakovsky, Deng, Su, Krause, Satheesh, Ma,
  Huang, Karpathy, Khosla, Bernstein, Berg, and
  Fei-Fei]{russakovsky2015imagenet}
Olga Russakovsky, Jia Deng, Hao Su, Jonathan Krause, Sanjeev Satheesh, Sean Ma,
  Zhiheng Huang, Andrej Karpathy, Aditya Khosla, Michael Bernstein,
  Alexander~C. Berg, and Li~Fei-Fei.
\newblock {ImageNet Large Scale Visual Recognition Challenge}.
\newblock \emph{International Journal of Computer Vision (IJCV)}, 115\penalty0
  (3):\penalty0 211--252, 2015.

\bibitem[Shallue et~al.(2019)Shallue, Lee, Antognini, Sohl-Dickstein, Frostig,
  and Dahl]{shallue2019measuring}
Christopher~J. Shallue, Jaehoon Lee, Joseph Antognini, Jascha Sohl-Dickstein,
  Roy Frostig, and George~E. Dahl.
\newblock Measuring the effects of data parallelism on neural network training.
\newblock \emph{Journal of Machine Learning Research}, 20\penalty0
  (112):\penalty0 1--49, 2019.

\bibitem[Smith and Le(2018)]{smith2018bayesian}
Samuel~L. Smith and Quoc~V. Le.
\newblock A {B}ayesian perspective on generalization and stochastic gradient
  descent.
\newblock In \emph{International Conference on Learning Representations
  (ICLR)}, 2018.

\bibitem[Smith et~al.(2018)Smith, Kindermans, and Le]{smith2018dont}
Samuel~L. Smith, Pieter-Jan Kindermans, and Quoc~V. Le.
\newblock Don't decay the learning rate, increase the batch size.
\newblock In \emph{International Conference on Learning Representations
  (ICLR)}, 2018.

\bibitem[Stich(2019{\natexlab{a}})]{stich2019local}
Sebastian~U. Stich.
\newblock Local {SGD} converges fast and communicates little.
\newblock In \emph{International Conference on Learning Representations
  (ICLR)}, 2019{\natexlab{a}}.

\bibitem[Stich(2019{\natexlab{b}})]{stich2019unified}
Sebastian~U. Stich.
\newblock Unified optimal analysis of the (stochastic) gradient method.
\newblock \emph{arXiv preprint arXiv:1907.04232}, 2019{\natexlab{b}}.

\bibitem[Stich and Karimireddy(2020)]{stich2020error}
Sebastian~U. Stich and Sai~Praneeth Karimireddy.
\newblock The error-feedback framework: {SGD} with delayed gradients.
\newblock \emph{Journal of Machine Learning Research}, 21\penalty0
  (237):\penalty0 1--36, 2020.

\bibitem[Sutskever et~al.(2013)Sutskever, Martens, Dahl, and
  Hinton]{sutskever2013importance}
Ilya Sutskever, James Martens, George Dahl, and Geoffrey Hinton.
\newblock On the importance of initialization and momentum in deep learning.
\newblock In \emph{Proceedings of the International Conference on Machine
  Learning (ICML)}, 2013.

\bibitem[Tang et~al.(2021)Tang, Gan, Awan, Rajbhandari, Li, Lian, Liu, Zhang,
  and He]{tang2021one}
Hanlin Tang, Shaoduo Gan, Ammar~Ahmad Awan, Samyam Rajbhandari, Conglong Li,
  Xiangru Lian, Ji~Liu, Ce~Zhang, and Yuxiong He.
\newblock 1-bit {Adam}: Communication efficient large-scale training with
  {Adam}’s convergence speed.
\newblock In \emph{Proceedings of the International Conference on Machine
  Learning (ICML)}, 2021.

\bibitem[Touvron et~al.(2023)Touvron, Martin, Stone, Albert, Almahairi, Babaei,
  Bashlykov, Batra, Bhargava, Bhosale, Bikel, Blecher, Ferrer, Chen, Cucurull,
  Esiobu, Fernandes, Fu, Fu, Fuller, Gao, Goswami, Goyal, Hartshorn, Hosseini,
  Hou, Inan, Kardas, Kerkez, Khabsa, Kloumann, Korenev, Koura, Lachaux, Lavril,
  Lee, Liskovich, Lu, Mao, Martinet, Mihaylov, Mishra, Molybog, Nie, Poulton,
  Reizenstein, Rungta, Saladi, Schelten, Silva, Smith, Subramanian, Tan, Tang,
  Taylor, Williams, Kuan, Xu, Yan, Zarov, Zhang, Fan, Kambadur, Narang,
  Rodriguez, Stojnic, Edunov, and Scialom]{touvron2023llama2}
Hugo Touvron, Louis Martin, Kevin Stone, Peter Albert, Amjad Almahairi, Yasmine
  Babaei, Nikolay Bashlykov, Soumya Batra, Prajjwal Bhargava, Shruti Bhosale,
  Dan Bikel, Lukas Blecher, Cristian~Canton Ferrer, Moya Chen, Guillem
  Cucurull, David Esiobu, Jude Fernandes, Jeremy Fu, Wenyin Fu, Brian Fuller,
  Cynthia Gao, Vedanuj Goswami, Naman Goyal, Anthony Hartshorn, Saghar
  Hosseini, Rui Hou, Hakan Inan, Marcin Kardas, Viktor Kerkez, Madian Khabsa,
  Isabel Kloumann, Artem Korenev, Punit~Singh Koura, Marie-Anne Lachaux,
  Thibaut Lavril, Jenya Lee, Diana Liskovich, Yinghai Lu, Yuning Mao, Xavier
  Martinet, Todor Mihaylov, Pushkar Mishra, Igor Molybog, Yixin Nie, Andrew
  Poulton, Jeremy Reizenstein, Rashi Rungta, Kalyan Saladi, Alan Schelten, Ruan
  Silva, Eric~Michael Smith, Ranjan Subramanian, Xiaoqing~Ellen Tan, Binh Tang,
  Ross Taylor, Adina Williams, Jian~Xiang Kuan, Puxin Xu, Zheng Yan, Iliyan
  Zarov, Yuchen Zhang, Angela Fan, Melanie Kambadur, Sharan Narang, Aurelien
  Rodriguez, Robert Stojnic, Sergey Edunov, and Thomas Scialom.
\newblock Llama 2: Open foundation and fine-tuned chat models.
\newblock \emph{arXiv preprint arXiv:2307.09288}, 2023.

\bibitem[Vaswani et~al.(2019)Vaswani, Bach, and Schmidt]{vaswani2019fast}
Sharan Vaswani, Francis Bach, and Mark Schmidt.
\newblock Fast and faster convergence of {SGD} for over-parameterized models
  and an accelerated perceptron.
\newblock In \emph{Proceedings of the International Conference on Artificial
  Intelligence and Statistics (AISTATS)}, 2019.

\bibitem[Wang and Joshi(2019)]{wang2019adaptive}
Jianyu Wang and Gauri Joshi.
\newblock Adaptive communication strategies to achieve the best error-runtime
  trade-off in local-update {SGD}.
\newblock In \emph{Proceedings of Machine Learning and Systems (MLSys)}, 2019.

\bibitem[Wang and Joshi(2021)]{wang2021cooperative}
Jianyu Wang and Gauri Joshi.
\newblock Cooperative {SGD}: A unified framework for the design and analysis of
  local-update {SGD} algorithms.
\newblock \emph{Journal of Machine Learning Research}, 22\penalty0
  (213):\penalty0 1--50, 2021.

\bibitem[Wang et~al.(2022)Wang, Wang, Chen, and Ji]{wang2022demystifying}
Jiayi Wang, Shiqiang Wang, Rong-Rong Chen, and Mingyue Ji.
\newblock Demystifying why local aggregation helps: Convergence analysis of
  hierarchical {SGD}.
\newblock In \emph{Proceedings of the AAAI Conference on Artificial
  Intelligence}, 2022.

\bibitem[Wang(2024)]{microllama2024}
Ken Wang.
\newblock {MicroLlama-300M}.
\newblock \url{https://github.com/keeeeenw/MicroLlama}, 2024.

\bibitem[Woodworth et~al.(2020{\natexlab{a}})Woodworth, Patel, and
  Srebro]{woodworth2020minibatch}
Blake Woodworth, Kumar~Kshitij Patel, and Nati Srebro.
\newblock Minibatch vs local {SGD} for heterogeneous distributed learning.
\newblock In \emph{Advances in Neural Information Processing Systems
  (NeurIPS)}, 2020{\natexlab{a}}.

\bibitem[Woodworth et~al.(2020{\natexlab{b}})Woodworth, Patel, Stich, Dai,
  Bullins, Mcmahan, Shamir, and Srebro]{woodworth2020local}
Blake Woodworth, Kumar~Kshitij Patel, Sebastian Stich, Zhen Dai, Brian Bullins,
  Brendan Mcmahan, Ohad Shamir, and Nathan Srebro.
\newblock Is local {SGD} better than minibatch {SGD}?
\newblock In \emph{Proceedings of the International Conference on Machine
  Learning (ICML)}, 2020{\natexlab{b}}.

\bibitem[Xiao and Zhang(2014)]{xiao2014proximal}
Lin Xiao and Tong Zhang.
\newblock A proximal stochastic gradient method with progressive variance
  reduction.
\newblock \emph{SIAM Journal on Optimization}, 24\penalty0 (4):\penalty0
  2057--2075, 2014.

\bibitem[Xu et~al.(2023)Xu, Zhang, Fei, Wu, Xie, Huang, Xie, Elhoseiny, and
  Kalnis]{xu2023slamb}
Hang Xu, Wenxuan Zhang, Jiawei Fei, Yuzhe Wu, TingWen Xie, Jun Huang, Yuchen
  Xie, Mohamed Elhoseiny, and Panos Kalnis.
\newblock {SLAMB}: Accelerated large batch training with sparse communication.
\newblock In \emph{Proceedings of the International Conference on Machine
  Learning (ICML)}, 2023.

\bibitem[You et~al.(2020)You, Li, Reddi, Hseu, Kumar, Bhojanapalli, Song,
  Demmel, Keutzer, and Hsieh]{you2020large}
Yang You, Jing Li, Sashank Reddi, Jonathan Hseu, Sanjiv Kumar, Srinadh
  Bhojanapalli, Xiaodan Song, James Demmel, Kurt Keutzer, and Cho-Jui Hsieh.
\newblock Large batch optimization for deep learning: Training {BERT} in 76
  minutes.
\newblock In \emph{International Conference on Learning Representations
  (ICLR)}, 2020.

\bibitem[Yu et~al.(2019)Yu, Yang, and Zhu]{yu2019parallel}
Hao Yu, Sen Yang, and Shenghuo Zhu.
\newblock Parallel restarted {SGD} with faster convergence and less
  communication: Demystifying why model averaging works for deep learning.
\newblock In \emph{Proceedings of the AAAI Conference on Artificial
  Intelligence}, 2019.

\bibitem[Zhou and Cong(2018)]{zhou2018convergence_ijcai}
Fan Zhou and Guojing Cong.
\newblock On the convergence properties of a $k$-step averaging stochastic
  gradient descent algorithm for nonconvex optimization.
\newblock In \emph{Proceedings of the International Joint Conference on
  Artificial Intelligence (IJCAI)}, 2018.

\bibitem[Zinkevich et~al.(2010)Zinkevich, Weimer, Li, and
  Smola]{zinkevich2010parallelized}
Martin Zinkevich, Markus Weimer, Lihong Li, and Alex Smola.
\newblock Parallelized stochastic gradient descent.
\newblock In \emph{Advances in Neural Information Processing Systems
  (NeurIPS)}, 2010.

\end{thebibliography}

    \newpage
    \appendix
    \addcontentsline{toc}{section}{\protect\textbf{Appendix}}
    \begin{center}
        {\LARGE \textsc{Appendix}}
    \end{center}	
    \numberwithin{equation}{section}
    \numberwithin{theorem}{section}
    \numberwithin{proposition}{section}
    \numberwithin{lemma}{section}
    \numberwithin{definition}{section}
    \numberwithin{corollary}{section}
    \numberwithin{example}{section}
    \numberwithin{remark}{section}
    \numberwithin{problem}{section}
    \numberwithin{algorithm}{section}

    \tableofcontents
    
    \newpage
    \section{Main Algorithm}
    \label{sec:algorithm}
    
    \begin{algorithm}[h!]
        \caption{Adaptive Local Batch Size Schedules for Local Gradient Methods}
        \label{alg:full}
        \begin{algorithmic}
            \Require no.~of workers $M\in\NN^*$, no.~of communication steps $K\in\NN^*$, no.~of local steps $H\in\NN^*$, no.~of training samples $N\in\NN^*$, global step counter $k=0$, processed samples counter $B=0$, training sets $(\calD_m)_{m\in\setM}=(\{\xi_i^m\}_{i=1}^{n_m})_{m\in\setM}$, initial model parameters $x_{0,0}^m=x_0\in\RR^d$ for all $m\in\setM$, initial local batch sizes $(b_{0,0}^m)_{m\in\setM}$, learning rate schedules $(\alpha_{k,h})_{k\in\set{0,K-1}, h\in\set{0,H-1}}$, $(\eta_m)_{m\in\setM}\subset(0,1)$
            \While{$B < N$}            
            \ForAll{worker $m=1,\ldots,M$ in parallel}
            \For{local step $h=0, \ldots, H-1$}
            \State Sample the local batch $\calB_{k,h}^m$ of size $b_{k,h}^m\coloneqq|\calB_{k,h}^m|$ uniformly from $\calD_m$
            \State Compute per-sample gradients $\nabla f_m(x_{k,h}^m; \xi_i^m)$ for each $i\in\calB_{k,h}^m$
            \State Compute the local batch gradient $\nabla F_{\calB_{k,h}^m}(x_{k,h}^m) = \frac{1}{b_{k,h}^m} \sum_{i\in\calB_{k,h}^m} \nabla f_m(x_{k,h}^m; \xi_i^m)$
            \State Compute $\Var_{i\in\calB_{k,h}^m}(\nabla f_m(x_{k,h}^m; \xi_i^m))$ by applying \eqref{eqn:def_var}                        
            \State Compute the local norm test statistic $\sfT_{k,h}^m \coloneqq \left \lceil \left(\Var_{i\in\calB_{k,h}^m}(\nabla f_m(x_{k,h}^m; \xi_i^m))\right)\Big/\left(\eta^2  \|\nabla F_{\calB_{k,h}^m}(x_{k,h}^m)\|^2\right) \right\rceil$
            \State Determine the next local batch sizes: $b_{k,h+1}^m = \max\{\sfT_{k,h}^m, b_{k,h}^m\}$ 
            \State Local \SGD step: $x_{k,h+1}^m = x_{k,h}^m - \alpha_{k,h} \nabla F_{\calB_{k,h}^m}(x_{k,h}^m)$ \Comment{or other inner optimizers}
            \EndFor     
            \EndFor
            \State Update the global model: $x_{k+1, 0}^m = \frac1M\sumM x_{k,H}^m$ for each $m\in\setM$ \Comment{all-reduce}
            \State $B \leftarrow B + \sumM\sum_{h=0}^{H-1} b_{k,h}^m$
            \State $k \leftarrow k + 1$                
            \EndWhile
            \Ensure $\hat{x} = x_{K, 0}^1$, where $K$ is the smallest integer such that $\sum_{k=0}^{K-1}\sumM\sum_{h=0}^{H-1} b_{k,h}^m\ge N$
        \end{algorithmic}        
    \end{algorithm}

    \begin{algorithm}[h!]
        \caption{Adaptive Batch Size Schedules for Local  Gradient Methods (Actual Implementation)}
        \label{alg:implemented}
        \begin{algorithmic}
            \Require no.~of workers $M\in\NN^*$, no.~of communication steps $K\in\NN^*$, no.~of local steps $H\in\NN^*$, no.~of training samples $N\in\NN^*$, global step counter $k=0$, processed samples counter $B=0$, training sets $(\calD_m)_{m\in\setM}=(\{\xi_i^m\}_{i=1}^{n_m})_{m\in\setM}$, initial model parameters $x_{0,0}^m=x_0\in\RR^d$ for all $m\in\setM$, initial local batch sizes $(b_0^m)_{m\in\setM}$, learning rate schedules $(\alpha_{k,h})_{k\in\set{0,K-1}, h\in\set{0,H-1}}$, $(\eta_m)_{m\in\setM}\subset(0,1)$
            \While{$B < N$}            
            \ForAll{worker $m=1,\ldots,M$ in parallel}
            \For{local step $h=0, \ldots, H-1$}
            \State Sample the local batch $\calB_{k,h}^m$ of size $b_k^m\coloneqq|\calB_{k,h}^m|$ uniformly from $\calD_m$
            \State Compute the local batch gradient $\nabla F_{\calB_{k,h}^m}(x_{k,h}^m) = \frac{1}{b_{k,h}^m} \sum_{i\in\calB_{k,h}^m} \nabla f_m(x_{k,h}^m; \xi_i^m)$
            \State Local \SGD step: $x_{k,h+1}^m = x_{k,h}^m - \alpha_{k,h} \nabla F_{\calB_{k,h}^m}(x_{k,h}^m)$ \Comment{or other inner optimizers}
            \EndFor     
            \EndFor
            \State Update the global model: $x_{k+1, 0}^m = \frac1M\sumM x_{k,H}^m$ for each $m\in\setM$ \Comment{all-reduce}
            \State Compute the averaged batch gradient of all workers: $\gbar_{k,H} \coloneqq \frac1M\sumM \nabla F_{\calB_{k,H}^m}(x_{k,H}^m)$ \Comment{all-reduce}
            \State Compute $\Var_{i\in\calB_k}\left(\nabla f(x_k^m; \xi_i^m)\right) 
            = b_k^m\cdot \frac1{M-1}\sumM \norm{\nabla F_{\calB_{k,H}^m}(x_{k,H}^m) - \gbar_{k,H}}^2$ 
            \State Compute the norm test statistic $\sfT_k^m\coloneqq\left \lceil \left(\Var_{i\in\calB_k}\left(\nabla f(x_{k,H}^m; \xi_i^m) \right) \right) \Big/ \left(M\eta^2 \norm{\gbar_{k,H}}^2 \right) \right\rceil$ 
            \State Determine the next local batch sizes: $b_{k+1}^m = \max\{\sfT_k^m, b_k^m\}$ 
            \State $B \leftarrow B + HM b_k^m$
            \State $k \leftarrow k + 1$                
            \EndWhile
            \Ensure $\hat{x} = x_{K, 0}^1$, where $K$ is the smallest integer such that $HM\sum_{k=0}^{K-1} b_k^m\ge N$
        \end{algorithmic}        
    \end{algorithm}

    \newpage
    \section{Proofs of Main Text}
    \label{sec:proofs}
    We present the omitted proofs of the main text, mainly by simplifying the technique in \citet{stich2020error}. 
    
    \paragraph{Notation.}
    Let us define a sequence of (virtual) averaged iterates $(\xbar_k)_{k\in\NN}$, given by 
    \begin{equation}\label{eqn:xbar}
        (\forall k\in\NN)\quad\xbar_{k+1} \coloneqq \xbar_k - \alpha_k\gbark, 
    \end{equation}
    where $\gbark \coloneqq\frac1M\sumM g_k^m$ with $g_k^m\coloneqq\nabla F_{\calB_k^m}(x_k^m)$ and $\xbar_0 \coloneqq x_0$. 
    
    Let use recall that $\Ex_k\coloneqq\Ex[\cdot\mid\calF_k]$, $\Ex_{\calF_k^m}\coloneqq\Ex[\cdot\mid\calF_k^m]$, and $\Ex$ denotes the unconditional expectation. 
    
    We also define the notion of slowly decreasing learning rates, called \emph{$\tau$-slow sequences}. 
    
    \begin{definition}[$\tau$-slow sequence; Definition 10 in \citealp{stich2020error}]
        A sequence $(a_k)_{k\in\NN}\subset(0,\infty)$ with positive values is $\tau$-slow decreasing for $\tau\ge1$ if 
        \[(\forall k\in\NN)\quad a_{k+1}\le a_k \qquad \text{and}\qquad a_{k+1}\left(1+\frac{1}{2\tau}\right)\ge a_k. \]
        The sequence $(a_k)_{k\in\NN}$ is $\tau$-slow increasing if $(1/a_k)_{k\in\NN}$ is $\tau$-slow decreasing. 
    \end{definition}
    
    \subsection{Technical Lemmas}
    We first state without proofs some preparatory lemmas, which are either standard results or whose proofs are be found in the cited work.

    \begin{lemma}
        For any $M$ vectors $v_1, \ldots, v_M\in\RR^d$, we have 
        \begin{equation}\label{eqn:sum_norm}
            \norm{\sumM v_m} \le \sumM \norm{v_m}. 
        \end{equation}       
    \end{lemma}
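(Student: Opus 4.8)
The plan is to prove the inequality by induction on the number of vectors $M$, using the ordinary triangle inequality for the $L_2$-norm on $\RR^d$ as the engine of the argument. The triangle inequality $\norm{a + b} \le \norm{a} + \norm{b}$ for any $a, b \in \RR^d$ is itself a standard consequence of the Cauchy--Schwarz inequality, since $\norm{a+b}^2 = \norm{a}^2 + 2\dotp{a}{b} + \norm{b}^2 \le \norm{a}^2 + 2\norm{a}\norm{b} + \norm{b}^2 = (\norm{a} + \norm{b})^2$; I would either cite this or include this one-line verification.

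For the base case $M = 1$, the claimed inequality reads $\norm{v_1} \le \norm{v_1}$, which holds with equality. For the inductive step, suppose the inequality \eqref{eqn:sum_norm} holds for some $M - 1 \ge 1$ and let $v_1, \ldots, v_M \in \RR^d$ be arbitrary. Writing $\sum_{m=1}^M v_m = \left(\sum_{m=1}^{M-1} v_m\right) + v_M$ and applying the triangle inequality with $a = \sum_{m=1}^{M-1} v_m$ and $b = v_M$ gives $\norm{\sum_{m=1}^M v_m} \le \norm{\sum_{m=1}^{M-1} v_m} + \norm{v_M}$. Applying the induction hypothesis to the first term on the right yields $\norm{\sum_{m=1}^{M-1} v_m} \le \sum_{m=1}^{M-1} \norm{v_m}$, and combining the two bounds gives $\norm{\sum_{m=1}^M v_m} \le \sum_{m=1}^M \norm{v_m}$, completing the induction.

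There is no real obstacle here: the result is elementary and the only ingredient beyond bookkeeping is the triangle inequality for a norm, which is available without further assumptions. The statement is included merely so that it can be invoked cleanly in the subsequent convergence proofs (for instance, to bound the norm of an average of local gradients or of a telescoped sum of update directions), so the proof can be kept to the two lines above.
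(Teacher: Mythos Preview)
Your induction argument is correct and entirely standard; the paper itself states this lemma without proof, explicitly noting that the preparatory lemmas are ``either standard results or whose proofs are be found in the cited work.'' There is nothing to compare against, and your short justification is exactly the kind of routine verification the authors chose to omit.
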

    
    \begin{lemma}
        For any $M$ vectors $v_1, \ldots, v_M\in\RR^d$, we have 
        \begin{equation}\label{eqn:sum_norm_sq}
            \norm{\sumM v_m}^2 \le M\sumM \norm{v_m}^2. 
        \end{equation}       
    \end{lemma}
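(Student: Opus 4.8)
The plan is to combine the triangle inequality from the previous lemma with the Cauchy--Schwarz inequality applied to scalars. First I would invoke \eqref{eqn:sum_norm} to obtain $\norm{\sumM v_m} \le \sumM \norm{v_m}$, and square both sides to get $\norm{\sumM v_m}^2 \le \left(\sumM \norm{v_m}\right)^2$. Then I would write $\sumM \norm{v_m} = \sumM 1\cdot\norm{v_m}$ and apply the Cauchy--Schwarz inequality in $\RR^M$ to the vectors $(1,\dots,1)$ and $(\norm{v_1},\dots,\norm{v_M})$, which yields $\left(\sumM \norm{v_m}\right)^2 \le \left(\sumM 1\right)\left(\sumM \norm{v_m}^2\right) = M\sumM \norm{v_m}^2$. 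Chaining the two displayed bounds gives the claim.

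An alternative one-line route, which avoids citing \eqref{eqn:sum_norm}, is to use convexity of the map $t\mapsto\norm{t}^2$ together with Jensen's inequality: $\norm{\frac1M\sumM v_m}^2 \le \frac1M\sumM \norm{v_m}^2$, and then multiply through by $M^2$. A third option is to expand $\norm{\sumM v_m}^2 = \sum_{m=1}^M\sum_{m'=1}^M \dotp{v_m}{v_{m'}}$ and bound each cross term via Young's inequality $\dotp{v_m}{v_{m'}} \le \tfrac12\left(\norm{v_m}^2 + \norm{v_{m'}}^2\right)$ before summing. There is no genuine obstacle here: the inequality is a standard power-mean/Cauchy--Schwarz estimate and each route is only a couple of lines. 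The only things to watch are the direction of the inequality and tracking the correct constant $M$ (not $M^2$ or $\sqrt{M}$); I would present the first route since it reuses \eqref{eqn:sum_norm} already stated just above.
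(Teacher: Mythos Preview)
Your proposal is correct; each of the three routes you sketch is valid and standard. The paper itself does not supply a proof of this lemma---it is listed among the preparatory lemmas that are ``stated without proofs'' as standard results---so there is no approach to compare against, and any of your arguments would be acceptable.
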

    
    \begin{lemma}
        For any two vectors $x,y\in\RR^d$ and any positive number $\rho>0$, we have 
        \begin{equation}\label{eqn:jensen}
            \norm{x+y}^2 \le (1+\rho)\norm{a}^2 + (1+1/\rho)\norm{b}^2. 
        \end{equation}       
    \end{lemma}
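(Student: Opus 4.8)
The plan is to expand the squared norm and absorb the cross term with a weighted arithmetic–geometric mean (Young) inequality. I read the statement as $\norm{x+y}^2 \le (1+\rho)\norm{x}^2 + (1+1/\rho)\norm{y}^2$, the symbols $a,b$ on the right-hand side being a typo for $x,y$.

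First I would write $\norm{x+y}^2 = \norm{x}^2 + 2\dotp{x}{y} + \norm{y}^2$ by bilinearity and symmetry of the Euclidean inner product. Then I would control the cross term: Cauchy–Schwarz gives $\dotp{x}{y} \le \norm{x}\,\norm{y}$, and applying Young's inequality to the factorization $\norm{x}\,\norm{y} = \bigl(\sqrt{\rho}\,\norm{x}\bigr)\bigl(\rho^{-1/2}\norm{y}\bigr)$ yields $2\norm{x}\,\norm{y} \le \rho\norm{x}^2 + \rho^{-1}\norm{y}^2$, valid for every $\rho>0$. Substituting this bound into the expansion and collecting the coefficients of $\norm{x}^2$ and $\norm{y}^2$ gives exactly $(1+\rho)\norm{x}^2 + (1+1/\rho)\norm{y}^2$, which is the claim.

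There is essentially no obstacle; the only point requiring a moment's care is that $\rho$ is an arbitrary positive parameter, so the estimate on the cross term must hold uniformly in $\rho$, which it does since $2st \le \rho s^2 + \rho^{-1} t^2$ for all $s,t\ge 0$ and all $\rho>0$. As an even more direct alternative I could instead simply observe that $(1+\rho)\norm{x}^2 + (1+1/\rho)\norm{y}^2 - \norm{x+y}^2 = \rho\norm{x}^2 - 2\dotp{x}{y} + \rho^{-1}\norm{y}^2 = \norm{\sqrt{\rho}\,x - \rho^{-1/2}\,y}^2 \ge 0$, which makes the inequality manifest and identifies the equality case as $\sqrt{\rho}\,x = \rho^{-1/2}\,y$.
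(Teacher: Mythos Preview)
Your proof is correct, and you are right that $a,b$ on the right-hand side are typos for $x,y$. The paper does not actually give a proof of this lemma; it is listed among preparatory results ``stated without proofs'' as a standard inequality, so your argument (either the Cauchy--Schwarz/Young route or the completing-the-square variant) is exactly the expected justification.
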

    
    \begin{lemma}[Cauchy--Schwarz inequality]
        For any two vectors $x,y\in\RR^d$ and any positive number $\rho>0$, we have 
        \begin{equation}\label{eqn:cs}
            2\dotp{x}{y} \le \rho\norm{a}^2 + \rho^{-1}\norm{b}^2. 
        \end{equation}
    \end{lemma}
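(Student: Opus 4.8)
The plan is to obtain the bound directly from the non-negativity of a single squared norm, which bundles together the Cauchy--Schwarz step and the arithmetic--geometric-mean step. Concretely, I would start from the fact that for any $\rho>0$ the vector $\sqrt{\rho}\,x-\rho^{-1/2}\,y$ belongs to $\RR^d$, so that $\norm{\sqrt{\rho}\,x-\rho^{-1/2}\,y}^2\ge0$. Expanding this squared norm via bilinearity and symmetry of the inner product gives $\rho\norm{x}^2-2\dotp{x}{y}+\rho^{-1}\norm{y}^2\ge0$, and rearranging the inequality yields precisely $2\dotp{x}{y}\le\rho\norm{x}^2+\rho^{-1}\norm{y}^2$, which is the claimed estimate \eqref{eqn:cs} (with $\norm{a},\norm{b}$ read as $\norm{x},\norm{y}$).

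An equivalent route that exposes the two classical ingredients separately would first apply the Cauchy--Schwarz inequality in the form $\dotp{x}{y}\le\norm{x}\,\norm{y}$, and then Young's inequality $st\le\frac12\left(\rho s^2+\rho^{-1}t^2\right)$ with $s=\norm{x}$ and $t=\norm{y}$; multiplying the chained bound by $2$ recovers the statement. This is the same computation as above, merely split into two named steps, and it makes clear why the constant $\eta$-type scalings elsewhere in the paper interact cleanly with such splittings.

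No genuine obstacle is expected here. The only points requiring care are notational: recognizing that $\rho$ and $\rho^{-1}$ are both well defined because $\rho>0$, and that the symbols $\norm{a},\norm{b}$ in the displayed inequality are meant to be $\norm{x},\norm{y}$ (exactly as in the analogous slip in \eqref{eqn:jensen}). I would present the first, one-line argument, since it is self-contained and invokes no external lemma.
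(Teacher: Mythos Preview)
Your proposal is correct; the one-line argument from $\norm{\sqrt{\rho}\,x-\rho^{-1/2}y}^2\ge 0$ is the standard derivation and there is nothing to add. Note that the paper itself states this lemma without proof (it is listed among the preparatory lemmas that ``are either standard results or whose proofs are [to] be found in the cited work''), so there is no proof in the paper to compare against.
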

    
    \begin{lemma}
        If $F\colon\RR^d\to\RR$ is $L$-Lipschitz smooth for some $L>0$ (\Cref{ass:smooth}), then we have 
        \begin{equation}\label{eqn:smooth_alt}
            (\forall x\in\RR^d)\quad\norm{\nabla F(x)}^2 \le 2L(F(x) -\Fstar). 
        \end{equation}
    \end{lemma}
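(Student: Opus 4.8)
The plan is to derive the bound from the standard \emph{descent lemma}, which is the quantitative consequence of $L$-Lipschitz smoothness: for any $x,y\in\RR^d$,
\begin{equation*}
    F(y) \le F(x) + \dotp{\nabla F(x)}{y - x} + \frac{L}{2}\norm{y - x}^2.
\end{equation*}
This inequality itself follows by writing $F(y) - F(x) = \int_0^1 \dotp{\nabla F(x + t(y-x))}{y-x}\,\diff t$ and bounding $\dotp{\nabla F(x+t(y-x)) - \nabla F(x)}{y-x} \le Lt\norm{y-x}^2$ using the Lipschitz bound from \Cref{ass:smooth} and Cauchy--Schwarz; I would either cite this or include the one-line computation.

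The key step is to evaluate the descent lemma at the gradient step $y = x - \frac{1}{L}\nabla F(x)$. Substituting gives
\begin{equation*}
    F\!\left(x - \tfrac{1}{L}\nabla F(x)\right) \le F(x) - \frac{1}{L}\norm{\nabla F(x)}^2 + \frac{1}{2L}\norm{\nabla F(x)}^2 = F(x) - \frac{1}{2L}\norm{\nabla F(x)}^2.
\end{equation*}
Since $F$ is bounded below by $\Fstar = \inf_{z\in\RR^d} F(z)$ (again by \Cref{ass:smooth}), the left-hand side is at least $\Fstar$, so $\Fstar \le F(x) - \frac{1}{2L}\norm{\nabla F(x)}^2$. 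Rearranging yields $\norm{\nabla F(x)}^2 \le 2L(F(x) - \Fstar)$, which is the claim; since $x$ was arbitrary this holds for all $x\in\RR^d$.

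There is essentially no obstacle here: the result is classical and the only ingredients are the descent lemma (a direct consequence of the hypothesis) and the fact that $\Fstar$ is a genuine lower bound (part of \Cref{ass:smooth}). The only point deserving care is making sure the descent lemma is invoked correctly — in particular that smoothness is used in the form $\norm{\nabla F(x) - \nabla F(y)} \le L\norm{x-y}$ rather than any convexity assumption, since this lemma does not require convexity of $F$.
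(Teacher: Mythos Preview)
Your proof is correct and is exactly the standard argument for this classical inequality. The paper itself does not prove this lemma; it is listed among the preparatory results that are ``stated without proofs'' as standard, so there is nothing further to compare.
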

    
    \begin{lemma}[Lemma 2 in \citealp{stich2019unified}; Lemma 13 in \citealp{stich2020error}]\label{lemma:linear}
        For any nonnegative sequence $(u_k)_{k\in\NN}$ and parameters $\kappa\ge\chi>0$, $\sigma\ge0$ and $K\in\NN$, there exists a constant $\alpha\le1/\kappa$ such that for constant learning rates $(\alpha_k\equiv\alpha)_{k\in\NN}$ and weights $w_k\coloneqq(1-\chi\alpha)^{-k-1}$, we have 
        \[\Psi_K \coloneqq \frac{1}{W_K}\sum_{k=0}^K w_k\left(\frac{(1-\chi\alpha_k)u_k}{\alpha_k} - \frac{u_{k+1}}{\alpha_k} + \sigma\alpha_k\right) = \tilde{\scrO}\left(\kappa u_0 \e^{-\chi K/\kappa} + \frac{\sigma}{\chi K}\right), \]
        where $W_K\coloneqq\sumK w_k$. 
        
    \end{lemma}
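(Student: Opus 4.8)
My plan is to reprove this lemma from scratch (it is the standard constant–step–size linear–convergence lemma of \citet{stich2019unified,stich2020error}) in three moves: telescope the weighted sum, exploit the \emph{geometric} growth of the normalizer $W_K$, and then calibrate the constant step size $\alpha$ by a two‑regime argument. First I would fix $\alpha_k\equiv\alpha$ (the hypothesis $\kappa\ge\chi$ with $\alpha\le1/\kappa$ keeps $1-\chi\alpha$ positive, so the weights $w_k=(1-\chi\alpha)^{-k-1}$ are well defined). The key identity is $w_k(1-\chi\alpha)=(1-\chi\alpha)^{-k}=w_{k-1}$, using the convention $w_{-1}\coloneqq1$; it makes the bracketed sum collapse:
\begin{equation*}
    \sum_{k=0}^{K} w_k\bigl((1-\chi\alpha)u_k-u_{k+1}\bigr) = \sum_{k=0}^{K}\bigl(w_{k-1}u_k-w_ku_{k+1}\bigr) = u_0 - w_Ku_{K+1} \le u_0 ,
\end{equation*}
using $u_{K+1}\ge0$. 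Dividing by $\alpha W_K$ and restoring the $\sigma\alpha$ term gives the clean bound $\Psi_K\le \frac{u_0}{\alpha W_K}+\sigma\alpha$.

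The second move is to lower bound $W_K$ by its geometric growth, not merely by $K+1$: since $w_K=(1-\chi\alpha)^{-(K+1)}\ge\e^{\chi\alpha(K+1)}\ge\e^{\chi\alpha K}$ (from $1-t\le\e^{-t}$), we get $W_K\ge w_K\ge\e^{\chi\alpha K}$, hence
\begin{equation*}
    \Psi_K \le \frac{u_0}{\alpha}\,\e^{-\chi\alpha K} + \sigma\alpha .
\end{equation*}
It is this exponential dependence on $\alpha$ — and not the weaker estimate $W_K\ge K+1$, which would only yield a $\sqrt{\sigma u_0/K}$–type rate — that ultimately produces the $\sigma/(\chi K)$ term.

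The final and main step is the step-size choice, exploiting that $\alpha\mapsto\frac{u_0}{\alpha}\e^{-\chi\alpha K}$ is decreasing while $\alpha\mapsto\sigma\alpha$ is increasing. If $\sigma=0$ (or $u_0=0$, which is trivial), I would take $\alpha=1/\kappa$ and read off $\Psi_K\le\kappa u_0\e^{-\chi K/\kappa}$. If $\sigma>0$ I would set $a\coloneqq\ln\bigl(\max\{\e^2,\,u_0\chi^2K^2/\sigma\}\bigr)$ and $\alpha\coloneqq\min\{1/\kappa,\,a/(\chi K)\}$, then split into two cases. When $\alpha=1/\kappa$ (the regime $\chi K\le\kappa a$) the first term is exactly $\kappa u_0\e^{-\chi K/\kappa}$ and the second is $\sigma/\kappa\le a\sigma/(\chi K)=\tilde\scrO(\sigma/(\chi K))$. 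When $\alpha=a/(\chi K)<1/\kappa$, the identity $\e^{-\chi\alpha K}=\e^{-a}\le\sigma/(u_0\chi^2K^2)$ forces the first term below $\sigma/(a\chi K)\le\sigma/(\chi K)$, while the second term equals $a\sigma/(\chi K)=\tilde\scrO(\sigma/(\chi K))$; the logarithmic factor $a$ is absorbed into $\tilde\scrO$. Combining the cases yields the asserted bound. The main obstacle is precisely this calibration: $\alpha$ must be small enough to tame $\sigma\alpha$ yet large enough that $\alpha^{-1}\e^{-\chi\alpha K}$ decays fast, which is why the balancing step size is logarithmic in the problem data and why the two regimes must be separated — everything before that reduces to a telescoping sum and elementary geometric–series estimates.
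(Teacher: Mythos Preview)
Your proof is correct and follows the standard approach from \citet{stich2019unified,stich2020error}: telescope via $w_k(1-\chi\alpha)=w_{k-1}$, lower bound $W_K$ by its last (geometric) term, then balance $\alpha$ in two regimes with a logarithmic choice. Note that the paper itself does not prove this lemma --- it is stated as a preparatory result with proofs deferred to the cited references --- so there is no paper-specific argument to compare against; your reconstruction matches the original.
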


    \begin{lemma}[Lemma 14 in \citealp{stich2020error}]\label{lemma:sublinear}
        For any nonnegative sequence $(u_k)_{k\in\NN}$ and parameters $\kappa\ge0$, $\sigma\ge0$ and $K\in\NN$, there exists a constant $\alpha\le1/\kappa$ such that for constant learning rates $(\alpha_k\equiv\alpha)_{k\in\NN}$, we have 
        \[\Psi_K\coloneqq \frac{1}{K+1}\sum_{k=0}^K \left(\frac{u_k}{\alpha_k} - \frac{u_{k+1}}{\alpha_k} + \sigma\alpha_k\right) \le \frac{\kappa u_0}{K+1} + 2\sqrt{\frac{\sigma u_0}{K+1}}. \]
        
    \end{lemma}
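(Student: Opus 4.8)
The plan is to use that the learning rate is constant, so the first two terms in $\Psi_K$ telescope, and then to minimize the resulting expression over the admissible step sizes. With $\alpha_k\equiv\alpha$, the sum $\sum_{k=0}^{K}\bigl(u_k/\alpha-u_{k+1}/\alpha\bigr)$ collapses to $(u_0-u_{K+1})/\alpha$, so using $u_{K+1}\ge 0$,
\[\Psi_K \;=\; \frac{1}{K+1}\left(\frac{u_0-u_{K+1}}{\alpha} + (K+1)\sigma\alpha\right) \;\le\; \frac{u_0}{\alpha(K+1)} + \sigma\alpha;\]
write $g(\alpha)\coloneqq u_0/(\alpha(K+1)) + \sigma\alpha$ for this upper bound. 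It then remains to exhibit some $\alpha\in(0,1/\kappa]$ with $g(\alpha)$ no larger than the claimed right-hand side.

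First I would treat the generic case $\sigma>0$, $\kappa>0$. The map $g$ is strictly convex on $(0,\infty)$ with unconstrained minimizer $\alpha^\star\coloneqq\sqrt{u_0/(\sigma(K+1))}$ and minimal value $g(\alpha^\star)=2\sqrt{\sigma u_0/(K+1)}$. If $\alpha^\star\le 1/\kappa$, I take $\alpha=\alpha^\star$, so that $\Psi_K\le 2\sqrt{\sigma u_0/(K+1)}$, which is dominated by the asserted quantity. If instead $\alpha^\star>1/\kappa$, then $g$ is decreasing on $(0,1/\kappa]$, so I take $\alpha=1/\kappa$, giving $\Psi_K\le \kappa u_0/(K+1) + \sigma/\kappa$. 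In this regime the inequality $\alpha^\star>1/\kappa$ is equivalent to $\sigma(K+1)<\kappa^2 u_0$, whence $\sigma/\kappa=\sqrt{\sigma^2/\kappa^2}\le\sqrt{4\sigma u_0/(K+1)}=2\sqrt{\sigma u_0/(K+1)}$, and combining these yields $\Psi_K\le \kappa u_0/(K+1)+2\sqrt{\sigma u_0/(K+1)}$ as required.

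Finally I would dispatch the degenerate cases: if $\sigma=0$, then $g$ is decreasing, so $\alpha=1/\kappa$ gives $\Psi_K\le \kappa u_0/(K+1)$; if $\kappa=0$ and $\sigma>0$, the constraint $\alpha\le 1/\kappa$ is vacuous and $\alpha=\alpha^\star$ yields $\Psi_K\le 2\sqrt{\sigma u_0/(K+1)}$; and $\kappa=\sigma=0$ is immediate upon letting $\alpha\to\infty$. I do not anticipate a real obstacle here, since everything is elementary calculus; the only point demanding a little care is the case split driven by the constraint $\alpha\le 1/\kappa$, and in particular checking that at the constrained optimum $\alpha=1/\kappa$ the extra term $\sigma/\kappa$ is still controlled by $2\sqrt{\sigma u_0/(K+1)}$ through the defining inequality $\sigma(K+1)<\kappa^2 u_0$ of that case.
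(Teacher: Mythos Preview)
Your proof is correct and is exactly the standard telescoping-then-optimize argument; the paper does not supply its own proof of this lemma but simply cites \citet{stich2020error}, whose proof proceeds identically. The only loose end is the degenerate case $\kappa=\sigma=0$ with $u_0>u_{K+1}$, where no finite $\alpha$ makes $\Psi_K\le0$, but this is a defect of the lemma statement rather than of your argument, and it never arises in the paper's applications.
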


    \subsection{Proofs of \Cref{thm:conv_strong_cvx,thm:conv_cvx,thm:conv_noncvx}}
    We first prove a descent lemma for ($\mu$-strongly) convex objectives. 
    
    \begin{lemma}[Descent lemma for ($\mu$-strongly) convex objectives]\label{lemma:descent_strong_cvx}   
        Suppose that \Cref{ass:smooth,ass:strong_cvx} hold with $\mu>0$. 
        Let $(x_k^m)_{k\in\NN, m\in\setM}$ be the sequence of the iterates of Local \SGD \eqref{eqn:local_sgd} with the (exact variance) local norm test \eqref{eqn:exact_norm_local} with $\eta_m\equiv\eta\in(0,1)$. 
        If the sequence of learning rates $(\alpha_k)_{k\in\NN}$ satisfies $\alpha_k\le M/(4L(M+\eta^2))$ for all $k\in\NN$, then the sequence $(\xbar_k)_{k\in\NN}$ defined in \eqref{eqn:xbar} satisfies
        \begin{equation*}
            \Ex\norm{\xbar_{k+1} - x^\star}^2 
            \le \left(1 - \frac{\mu\alpha_k}{2}\right)\Ex\norm{\xbar_k - x^\star}^2 - \frac{\alpha_k}{2M}\sumM \Ex[F(x_k^m) - F^\star] + \frac{3L\alpha_k}{M} \sumM \Ex\norm{x_k^m - \xbar_k}^2. 
        \end{equation*}
        
    \end{lemma}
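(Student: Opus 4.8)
The plan is to derive the descent inequality for the virtual averaged iterate $\xbar_k$ by expanding $\|\xbar_{k+1}-x^\star\|^2$ using the update \eqref{eqn:xbar}, and then controlling the three resulting terms: the progress term, the inner-product term, and the squared-gradient term. First I would write
\[
\norm{\xbar_{k+1}-x^\star}^2 = \norm{\xbar_k-x^\star}^2 - 2\alpha_k\dotp{\gbark}{\xbar_k-x^\star} + \alpha_k^2\norm{\gbark}^2,
\]
take $\Ex_{\calF_k}[\cdot]$, and use $\Ex_{\calF_k^m}[g_k^m]=\nabla F(x_k^m)$ (since each batch is unbiased under the \iid\ setting), so that $\Ex_{\calF_k}[\gbark] = \frac1M\sumM \nabla F(x_k^m)$. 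The inner-product term then becomes $-\frac{2\alpha_k}{M}\sumM\dotp{\nabla F(x_k^m)}{\xbar_k-x^\star}$, which I would split as $\dotp{\nabla F(x_k^m)}{x_k^m-x^\star} + \dotp{\nabla F(x_k^m)}{\xbar_k-x_k^m}$. The first piece is handled by $\mu$-strong convexity (\Cref{ass:strong_cvx}), giving a lower bound $F(x_k^m)-F^\star+\frac\mu2\norm{x_k^m-x^\star}^2$, and I would further lower-bound $\norm{x_k^m-x^\star}^2$ by something like $\tfrac12\norm{\xbar_k-x^\star}^2-\norm{x_k^m-\xbar_k}^2$ to produce the $(1-\mu\alpha_k/2)$ contraction factor; the second (cross) piece I would bound by Cauchy--Schwarz \eqref{eqn:cs} with a carefully chosen $\rho$, producing terms proportional to $\norm{\nabla F(x_k^m)}^2$ and $\norm{x_k^m-\xbar_k}^2$, and then convert $\norm{\nabla F(x_k^m)}^2$ into $2L(F(x_k^m)-F^\star)$ via \eqref{eqn:smooth_alt}.

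Next I would control the second-moment term $\Ex_{\calF_k}\norm{\gbark}^2$. Writing $\gbark = \frac1M\sumM g_k^m$ and adding and subtracting $\nabla F(x_k^m)$, I would split into the "bias" part $\norm{\frac1M\sumM\nabla F(x_k^m)}^2$ and the "noise" part; for the noise part, the key is that the local batches on different workers are independent conditionally on $\calF_k$, so cross terms vanish and $\Ex_{\calF_k}\norm{\frac1M\sumM(g_k^m-\nabla F(x_k^m))}^2 = \frac1{M^2}\sumM\Ex_{\calF_k^m}\norm{g_k^m-\nabla F(x_k^m)}^2$. Here is where the local norm test \eqref{eqn:exact_norm_local} enters: it bounds each summand by $\eta^2\norm{\nabla F(x_k^m)}^2$, so the noise part is at most $\frac{\eta^2}{M^2}\sumM\norm{\nabla F(x_k^m)}^2 \le \frac{\eta^2}{M}\cdot\frac1M\sumM 2L(F(x_k^m)-F^\star)$ using \eqref{eqn:smooth_alt}. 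The bias part I would bound via \eqref{eqn:sum_norm_sq} (Jensen) by $\frac1M\sumM\norm{\nabla F(x_k^m)}^2 \le \frac1M\sumM 2L(F(x_k^m)-F^\star)$. Combining, $\Ex_{\calF_k}\norm{\gbark}^2 \lesssim \frac{2L(M+\eta^2)}{M}\cdot\frac1M\sumM(F(x_k^m)-F^\star)$, so multiplying by $\alpha_k^2$ and using the step-size restriction $\alpha_k \le M/(4L(M+\eta^2))$ makes this at most $\frac{\alpha_k}{2M}\sumM(F(x_k^m)-F^\star)$ up to a constant, leaving room for the leftover $-\frac{\alpha_k}{2M}\sumM\Ex[F(x_k^m)-F^\star]$ in the final bound.

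Finally I would collect all the pieces, choosing the free constants $\rho$ in the Cauchy--Schwarz steps so that (i) the coefficients of $F(x_k^m)-F^\star$ coming from the cross term and from the second-moment term, when subtracted from the $-\frac{\alpha_k}{M}\sumM(F(x_k^m)-F^\star)$ generated by strong convexity, leave a net coefficient of at most $-\frac{\alpha_k}{2M}$; (ii) the contraction coefficient on $\norm{\xbar_k-x^\star}^2$ is exactly $1-\mu\alpha_k/2$; and (iii) the consistency-error terms $\norm{x_k^m-\xbar_k}^2$ aggregate to a coefficient bounded by $3L\alpha_k/M$. Then I would take total expectation. I expect the main obstacle to be the bookkeeping of constants: one has to balance the $\rho$'s across the strong-convexity lower bound, the cross-term splitting, and the step-size margin so that all three target coefficients ($1-\mu\alpha_k/2$, $-\alpha_k/(2M)$, $3L\alpha_k/M$) come out simultaneously, which is exactly where the specific threshold $M/(4L(M+\eta^2))$ on $\alpha_k$ gets used. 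A minor subtlety is justifying the conditional-independence of the per-worker gradient noise given $\calF_k$, which relies on the workers sampling their local batches independently and on the \iid\ data assumption $F_m\equiv F$; this is what lets the cross terms drop and gives the $1/M$ variance-reduction factor in front of $\eta^2$.
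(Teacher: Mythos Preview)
Your proposal is correct and follows essentially the same route as the paper's proof: the paper expands $\norm{\xbar_{k+1}-x^\star}^2$, splits the inner product as $\dotp{\nabla F(x_k^m)}{x_k^m-x^\star}+\dotp{\nabla F(x_k^m)}{x_k^m-\xbar_k}$, applies strong convexity to the first piece and Cauchy--Schwarz with $\rho=2L$ to the second, uses $-\norm{x_k^m-x^\star}^2\le -\tfrac12\norm{\xbar_k-x^\star}^2+\norm{x_k^m-\xbar_k}^2$, and bounds the second-moment term exactly as you describe via conditional independence across workers plus the local norm test to get the $(M+\eta^2)$ factor. The only detail you left implicit is that the final collection of constants also uses $\mu\le L$ (from \Cref{ass:smooth,ass:strong_cvx}) to merge the $\mu\alpha_k$ consistency term into the $2L\alpha_k$ one and obtain the clean $3L\alpha_k/M$ coefficient.
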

    
    \begin{proof}
        The proof is similar to that of Lemma 25 in \citet{stich2020error}. We state it here for convenience and completeness. 
        Expanding we have
        \begin{equation*}
            \norm{\xbar_{k+1} - x^\star}^2 
            = \norm{\xbar_{k} - x^\star}^2 - \frac{2\alpha_k}{M}\sumM \dotp{g_k^m}{x_k^m - \xstar} + \frac{\alpha_k^2}{M^2}\norm{\sumM \gkm}^2 + \frac{2\alpha_k}{M}\sumM\dotp{\gkm}{\xkm - \xbark}. 
        \end{equation*}
        Taking expectation with respect to $\calF_k$, we have 
        \begin{multline*}
            \Ex_k \norm{\xbar_{k+1} - x^\star}^2 \\
            \le \norm{\xbar_{k} - x^\star}^2 - \frac{2\alpha_k}{M}\sumM \dotp{\gradFxkm}{x_k^m - \xstar} + \frac{\alpha_k^2}{M^2}\Ex_k\norm{\sumM \gkm}^2+ \frac{2\alpha_k}{M}\sumM\dotp{\gradFxkm}{\xkm - \xbark}. 
        \end{multline*}
        Note that $\Ex_k\norm{\sumM \gkm}^2$ can be bounded by 
        \begin{align}
            \Ex_k\norm{\sumM \gkm}^2 & = \norm{\sumM \gradFxkm}^2 + \sumM \Ex\norm{\gkm - \gradFxkm}^2 \nonumber \\
            &\le M\sumM\norm{\gradFxkm}^2 + \eta^2\sumM \norm{\gradFxkm}^2 & \text{by \eqref{eqn:sum_norm_sq} and \eqref{eqn:exact_norm_actual}} \label{eqn:bound_sum_grad_norm_sq_1} \\
            &\le 2L(M + \eta^2)\sumM \left( F(\xkm) - \Fstar\right)  & \text{by \eqref{eqn:smooth_alt}} \label{eqn:bound_sum_grad_norm_sq_2}
        \end{align}
        On the other hand, by \Cref{ass:strong_cvx}, we also have 
        \begin{equation}\label{eqn:strong_cvx_1}
            -2\dotp{\gradFxkm}{\xkm - \xstar} \le - \mu\norm{\xkm - \xstar}^2 -2 \left( F(\xkm) - \Fstar\right). 
        \end{equation}
        By Cauchy--Schwarz inequality \eqref{eqn:cs} with $\rho=2L$, we have 
        \begin{align}
            2\dotp{\gradFxkm}{\xkm-\xbark} &\le 2L\norm{\xkm-\xstar}^2 + \frac{1}{2L}\norm{\gradFxkm}^2 \nonumber \\
            &\le 2L\norm{\xkm-\xstar}^2 + F(\xkm) - \Fstar & \text{by \eqref{eqn:smooth_alt}} \label{eqn:cs_2}
        \end{align}
        By \eqref{eqn:jensen}, we also have 
        \begin{equation}\label{eqn:jensen_2}
            -\norm{\xkm-\xstar}^2 \le -\frac12\norm{\xbark - \xstar}^2 + \norm{\xkm - \xbark}^2. 
        \end{equation}
        
        Thus, plugging in \eqref{eqn:bound_sum_grad_norm_sq_2}, \eqref{eqn:cs_2} and \eqref{eqn:jensen_2}, we obtain 
        \begin{align*}
            \Ex_k \norm{\xbar_{k+1} - x^\star}^2 
            &\le \norm{\xbar_{k} - x^\star}^2 - \frac{2\alpha_k}{M}\sumM \dotp{\gradFxkm}{x_k^m - \xstar} + \frac{2L\alpha_k^2(M + \eta^2)}{M^2}\sumM \left( F(\xkm) - \Fstar\right) \\
            &\qquad+ \frac{2\alpha_k}{M}\sumM\dotp{\gradFxkm}{\xkm - \xbark} \\
            &\le \norm{\xbar_{k} - x^\star}^2 - \frac{\alpha_k}{M}\sumM \left[ \mu\norm{\xkm - \xstar}^2 +2 \left( F(\xkm) - \Fstar\right)\right] \\
            &\qquad+ \frac{2L(M + \eta^2)\alpha_k^2}{M^2}\sumM \left( F(\xkm) - \Fstar\right) + \frac{\alpha_k}{M}\sumM\left[2L\norm{\xkm-\xstar}^2 + F(\xkm) - \Fstar \right] \\
            &\le \norm{\xbar_{k} - x^\star}^2 + \frac{\mu\alpha_k}{M}\sumM\left[-\frac12\norm{\xbark - \xstar}^2 + \norm{\xkm - \xbark}^2\right] \\
            &\qquad-\frac{\alpha_k}{M}\left(1-\frac{2L(M+\eta^2)\alpha_k}{M}\right)\sumM \left( F(\xkm) - \Fstar\right) + \frac{2L\alpha_k}{M}\sumM\norm{\xkm - \xbark}^2 \\
            &\le \left(1-\frac{\mu\alpha_k}{2}\right)\norm{\xbar_{k} - x^\star}^2 - \frac{\alpha_k}{2M}\sumM \left( F(\xkm) - \Fstar\right)+ \frac{3L\alpha_k}{M} \sumM\norm{x_k^m - \xbar_k}^2, 
        \end{align*}
        since $\alpha_k\le M/(4L(M+\eta^2))$ and $\mu\le L$ if $F$ is both $\mu$-strongly convex and $L$-Lipschitz smooth. Taking total expectation gives the desired result.         
    \end{proof}

    We also prove a descent lemma for general nonconvex objectives.     
    \begin{lemma}[Descent lemma for nonconvex objectives]\label{lemma:descent_noncvx}   
        Suppose that \Cref{ass:smooth} holds. 
        Let $(x_k^m)_{k\in\NN, m\in\setM}$ be the sequence of the iterates of Local \SGD \eqref{eqn:local_sgd} with the (exact variance) local norm test \eqref{eqn:exact_norm_local} with $\eta_m\equiv\eta\in(0,1)$. 
        If the sequence of learning rates $(\alpha_k)_{k\in\NN}$ satisfies $\alpha_k\le M/(2L(M+\eta^2))$ for all $k\in\NN$, then the sequence $(\xbar_k)_{k\in\NN}$ defined in \eqref{eqn:xbar} satisfies
        \begin{equation*}
            \Ex[F(\xbar_{k+1})] \le \Ex[F(\xbar_k)] - \frac{\alpha_k}{4M}\sumM\Ex\norm{\nabla F(x_k^m)}^2 + \frac{L^2\alpha_k}{2M}\sumM \Ex\norm{x_k^m - \xbar_k}^2. 
        \end{equation*}
        
    \end{lemma}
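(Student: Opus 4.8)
The plan is to run the same ``perturbed virtual iterate'' argument as in the proof of \Cref{lemma:descent_strong_cvx}, but anchored on the $L$-smoothness descent inequality instead of on strong convexity. Since the virtual averaged iterate \eqref{eqn:xbar} satisfies $\xbar_{k+1} = \xbar_k - \alpha_k\gbar_k$ with $\gbar_k = \frac1M\sumM g_k^m$ and $g_k^m = \nabla F_{\calB_k^m}(x_k^m)$, \Cref{ass:smooth} gives
\[
    F(\xbar_{k+1}) \le F(\xbar_k) - \alpha_k\dotp{\nabla F(\xbar_k)}{\gbar_k} + \frac{L\alpha_k^2}{2}\norm{\gbar_k}^2 .
\]
First I would take the conditional expectation $\Ex_k$. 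In the \iid~regime ($F_m\equiv F$) each local batch gradient is conditionally unbiased, $\Ex_k[g_k^m] = \nabla F(x_k^m)$, so the cross term becomes $-\alpha_k\dotp{\nabla F(\xbar_k)}{\frac1M\sumM\nabla F(x_k^m)}$, and it remains to control this term together with the second moment $\Ex_k\norm{\gbar_k}^2$.

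For the cross term I would apply the polarization identity $-\dotp{a}{b} = \tfrac12\norm{a-b}^2 - \tfrac12\norm{a}^2 - \tfrac12\norm{b}^2$ to each summand $-\dotp{\nabla F(\xbar_k)}{\nabla F(x_k^m)}$, average over $m\in\setM$, discard the nonpositive term $-\tfrac{\alpha_k}{2}\norm{\nabla F(\xbar_k)}^2$, and bound the consensus error through smoothness, $\norm{\nabla F(\xbar_k) - \nabla F(x_k^m)}^2 \le L^2\norm{\xbar_k - x_k^m}^2$. This produces precisely the drift term $\frac{L^2\alpha_k}{2M}\sumM\norm{x_k^m - \xbar_k}^2$ and a negative term $-\frac{\alpha_k}{2M}\sumM\norm{\nabla F(x_k^m)}^2$. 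For the second moment I would reuse the estimate \eqref{eqn:bound_sum_grad_norm_sq_1} already obtained inside the proof of \Cref{lemma:descent_strong_cvx}: since the local batches are conditionally independent across workers the noise cross-terms vanish, and then \eqref{eqn:sum_norm_sq} together with the exact-variance norm test \eqref{eqn:exact_norm_actual} gives $\Ex_k\norm{\gbar_k}^2 = \tfrac1{M^2}\Ex_k\norm{\sumM g_k^m}^2 \le \tfrac{M+\eta^2}{M^2}\sumM\norm{\nabla F(x_k^m)}^2$.

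Collecting terms, the coefficient multiplying $\sumM\norm{\nabla F(x_k^m)}^2$ is $-\tfrac{\alpha_k}{2M} + \tfrac{L\alpha_k^2(M+\eta^2)}{2M^2} = -\tfrac{\alpha_k}{2M}\bigl(1 - \tfrac{L\alpha_k(M+\eta^2)}{M}\bigr)$, which is at most $-\tfrac{\alpha_k}{4M}$ exactly when $\alpha_k \le M/(2L(M+\eta^2))$, i.e.\ the hypothesized step-size restriction; taking total expectation then yields the claim. The only step that needs genuine care --- exactly as in the convex case --- is the second-moment estimate: one must condition on the appropriate filtration so that the increments $g_k^m - \nabla F(x_k^m)$ are simultaneously mean-zero and mutually independent across $m$, so that the sum-of-variances decomposition is valid and \eqref{eqn:exact_norm_actual} can be applied worker-by-worker; everything else is routine manipulation with \eqref{eqn:sum_norm_sq}, \eqref{eqn:smooth_alt} and the polarization identity.
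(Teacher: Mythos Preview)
Your proposal is correct and is essentially the paper's own proof: both start from the $L$-smoothness descent inequality, take conditional expectation, reuse the second-moment bound \eqref{eqn:bound_sum_grad_norm_sq_1}, and conclude with the same coefficient calculation under $\alpha_k\le M/(2L(M+\eta^2))$. The only cosmetic difference is how the cross term $-\dotp{\nabla F(\xbar_k)}{\nabla F(x_k^m)}$ is unpacked --- the paper writes it as $-\norm{\nabla F(x_k^m)}^2 + \dotp{\nabla F(x_k^m)-\nabla F(\xbar_k)}{\nabla F(x_k^m)}$ and then applies Cauchy--Schwarz \eqref{eqn:cs}, whereas you use the polarization identity and drop $-\tfrac12\norm{\nabla F(\xbar_k)}^2$ --- but both routes land on the identical bound $-\tfrac12\norm{\nabla F(x_k^m)}^2 + \tfrac12\norm{\nabla F(\xbar_k)-\nabla F(x_k^m)}^2$.
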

    
    \begin{proof}
        The proof is similar to that of Lemma 26 in \citet{stich2020error}. We state it here for convenience and completeness. 
        
        By \Cref{ass:smooth}, we have 
        \[F(\xbarkpo) \le F(\xbark) - \frac{\alpha_k}{M}\sumM\dotp{\gradFxbark}{\gkm} + \frac{L\alpha_k^2}{2M^2}\norm{\sumM \gkm}^2. \]
        Taking expectation with respect to $\calF_k$, we have 
        \begin{align*}
            \Ex_k [F(\xbarkpo)] &\le F(\xbark) - \frac{\alpha_k}{M}\sumM\dotp{\gradFxbark}{\gradFxkm} + \frac{L\alpha_k^2}{2M^2}\Ex_k\norm{\sumM \gkm}^2 \\
            &\le F(\xbark) - \frac{\alpha_k}{M}\sumM\dotp{\gradFxbark}{\gradFxkm} + \frac{L\alpha_k^2(M+\eta^2)}{2M^2}\sumM\norm{\gradFxkm}^2 & \text{by \eqref{eqn:bound_sum_grad_norm_sq_1}} \\
            &= F(\xbark) -\left(\frac{\alpha_k}{M} - \frac{L\alpha_k^2(M+\eta^2)}{2M^2}\right)\sumM\norm{\gradFxkm}^2 + \frac{\alpha_k}{M}\sumM\dotp{\gradFxkm - \gradFxbark}{\gradFxkm}. 
        \end{align*}
        By Cauchy--Schwarz inequality \eqref{eqn:cs} with $\rho=1$ and \Cref{ass:smooth}, we have 
        \begin{align*}
            \sumM\dotp{\gradFxkm - \gradFxbark}{\gradFxkm} &\le \frac12\sumM \left[\norm{\gradFxkm - \gradFxbark}{\gradFxkm}^2 + \norm{\gradFxkm}^2\right] \\
            &\le \frac{L^2}{2}\sumM\norm{\xkm - \xbark}^2 + \frac12\sumM\norm{\gradFxkm}^2. 
        \end{align*}
        Plugging this in, we obtain 
        \begin{align*}
            \Ex_k [F(\xbarkpo)] &= F(\xbark) -\frac{\alpha_k}{2M} \left(1 - \frac{L(M+\eta^2)\alpha_k}{M}\right)\sumM\norm{\gradFxkm}^2 + \frac{L^2\alpha_k}{2M}\sumM\norm{\xkm - \xbark}^2 \\
            &\le F(\xbark) - \frac{\alpha_k}{4M}\sumM\norm{\nabla F(x_k^m)}^2 + \frac{L^2\alpha_k}{2M}\sumM \norm{x_k^m - \xbar_k}^2,  
        \end{align*}
        since $\alpha_k\le M/(2L(M+\eta^2))$. Taking total expectation gives the desired result.         
    \end{proof}

    We then derive an upper bound for $\sumM\Ex\norm{x_k^m - \xbar_k}^2$. 
    \begin{lemma}\label{lemma:bound_norm_sq}   
        Let $(x_k^m)_{k\in\NN, m\in\setM}$ be the sequence of the iterates of Local \SGD \eqref{eqn:local_sgd} with the (exact variance) local norm test \eqref{eqn:exact_norm_local} with $\eta_m\equiv\eta\in(0,1)$, and $(\xbar_k)_{k\in\NN}$ be the sequence defined in \eqref{eqn:xbar}. If the sequence of learning rates $(\alpha_k)_{k\in\NN}$ satisfies $\alpha_k\le 1/(10L(HM+\eta^2))$ for all $k\in\NN$ and $(\alpha_k^2)_{k\in\NN}$ is $H$-slow decreasing, then we have 
        \begin{equation}
            \frac{3L}M\sumM\Ex\norm{x_k^m - \xbar_k}^2 \le \frac{1}{10LHM}\sumM \sum_{h=0}^{H-1} \Ex\norm{\nabla F(x_{k-h}^m)}^2. 
        \end{equation}
        Moreover, for any $H$-slow increasing nonnegative sequence $(w_k)_{k\in\NN}$ we have 
        \[
        \frac{3L}M\sumM\sum_{k=0}^K w_k \Ex\norm{x_k^m - \xbar_k}^2 \le \frac{1}{5LM}\sumM\sum_{k=0}^Kw_k \Ex\norm{\nabla F(x_{k-h}^m)}^2.
        \]
    \end{lemma}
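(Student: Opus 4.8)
The plan is to reduce the bound to an estimate \emph{inside a single synchronization block} and then absorb the arising learning‑rate and weight index shifts using the $\tau$‑slow sequence estimates, closely following \citet{stich2020error}. Fix $k\in\NN$ and let $k_0$ be the largest multiple of $H$ with $k_0\le k$, so that $0\le k-k_0\le H-1$; by the averaging rule in \eqref{eqn:local_sgd} we have $x_{k_0}^m=\xbar_{k_0}$ for all $m\in\setM$, and a one‑line induction shows $\xbar_k=\frac1M\sumM x_k^m$ for every $k$ (both sides obey \eqref{eqn:xbar} with the same initialization). Since $\xbar_k$ is then the Euclidean barycentre of $(x_k^m)_{m\in\setM}$, we get $\sumM\norm{x_k^m-\xbar_k}^2\le\sumM\norm{x_k^m-\xbar_{k_0}}^2$, and unrolling the Local \SGD recursion over the block (no averaging occurs strictly between $k_0$ and $k$) gives $x_k^m-\xbar_{k_0}=x_k^m-x_{k_0}^m=-\sum_{j=k_0}^{k-1}\alpha_j g_j^m$. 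Applying \eqref{eqn:sum_norm_sq} to the at most $H$ summands then yields
\[
\sumM\norm{x_k^m-\xbar_k}^2 \;\le\; H\sumM\sum_{j=k_0}^{k-1}\alpha_j^2\norm{g_j^m}^2 .
\]

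I would next take total expectation. In the \iid~regime $F_m\equiv F$, so $\Ex_{\calF_j^m}[g_j^m]=\nabla F(x_j^m)$ and the norm test \eqref{eqn:exact_norm_actual} gives $\Ex\norm{g_j^m}^2\le(1+\eta^2)\Ex\norm{\nabla F(x_j^m)}^2$ for each $j$; since $(\alpha_k^2)_{k\in\NN}$ is $H$‑slow decreasing and $0\le k-j\le H-1$ for every index in play, $\alpha_j^2\le(1+\tfrac1{2H})^{k-j}\alpha_k^2\le\e^{1/2}\alpha_k^2$. Writing the inner sum in terms of $h=k-j\in\{1,\dots,H-1\}$ and freely adjoining the nonnegative $h=0$ term, this produces
\[
\frac{3L}{M}\sumM\Ex\norm{x_k^m-\xbar_k}^2 \;\le\; \frac{3L\,\e^{1/2}(1+\eta^2)H\alpha_k^2}{M}\sumM\sum_{h=0}^{H-1}\Ex\norm{\nabla F(x_{k-h}^m)}^2 ,
\]
with the convention $x_j^m:=x_0$ for $j<0$. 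It only remains to check that the prefactor is $\le 1/(10LHM)$: using $1+\eta^2<2$, $\e^{1/2}<1.65$, and $\alpha_k\le 1/(10L(HM+\eta^2))\le 1/(10LHM)$ (so $\alpha_k^2\le 1/(100L^2H^2M^2)$), one gets $3L\,\e^{1/2}(1+\eta^2)H\alpha_k^2<10LH\alpha_k^2\le 1/(10LHM^2)\le 1/(10LHM)$, which settles the first inequality.

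For the weighted version I would multiply the displayed bound by $w_k$, sum over $k\in\{0,\dots,K\}$, swap the $k$‑ and $h$‑summations, and substitute $k\mapsto k+h$, turning each summand into $w_{k+h}\,\Ex\norm{\nabla F(x_k^m)}^2$. Since $(w_k)_{k\in\NN}$ is $H$‑slow increasing, $(1/w_k)_{k\in\NN}$ is $H$‑slow decreasing, whence $w_{k+h}\le(1+\tfrac1{2H})^h w_k\le\e^{1/2}w_k$ for $0\le h\le H-1$; summing over the $H$ values of $h$ contributes a factor $H\e^{1/2}$ that cancels the $1/H$ in $1/(10LHM)$ and leaves $\e^{1/2}/(10LM)<1/(5LM)$ in front of $\sumM\sum_{k=0}^{K}w_k\Ex\norm{\nabla F(x_k^m)}^2$, as asserted. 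The main obstacle is not any single inequality but the careful bookkeeping with the slow‑varying sequences: one must ensure that \emph{every} learning‑rate index occurring in a block, and \emph{every} weight index after the shift, stays within $H-1$ of the reference index so that the geometric correction factors never exceed $\e^{1/2}$, and that the numerical constant $10$ is large enough to absorb the product $3\e^{1/2}(1+\eta^2)<10$; with those in hand, the remaining manipulations are routine.
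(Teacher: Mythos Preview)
Your argument is correct and follows essentially the same route as the paper: reduce to the last synchronization point via the barycentre inequality, unroll the local block, invoke the norm test, and absorb the index shifts using the $H$-slow properties of $(\alpha_k^2)$ and $(w_k)$. The only cosmetic difference is that the paper first splits $g_j^m=\nabla F(x_j^m)+(g_j^m-\nabla F(x_j^m))$ and uses martingale orthogonality before \eqref{eqn:sum_norm_sq}, yielding a prefactor of order $H+\eta^2$ rather than your $\e^{1/2}(1+\eta^2)H$, but both are dominated by the constant $10$ in the step-size bound and the remaining bookkeeping is identical.
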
    
    
    \begin{proof}
        We first consider
        \begin{align*}
            \frac1M\sumM\Ex\norm{x_k^m - \xbar_k}^2 
            &= \frac1M\sumM\Ex\norm{\xkm - \xbar_{\lfloor K/H\rfloor H} - (\xbark - \xbar_{\lfloor K/H\rfloor H})}^2 \\
            &\le \frac1M\sumM \Ex\norm{\xkm -  \xbar_{\lfloor K/H\rfloor H}}^2 \\
            &\le \frac1M\sumM\Ex\norm{\sum_{h=0}^{H-1}\alpha_{k-h} \left(\nabla F(x_{k-h}^m) + g_{k-h}^m - \nabla F(x_{k-h}^m)\right)}^2 \\
            &\le  \frac{3H}{2M}\sumM\sum_{h=0}^{H-1}\alpha_{k-h}^2\Ex\norm{\nabla F(x_{k-h}^m)}^2 \\
            &\qquad\qquad+ \frac3M\sumM\sum_{h=0}^{H-1}\alpha_{k-h}^2 \Ex\norm{g_{k-h}^m - \nabla F(x_{k-h}^m)}^2 & \text{by \eqref{eqn:jensen} and \eqref{eqn:sum_norm_sq}} \\
            &\le \frac{3(H+\eta^2)}{2K}\sumM\sum_{h=0}^{H-1}\alpha_{k-h}^2\Ex\norm{\nabla F(x_{k-h}^m)}^2. 
        \end{align*}
        The first inequality follows from $\Ex\norm{X-\Ex X}^2 \le \Ex\norm{X}^2$ for any random variable $X$ in $\RR^d$. Since $(\alpha_k^2)_{k\in\NN}$ is $H$-slow decreasing, for $h\le H$, we have $\alpha_{k-h}^2\le \alpha_k^2(1+1/(2H))^H\le\alpha_k^2\e^{H/(2H)} \le 2\alpha_k^2$ since $1+x\le\e^x$ for any $x\in\RR$. Then, we can simply the inequality:
        \begin{equation}
            \frac1M\sumM\Ex\norm{x_k^m - \xbar_k}^2  \le \frac{3(H+\eta^2)\alpha_k^2}{K}\sumM\sum_{h=0}^{H-1}\Ex\norm{\nabla F(x_{k-h}^m)}^2. 
        \end{equation}
        Since $\alpha_k\le 1/(10L(HM+\eta^2))$ for all $k\in\NN$, we have 
        \begin{equation*}
            3L\cdot3(H+\eta^2)\alpha_k^2 \le \frac{1}{10L(HM+\eta^2)} \le \frac{1}{10LHM}. 
        \end{equation*}
        Since $(w_k)_{k\in\NN}$ is $H$-slow increasing, we have $w_k\le w_{k-h}(1+2/(2H))^h\le w_{k-h}(1+2/(2H))^H \le w_{k-h}\sqrt{\e} \le 2w_{k-h}$ for each $h\in\set{0,H}$. Therefore, we have 
        \begin{align}
            \frac{3L}M\sumM\sum_{k=0}^K w_k\Ex\norm{x_k^m - \xbar_k}^2 &\le \frac{1}{5LM}\sumM\sum_{k=0}^K \frac{w_k}{2H}\sum_{h=0}^{H-1} \Ex\norm{\nabla F(x_{k-h}^m)}^2 \nonumber\\
            &\le \frac{1}{5LM}\sumM\sum_{k=0}^K \frac{1}{H}\sum_{h=0}^{H-1} w_{k-h}\Ex\norm{\nabla F(x_{k-h}^m)}^2 \nonumber\\
            &\le \frac{1}{5LM}\sumM\sum_{k=0}^Kw_k \Ex\norm{\nabla F(x_{k-h}^m)}^2. \label{eqn:bound_norm_sq_3}
        \end{align}
    \end{proof}

    \begin{proof}[Proof of \Cref{thm:conv_strong_cvx}]
        Using \Cref{lemma:bound_norm_sq} with $\mu>0$ and \eqref{eqn:smooth_alt}, we have 
        \begin{equation}\label{eqn:bound_norm_sq_2}
            \frac{3L}M\sumM\sum_{k=0}^K w_k \Ex\norm{x_k^m - \xbar_k}^2 \le \frac{2}{5M}\sumM\sum_{k=0}^Kw_k \Ex[F(x_{k-h}^m) - \Fstar].
        \end{equation}
        
        Since $\alpha_k\equiv\alpha\le 1/(10L(HM+\eta^2)) \le M/(4L(M+\eta^2))$, by \Cref{lemma:descent_strong_cvx}, we have 
        \begin{equation*}
            \frac{1}{2M}\sumM \Ex[F(x_k^m) - F^\star] 
            \le \frac1{\alpha_k}\left(1 - \frac{\mu\alpha_k}{2}\right)\Ex\norm{\xbar_k - x^\star}^2 - \frac1{\alpha_k}\Ex\norm{\xbar_{k+1} - x^\star}^2 + \frac{3L}{M} \sumM \Ex\norm{x_k^m - \xbar_k}^2. 
        \end{equation*}
        Multiplying the weight $w_k$ and summing over $k\in\set{0, K}$, and by \eqref{eqn:bound_norm_sq_2}, we have 
        \begin{align*}
            &\quad\,\frac{1}{2M}\sumM \sum_{k=0}^K w_k\Ex[F(x_k^m) - F^\star] \\
            &\le \sum_{k=0}^K \left( \frac{w_k}{\alpha_k}\left(1 - \frac{\mu\alpha_k}{2}\right)\Ex\norm{\xbar_k - x^\star}^2 - \frac{w_k}{\alpha_k}\Ex\norm{\xbar_{k+1} - x^\star}^2\right) +  \frac{3L}{M} \sumM\sum_{k=0}^Kw_k \Ex\norm{x_k^m - \xbar_k}^2 \\
            &\le \sum_{k=0}^K \left( \frac{w_k}{\alpha_k}\left(1 - \frac{\mu\alpha_k}{2}\right)\Ex\norm{\xbar_k - x^\star}^2 - \frac{w_k}{\alpha_k}\Ex\norm{\xbar_{k+1} - x^\star}^2\right) + \frac{2}{5M}\sumM\sum_{k=0}^Kw_k \Ex[F(x_{k-h}^m) - \Fstar], 
        \end{align*}
        which is equivalent to 
        \begin{equation}\label{eqn:bound_gap}
            \frac{1}{MW_K}\sumM \sum_{k=0}^K w_k\Ex[F(x_k^m) - F^\star] 
            \le \frac{10}{W_K}\sum_{k=0}^K \left( \frac{w_k}{\alpha_k}\left(1 - \frac{\mu\alpha_k}{2}\right)\Ex\norm{\xbar_k - x^\star}^2 - \frac{w_k}{\alpha_k}\Ex\norm{\xbar_{k+1} - x^\star}^2\right).  
        \end{equation}        
        We observe that $1-\mu\alpha/2 \ge 1 - \mu/(20L(HM+\eta^2)) \ge 1-1/(8HM)$, we choose $w_k = (1-\mu\alpha/2)^{-k-1}$ which are $2H$-slow increasing. Then by \Cref{lemma:linear}, we obtain the desired result. 
    \end{proof}

    \begin{proof}[Proof of \Cref{thm:conv_cvx}]
        For the convex case with $\mu=0$, we simply use \Cref{lemma:linear} with $w_k\equiv1$ for all $k\in\NN$. 
    \end{proof}

    \begin{proof}[Proof of \Cref{thm:conv_noncvx}]
        By \Cref{lemma:descent_noncvx} and \eqref{eqn:smooth_alt}, and define $u_k\coloneqq 4\cdot\Ex[F(\xbark) - F^\star]$, we have 
        \begin{align*}
            \frac{1}{MW_K}\sumM\sum_{k=0}^K w_k \Ex\norm{\nabla F(x_k^m)}^2 
            &\le \frac{1}{W_K}\sum_{k=0}^K w_k\left(\frac{u_k}{4\alpha_k} -\frac{u_{k+1}}{4\alpha_k}\right) + \frac{2L\alpha_k}{MW_K}\sumM\sum_{k=0}^K w_k \Ex\norm{x_k^m - \xbar_k}^2 \\
            &\le \frac{1}{W_K}\sum_{k=0}^Kw_k \left(\frac{u_k}{4\alpha_k} -\frac{u_{k+1}}{4\alpha_k}\right) + \frac{2\alpha_k}{15LMW_K}\sumM\sum_{k=0}^Kw_k \Ex\norm{\nabla F(x_k^m)}^2, 
        \end{align*}
        by \eqref{eqn:bound_norm_sq_3}. We simplify this to 
        \[\left(1- \frac{2\alpha_k}{15L}\right)\frac{1}{MW_K}\sumM\sum_{k=0}^K w_k \Ex\norm{\nabla F(x_k^m)}^2 \le \frac{1}{W_K}\sum_{k=0}^Kw_k \left(\frac{u_k}{4\alpha_k} -\frac{u_{k+1}}{4\alpha_k}\right). \]
        Applying \Cref{lemma:sublinear} with $w_k\equiv1$ for all $k\in\NN$ gives the desired result.

    \end{proof}

    \newpage
    \section{Details of Numerical Experiments}
    \label{sec:expt_details}
    For experiments with 4 GPUs, each node has either NVIDIA L40S (48GB; for \ResNet-50 on CIFAR-10) or A40 (48GB; for MicroLlama 300M on C4) or A100 (SXM4 80GB; for \ResNet-101 on ImageNet) GPUs. 

    \subsection{\ResNet-50 on CIFAR-10}
    \label{subsec:resnet-50_cifar10_supp}
    
    \begin{table}[h!]
        \centering
        \caption{Training hyperparameters for \ResNet-50 on CIFAR-10}
        \label{table:hyperparams_resnet-50_cifar-10}
        \vspace*{2mm}
        \begin{tabular}{lc}
            Model & \ResNet-50 on CIFAR-10 \\
            \midrule
            Training samples & 30,000,000 (600 epochs) \\
            Weight initialization & Default  \\
            Optimizer &  Momentum \SGD (\SHB) \\
            Learning rate schedule & Linear warmup + cosine decay \\
            Learning rate warmup (samples) & 1,000,000 (10\%) \\
            Peak learning rate & 0.05 \\           
            Base learning rate & 0.005 \\
            Learning rate scaling rule & linear (for constant batch sizes) \\
            Base global batch size & 256 \\
            Base local batch size & 64 \\
            Maximum global batch size & 50,000 \\
            Maximum local batch size & 12,500 \\
            Weight decay & 0.0001 \\
            Momentum & 0.9 \\
            Precision & \texttt{bfloat16} \\
            Data-parallel size & 4 \\
            \bottomrule
        \end{tabular}
    \end{table}

    \begin{figure}[h!]    
        \centering
        \begin{subfigure}[h]{0.7\textwidth}
            \centering
            \includegraphics[width=\textwidth]{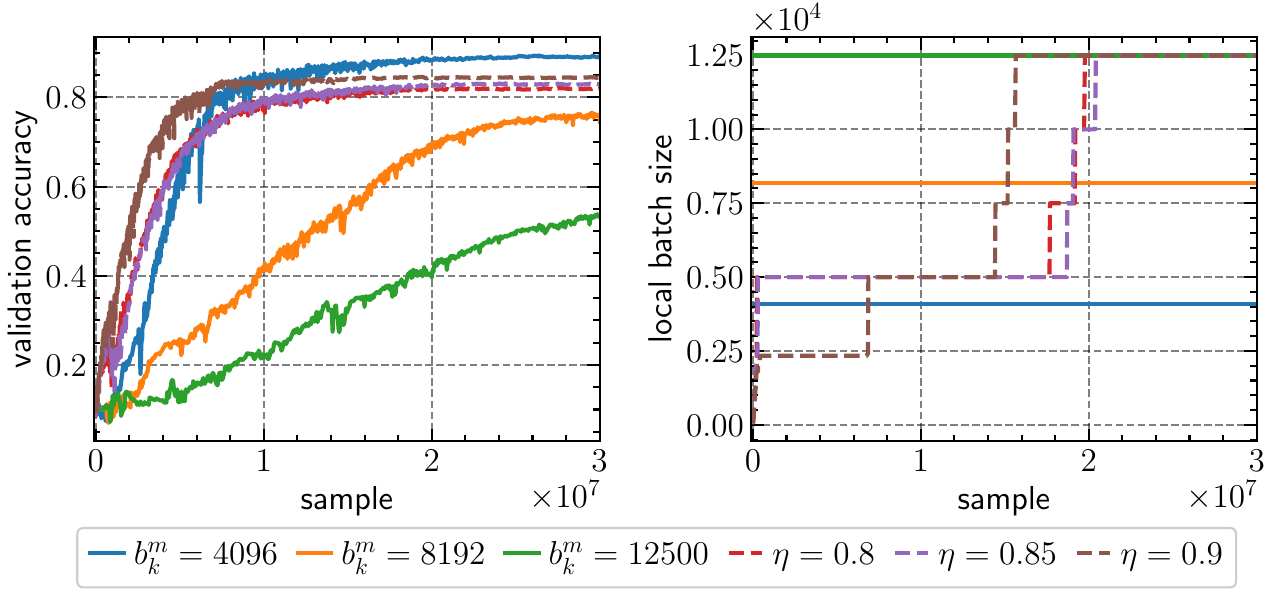}
            \caption{$H=1$ with $b_k^m\in\{4096, 8192, 12500\}$ and $\eta\in\{0.8,0.85,0.9\}$}
            \label{fig:resnet-50_cifar10_const_H1}
        \end{subfigure}%
        \caption{Validation accuracy and local batch sizes of Local \SHB with adaptive batch size strategies for \ResNet-50 on CIFAR-10. }
        \label{fig:resnet-50_cifar10_H1}
    \end{figure}

    \clearpage
    \newpage
    \begin{table}[h!]
        \centering
        \caption{Adaptive batch size strategies for \ResNet-50 on CIFAR-10; mean (standard deviation) of no.~of total steps (steps), wall-clock time (in hours), average local batch sizes (bsz.), and best validation accuracy (acc.; in \%) of all three seeds}
        \label{table:resnet-50_cifar10_full}
        \vspace*{-2mm}
        \scriptsize
        \renewcommand{\arraystretch}{1.}
        \begin{tabular}{l|rrrr|rrrr}
            \toprule[1pt]
            \multirow{2}{*}{Schedule}
            & \multicolumn{4}{c|}{$H=32$}
            & \multicolumn{4}{c}{$H=16$}
            \\
            & steps  & time & bsz. & acc. & steps  & time & bsz. & acc. \\
            \midrule
            Constant & 1824 & 0.98 (0.00) & 4096 & 75.23 (7.19) & 1824 & 0.99 (0.00) & 4096 & 79.62 (4.39) \\ 
            Constant & 896 & 0.95 (0.00) & 8192 & 33.92 (10.78) & 912 & 0.98 (0.00) & 8192 & 31.23 (14.77) \\ 
            Constant & 576 & 1.07 (0.00) & 12500 & 10.70 (0.79) & 592 & 1.10 (0.00) & 12500 & 14.92 (5.49) \\ 
            $\eta=0.8$ & 950 (98) & 1.14 (0.02) & 7787 (607) & 71.49 (3.09) & 827 (25) & 1.14 (0.01) & 9030 (209) & 72.15 (3.82) \\ 
            $\eta=0.85$ & 1088 (0) & 1.16 (0.02) & 6943 (129) & 70.54 (1.03) & 880 (43) & 1.14 (0.01) & 8457 (374) & 72.41 (2.52) \\ 
            $\eta=0.9$ & 1163 (67) & 1.14 (0.01) & 6346 (341) & 73.02 (2.82) & 1072 (43) & 1.15 (0.00) & 7027 (238) & 73.22 (3.72) \\ 
            \midrule
            \multirow{2}{*}{Schedule}
            & \multicolumn{4}{c|}{$H=4$}
            & \multicolumn{4}{c}{$H=1$}
            \\
            & steps  & time & bsz. & acc. & steps  & time & bsz. & acc. \\
            \midrule 
            Constant & 1828 & 1.06 (0.00) & 4096 & 88.72 (0.52) & 1831 & 1.34 (0.00) & 4096 & 89.31 (0.34) \\ 
            Constant & 912 & 1.01 (0.00) & 8192 & 76.56 (2.29) & 915 & 1.15 (0.00) & 8192 & 78.19 (2.63) \\ 
            Constant & 596 & 1.14 (0.00) & 12500 & 35.70 (11.59) & 599 & 1.23 (0.00) & 12500 & 40.60 (15.22) \\ 
            $\eta=0.8$ & 750 (25) & 1.16 (0.00) & 10001 (346) & 74.66 (0.88) & 1023 (211) & 1.36 (0.05) & 7548 (1554) & 80.56 (2.08) \\ 
            $\eta=0.85$ & 771 (41) & 1.16 (0.01) & 9736 (494) & 75.57 (0.21) & 935 (297) & 1.35 (0.08) & 8525 (2399) & 78.89 (4.03) \\  
            $\eta=0.9$ & 756 (14) & 1.16 (0.01) & 9916 (166) & 74.77 (0.42) & 1246 (354) & 1.42 (0.07) & 6415 (2094) & 81.73 (3.65) \\ 
            \bottomrule
        \end{tabular}
    \end{table}

    \paragraph{Observations.}
    Note that due to the stochastic nature of drawing data samples when computing gradients, the adaptive batch size strategies might give rise to different adaptive batch size schedules when different random seeds. We hence perform the training runs for three different random seeds to evaluate the robustness of the proposed strategies (see results of the other two seeds in \Cref{fig:resnet-50_cifar10_2,fig:resnet-50_cifar10_3}). We observe from \Cref{table:resnet-50_cifar10_full} that, as the number of local steps $H$ grows, constant batch sizes might have more varied performance (in terms of standard deviation of best validation accuracy) than the proposed strategies.

    Also notice that, due to the use of relatively large batch sizes, all the experiments make use of the technique of gradient accumulation, which essentially consists of serial operations using for-loops. Therefore, the benefit of using large batch sizes is not reflected in the total training time. If a larger number of GPUs or stronger GPUs with more memory are used, training with large batch sizes will consume substantially shorter time. Consequently, it is better to measure the computational efficiency of different training runs using the number of total training steps  \citep{shallue2019measuring}.

    \begin{figure}[h!]    
        \centering
        \begin{subfigure}[h]{0.49\textwidth}
            \centering
            \includegraphics[width=\textwidth]{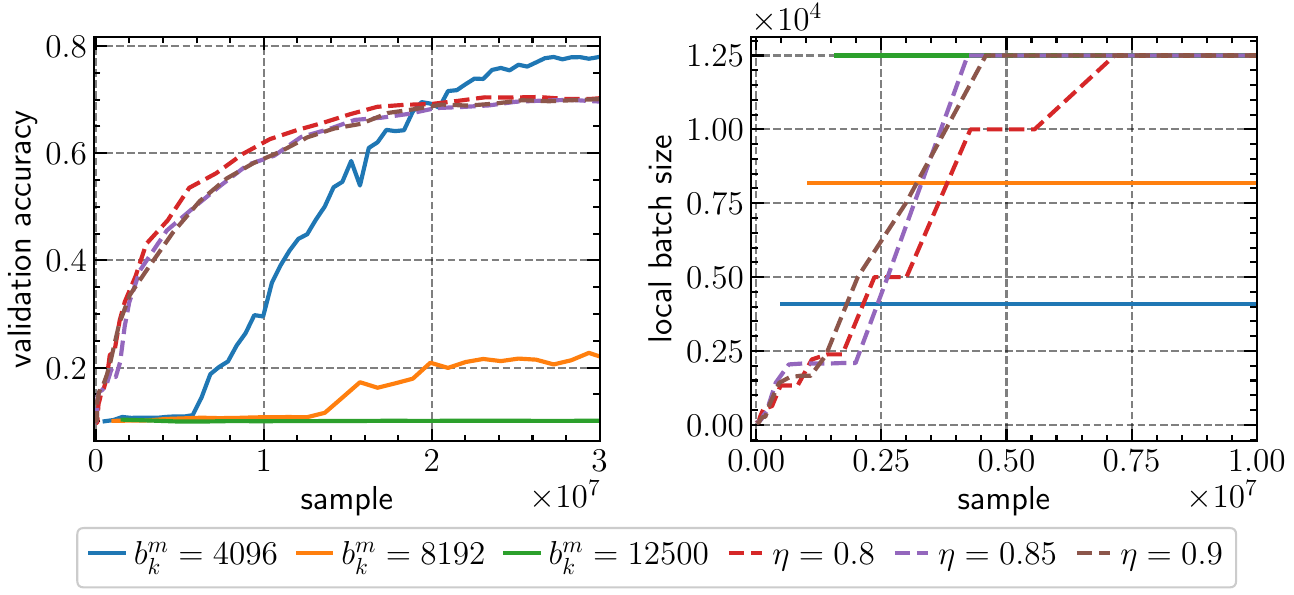}
            \caption{$H=32$ with $b_k^m\in\{4096, 8192, 12500\}$ and $\eta\in\{0.8,0.85,0.9\}$}
            \label{fig:resnet-50_cifar10_const_H32_2}
        \end{subfigure}%
        \hfill
        \begin{subfigure}[h]{0.49\textwidth}
            \centering
            \includegraphics[width=\textwidth]{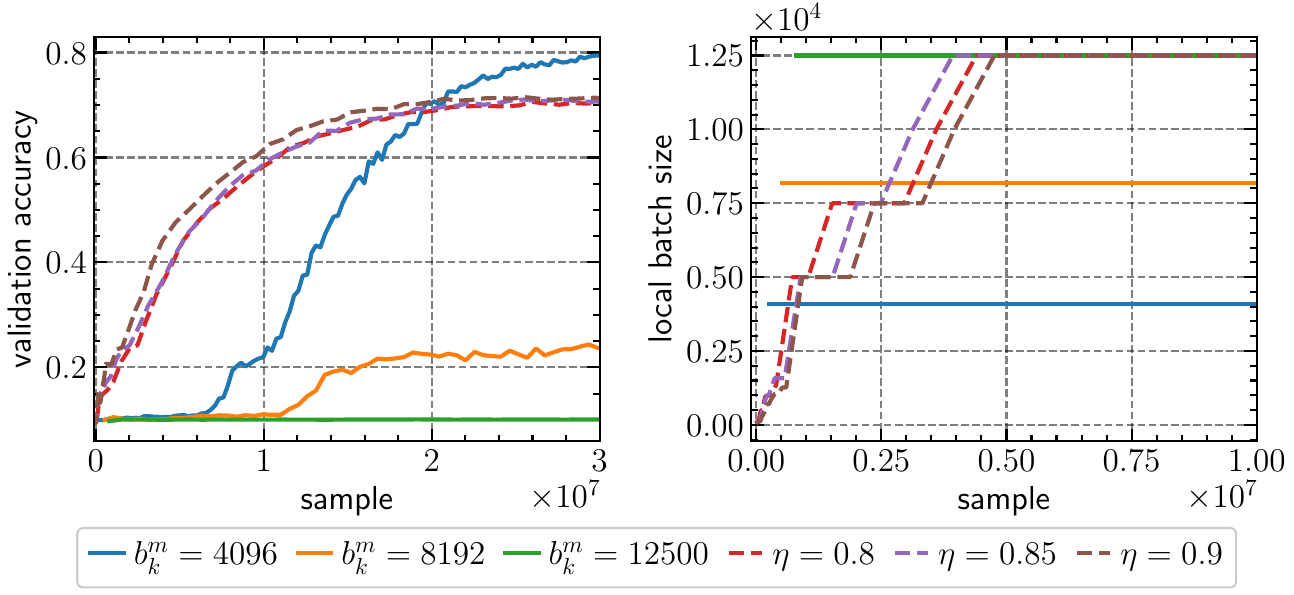}
            \caption{$H=16$ with $b_k^m\in\{4096, 8192, 12500\}$ and $\eta\in\{0.8,0.85,0.9\}$}
            \label{fig:resnet-50_cifar10_const_H16_2}
        \end{subfigure}%
        \hfill
        \begin{subfigure}[h]{0.49\textwidth}
            \centering
            \includegraphics[width=\textwidth]{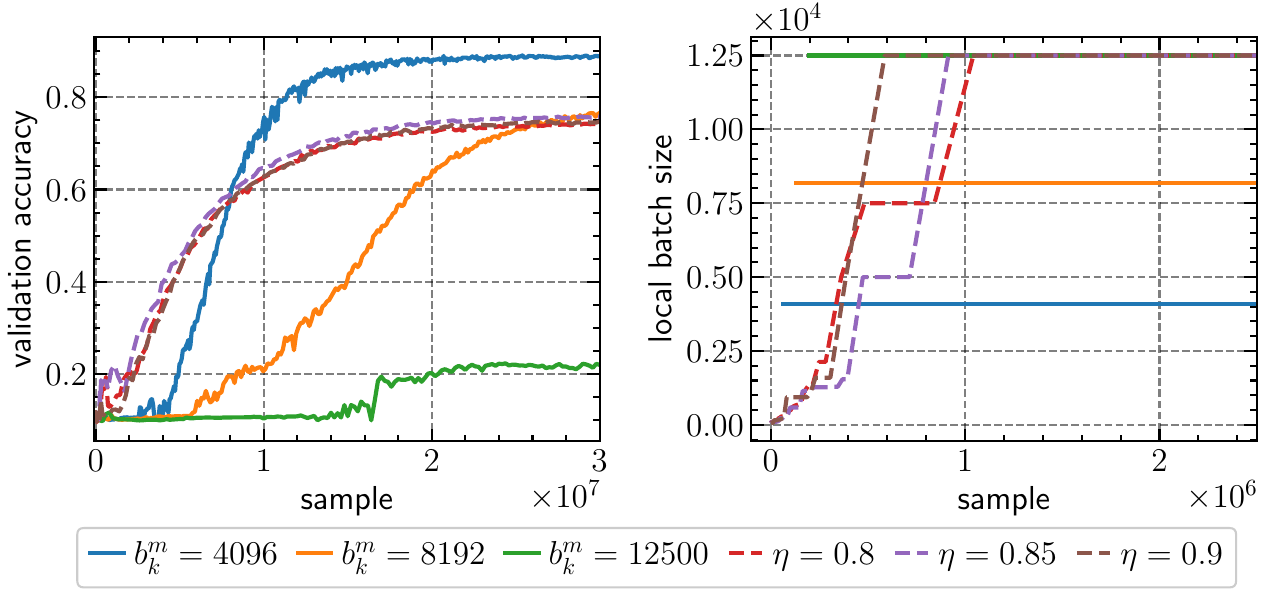}
            \caption{$H=4$ with $b_k^m\in\{4096, 8192, 12500\}$ and $\eta\in\{0.8,0.85,0.9\}$}
            \label{fig:resnet-50_cifar10_const_H4_2}
        \end{subfigure}%
        \hfill
        \begin{subfigure}[h]{0.49\textwidth}
            \centering
            \includegraphics[width=\textwidth]{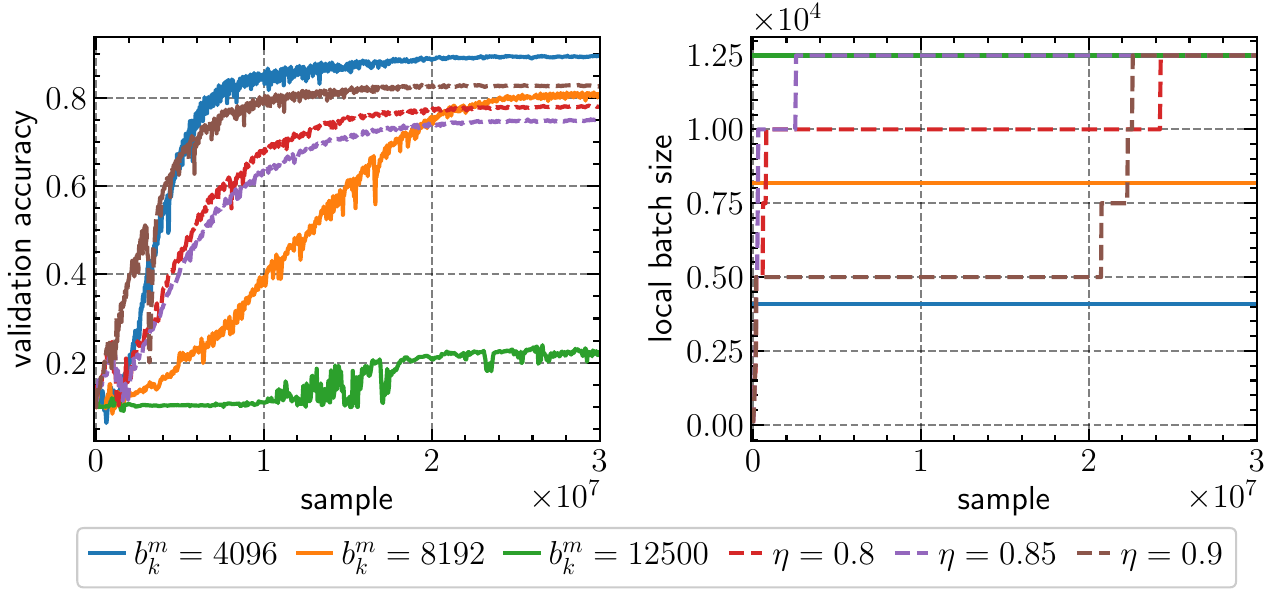}
            \caption{$H=1$ with $b_k^m\in\{4096, 8192, 12500\}$ and $\eta\in\{0.8,0.85,0.9\}$}
            \label{fig:resnet-50_cifar10_const_H1_2}
        \end{subfigure}%
        \hfill
        \begin{subfigure}[h]{0.7\textwidth}
            \centering
            \includegraphics[width=\textwidth]{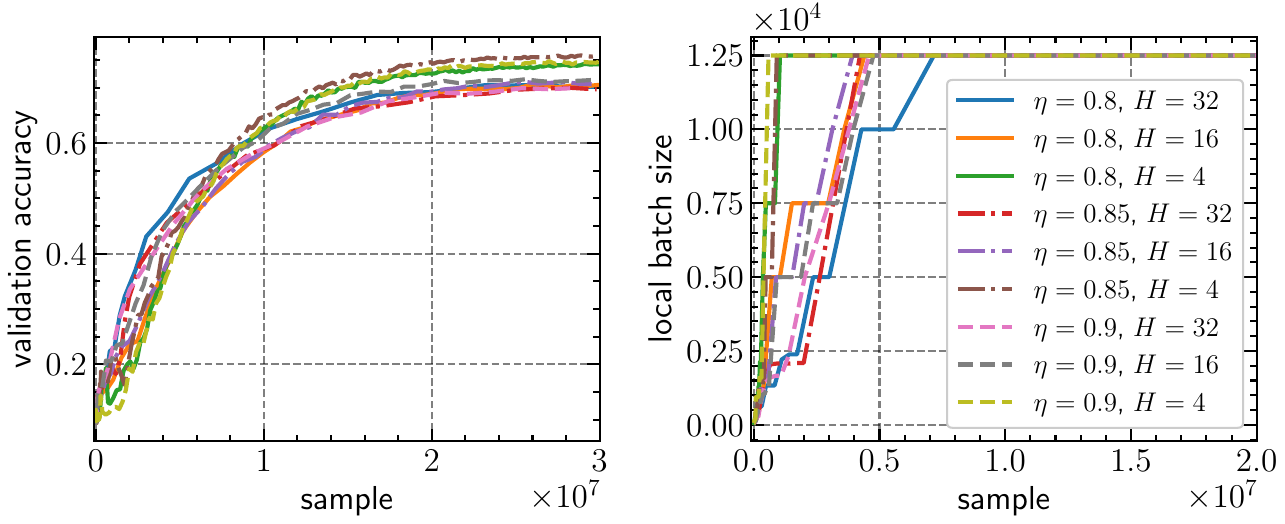}
            \caption{Adaptive batch size strategies for $H\in\{32, 16, 4\}$}
            \label{fig:resnet-50_cifar10_final_2}
        \end{subfigure}
        \caption{Validation accuracy and local batch sizes of Local \SHB with adaptive batch size strategies for \ResNet-50 on CIFAR-10 of the second seed. }
        \label{fig:resnet-50_cifar10_2}
    \end{figure}

    \begin{figure}[h!]    
        \centering
        \begin{subfigure}[h]{0.49\textwidth}
            \centering
            \includegraphics[width=\textwidth]{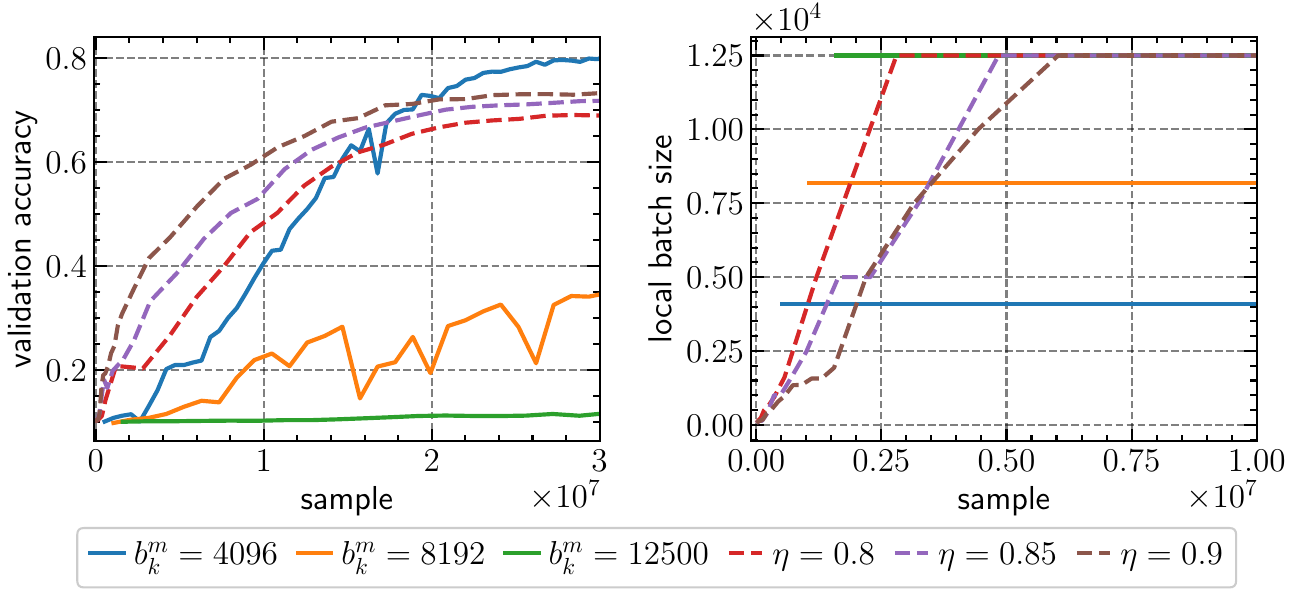}
            \caption{$H=32$ with $b_k^m\in\{4096, 8192, 12500\}$ and $\eta\in\{0.8,0.85,0.9\}$}
            \label{fig:resnet-50_cifar10_const_H32_3}
        \end{subfigure}%
        \hfill
        \begin{subfigure}[h]{0.49\textwidth}
            \centering
            \includegraphics[width=\textwidth]{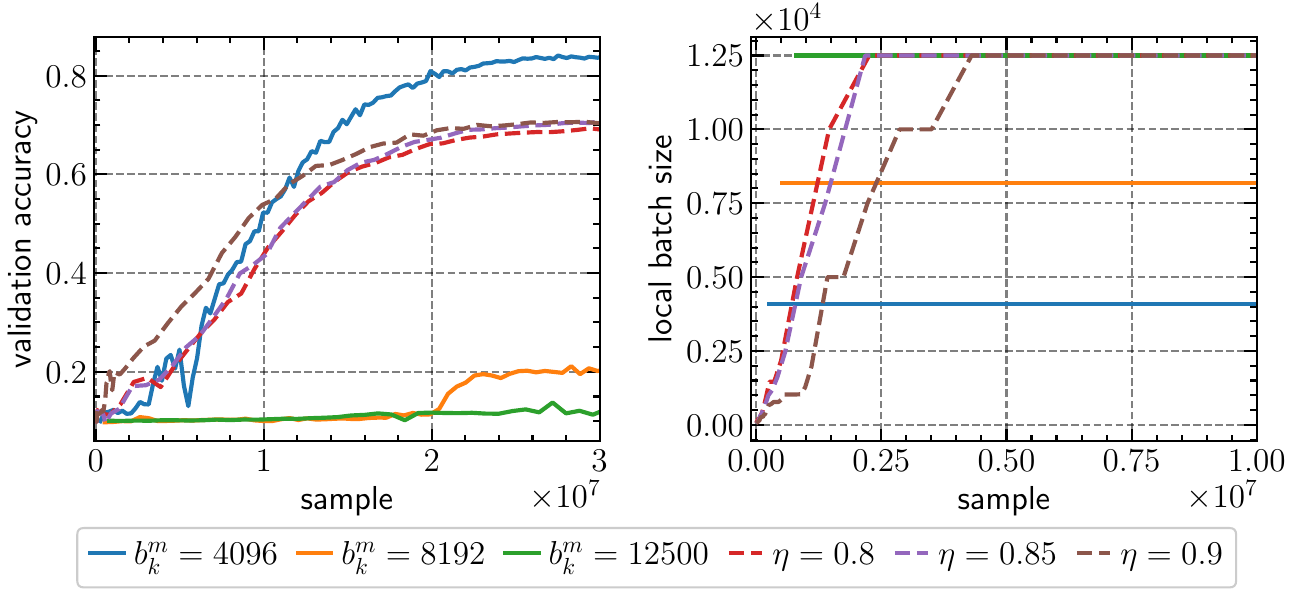}
            \caption{$H=16$ with $b_k^m\in\{4096, 8192, 12500\}$ and $\eta\in\{0.8,0.85,0.9\}$}
            \label{fig:resnet-50_cifar10_const_H16_3}
        \end{subfigure}%
        \hfill
        \begin{subfigure}[h]{0.49\textwidth}
            \centering
            \includegraphics[width=\textwidth]{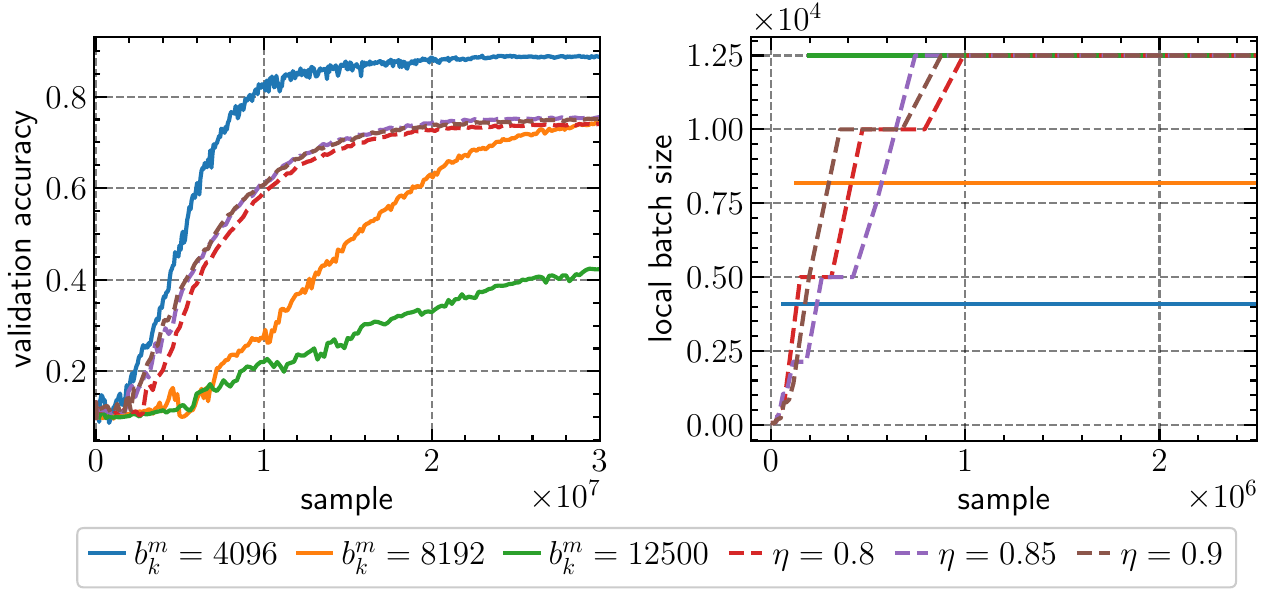}
            \caption{$H=4$ with $b_k^m\in\{4096, 8192, 12500\}$ and $\eta\in\{0.8,0.85,0.9\}$}
            \label{fig:resnet-50_cifar10_const_H4_3}
        \end{subfigure}%
        \hfill
        \begin{subfigure}[h]{0.49\textwidth}
            \centering
            \includegraphics[width=\textwidth]{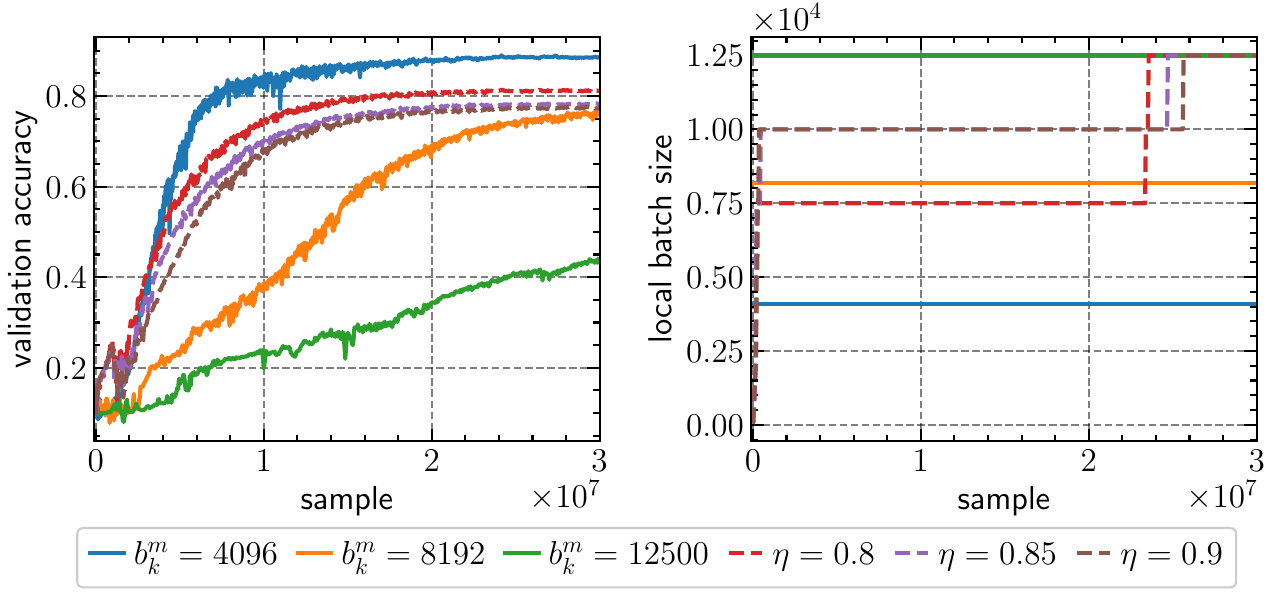}
            \caption{$H=1$ with $b_k^m\in\{4096, 8192, 12500\}$ and $\eta\in\{0.8,0.85,0.9\}$}
            \label{fig:resnet-50_cifar10_const_H1_3}
        \end{subfigure}%
        \hfill
        \begin{subfigure}[h]{0.7\textwidth}
            \centering
            \includegraphics[width=\textwidth]{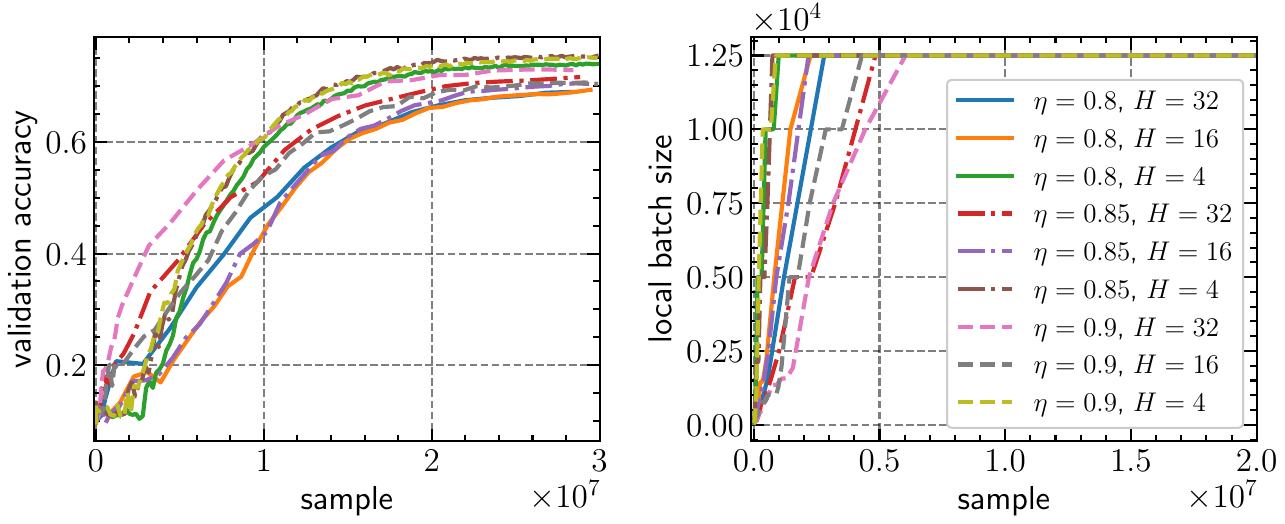}
            \caption{Adaptive batch size strategies for $H\in\{32, 16, 4\}$}
            \label{fig:resnet-50_cifar10_final_3}
        \end{subfigure}
        \caption{Validation accuracy and local batch sizes of Local \SHB with adaptive batch size strategies for \ResNet-50 on CIFAR-10 of the third seed. }
        \label{fig:resnet-50_cifar10_3}
    \end{figure}

    \clearpage
    \newpage
    \subsection{MicroLlama 300M on C4}
    \label{subsec:microllama-300m_c4_supp}
    
    \begin{table}[h!]
        \centering
        \caption{Training hyperparameters for MicroLlama 300M on C4}
        \label{table:hyperparams_microllama-300m_c4}
        \vspace*{2mm}
        \begin{tabular}{lc}
            Model & MicroLlama 300M on C4 \\
            \midrule
            Training samples (sequences) & 2M \\
            Sequence length & 2048 tokens \\
            Weight initialization & Default  \\
            Normalization & Layer Normalization \\
            Optimizer &  \AdamW \\
            $(\beta_1, \beta_2)$ & $(0.9, 0.95)$ \\
            Learning rate schedule & Linear warmup + cosine decay \\
            Learning rate warmup (samples) & 20,000 (1\%) \\
            Peak learning rate & 0.001 \\             
            Base learning rate & 0.0001 \\
            Learning rate scaling rule & linear (for constant batch sizes) \\
            Base global batch size & 256 \\
            Base local batch size & 64 \\
            Maximum global batch size & 8,192 \\
            Maximum local batch size & 2,048 \\
            Weight decay & 0.1 \\
            Weight decay skip bias & Yes \\
            Precision & \texttt{bfloat16} \\
            Dropout & $0$ \\
            Gradient clipping & $1.0$ \\         
            Data-parallel size & 4 \\   
            \bottomrule
        \end{tabular}
    \end{table}

    \begin{table}[h!]
        \centering
        \caption{Adaptive batch size strategies for MicroLlama 300M on C4; mean (standard deviation) of no.~of total steps (steps), wall-clock time (in hours), average local batch sizes (bsz.), and best validation loss (cross-entropy loss; estimated by 100 iterations) of all three seeds}
        \label{table:microllama_full}
        \vspace*{-2mm}
        \scriptsize
        \renewcommand{\arraystretch}{1.}
        \begin{tabular}{l|rrrr|rrrr}
            \toprule[1pt]
            \multirow{2}{*}{Schedule}
            & \multicolumn{4}{c|}{$H=32$}
            & \multicolumn{4}{c}{$H=16$}
            \\
            & steps & time & bsz. & loss & steps & time & bsz. & loss \\
            \midrule
            Constant & 31744 & 10.55 (0.04) & 512 & 4.20 (0.18) & 16384 & 10.51 (0.03) & 512 & 5.88 (0.21) \\
            Constant & 16384 & 10.51 (0.03) & 1024 & 4.96 (0.21) & 7936 & 10.60 (0.60) & 1024 & 5.08 (0.21) \\
            Constant & 8192 & 9.77 (0.01) & 2048 & 5.88 (0.21) & 4096 & 10.47 (0.02) & 2048 & 5.84 (0.10) \\
            $\eta=0.8$ & 14336 (1774) & 11.00 (0.12) & 1162 (155) & 4.81 (0.31) & 6059 (391) & 11.02 (0.05) & 1356 (89) & 4.96 (0.18) \\
            $\eta=0.9$ & 13995 (2132) & 11.15 (0.44) & 1207 (134) & 4.88 (0.19) & 7168 (678) & 11.11 (0.36) & 1163 (130) & 4.72 (0.19) \\
            \bottomrule
        \end{tabular}

        \begin{tabular}{l|rrrr}
            \toprule[1pt]
            \multirow{2}{*}{Schedule}
            & \multicolumn{4}{c}{$H=4$} \\
            & steps & time & bsz. & loss \\
            \midrule
            Constant & 14336 & 11.00 (0.12) & 512 & 4.81 (0.31) \\
            Constant & 1968 & 11.28 (0.04) & 1024 & 5.14 (0.13) \\
            Constant & 992 & 10.92 (0.03) & 2048 & 5.99 (0.04) \\
            $\eta=0.8$ & 1206 (177) & 11.14 (0.02) & 1701 (260) & 5.15 (0.38) \\
            $\eta=0.9$ & 1238 (199) & 11.14 (0.06) & 1668 (303) & 4.94 (0.33) \\
            \bottomrule
        \end{tabular}
        \renewcommand{\arraystretch}{1}
        \vspace*{-2.5mm}
    \end{table}

    \paragraph{Observations.}
    For the task of language modeling with MicroLlama 300M on the C4 dataset, we also perform the training runs with three different random seeds. Similar conclusions to those in the main text can also be made, even though the proposed adaptive batch size strategies lead to different batch size growth patterns when using different random seeds (see \Cref{fig:microllama_223,fig:microllama_323}).

    \begin{figure}[h!]    
            \centering
            \begin{subfigure}[h]{0.49\textwidth}
                \centering
                \includegraphics[width=\textwidth]{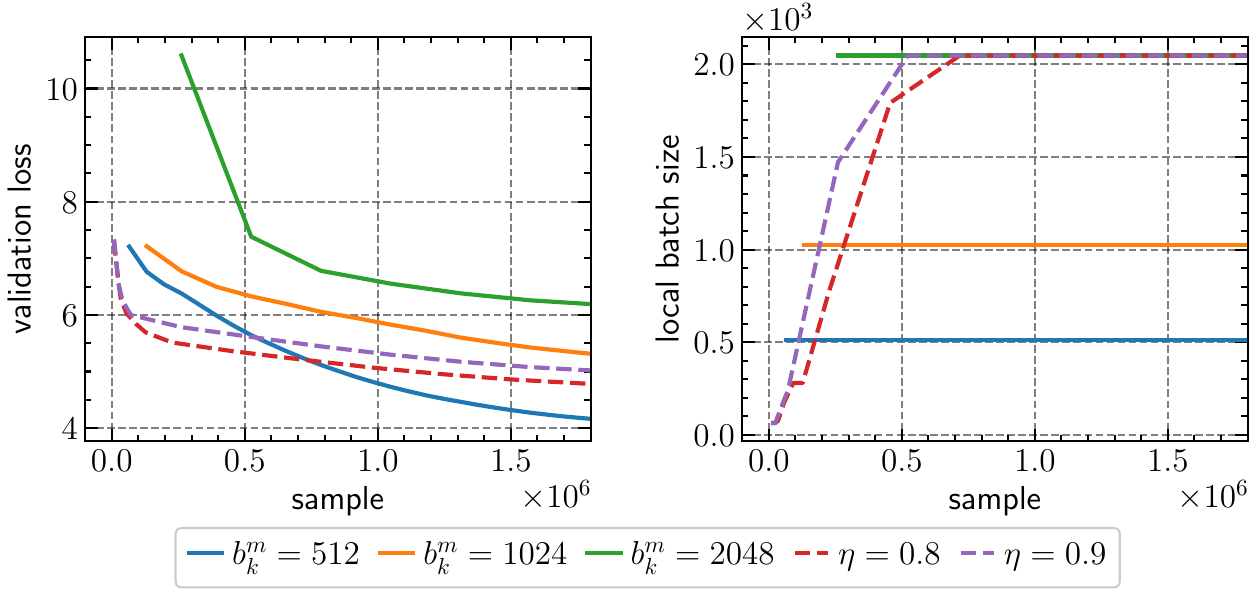}
                \caption{$H=32$ with $b_k^m\in\{512, 1024, 2048\}$ and $\eta\in\{0.8,0.9\}$}
                \label{fig:microllama_const_H32_223}
            \end{subfigure}%
            \hfill
            \begin{subfigure}[h]{0.49\textwidth}
                \centering
                \includegraphics[width=\textwidth]{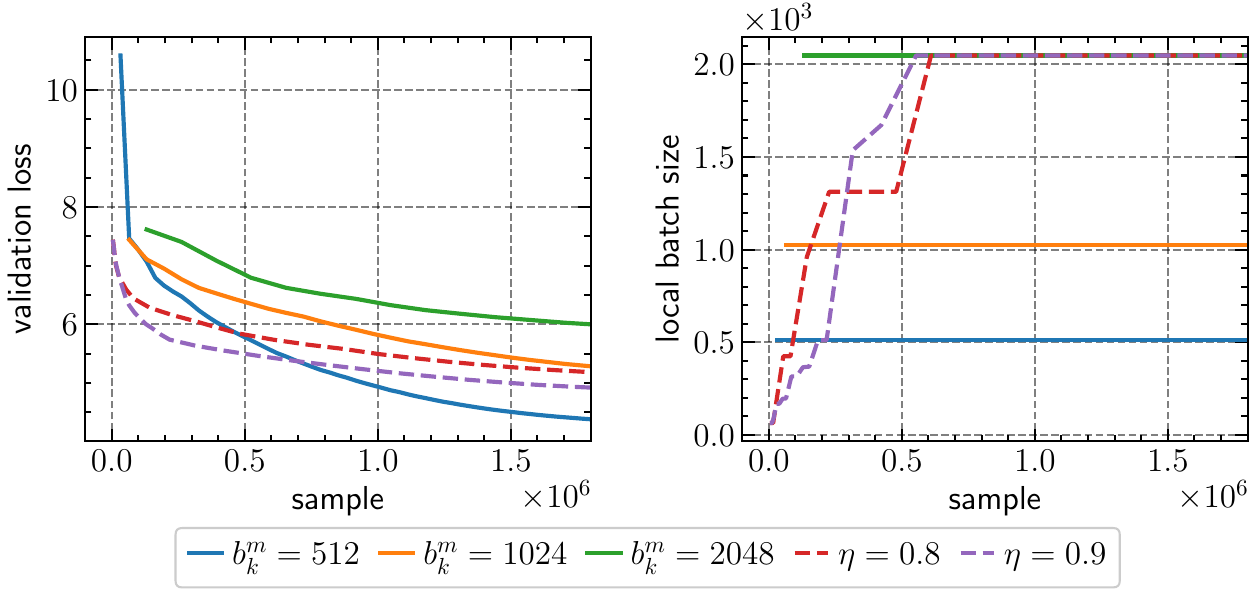}
                \caption{$H=16$ with $b_k^m\in\{512, 1024, 2048\}$ and $\eta\in\{0.8,0.9\}$}
                \label{fig:microllama_const_H16_223}
            \end{subfigure}%
            \hfill
            \begin{subfigure}[h]{0.49\textwidth}
                \centering
                \includegraphics[width=\textwidth]{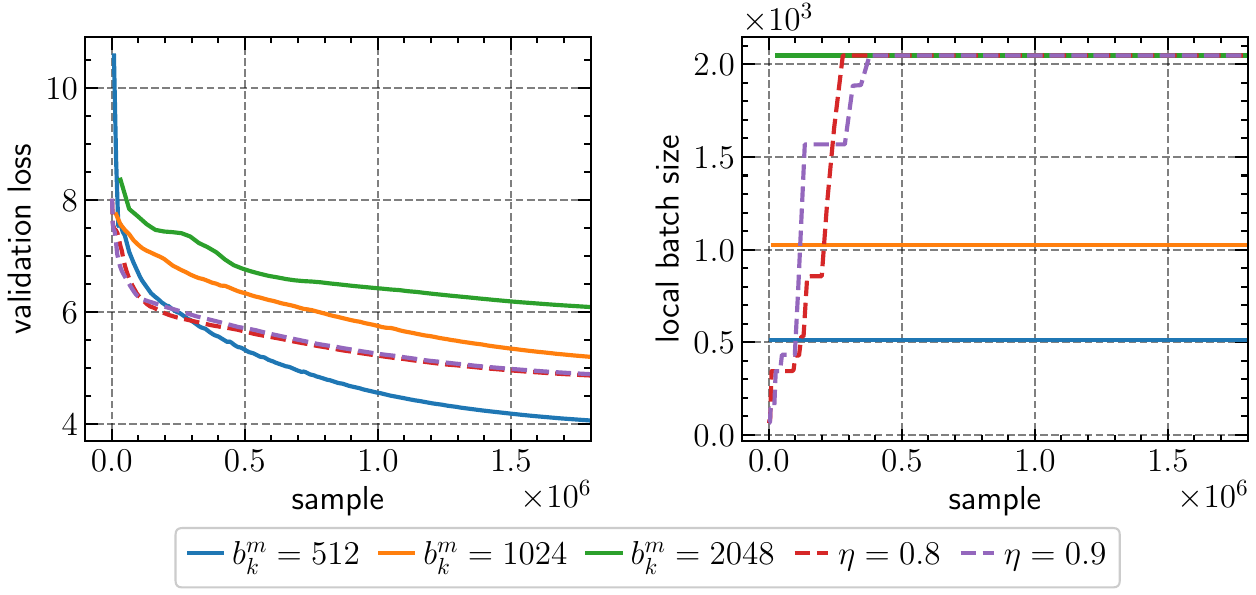}
                \caption{$H=4$ with $b_k^m\in\{512, 1024, 2048\}$ and $\eta\in\{0.8,0.9\}$}
                \label{fig:microllama_const_H4_223}
            \end{subfigure}%
             \hfill
            \begin{subfigure}[h]{0.49\textwidth}
                \centering
                \includegraphics[width=\textwidth]{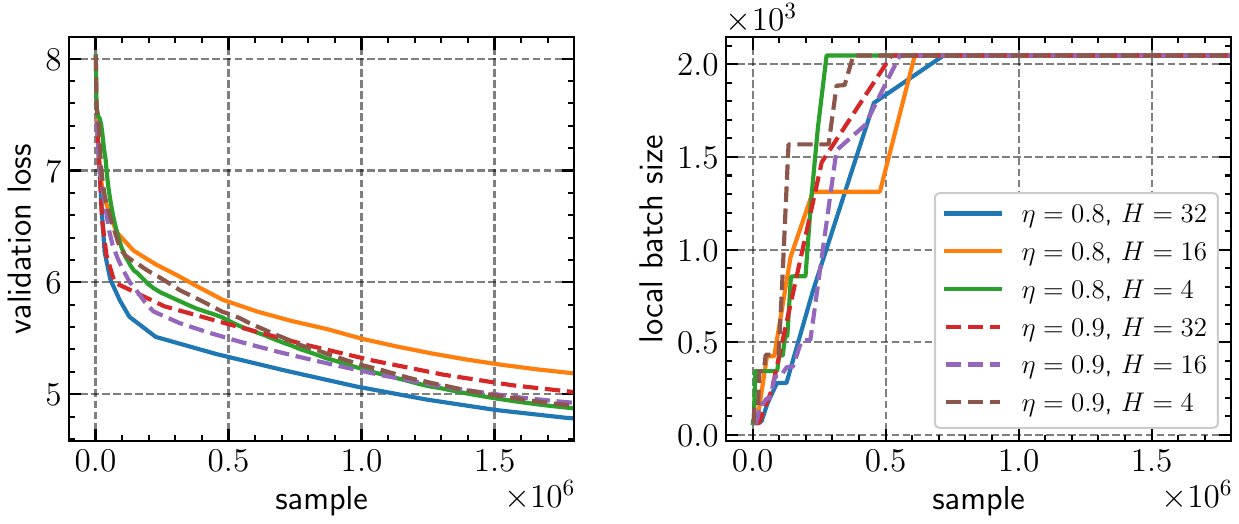}
                \caption{Adaptive batch size strategies for $H\in\{32, 16, 4\}$}
                \label{fig:microllama_final_223}
            \end{subfigure}
            \caption{Validation loss and local batch sizes of Local \AdamW with adaptive batch size strategies for MicroLlama 300M on C4 of the second seed. }
            \label{fig:microllama_223}
        \end{figure}

        \begin{figure}[h!]    
            \centering
            \begin{subfigure}[h]{0.49\textwidth}
                \centering
                \includegraphics[width=\textwidth]{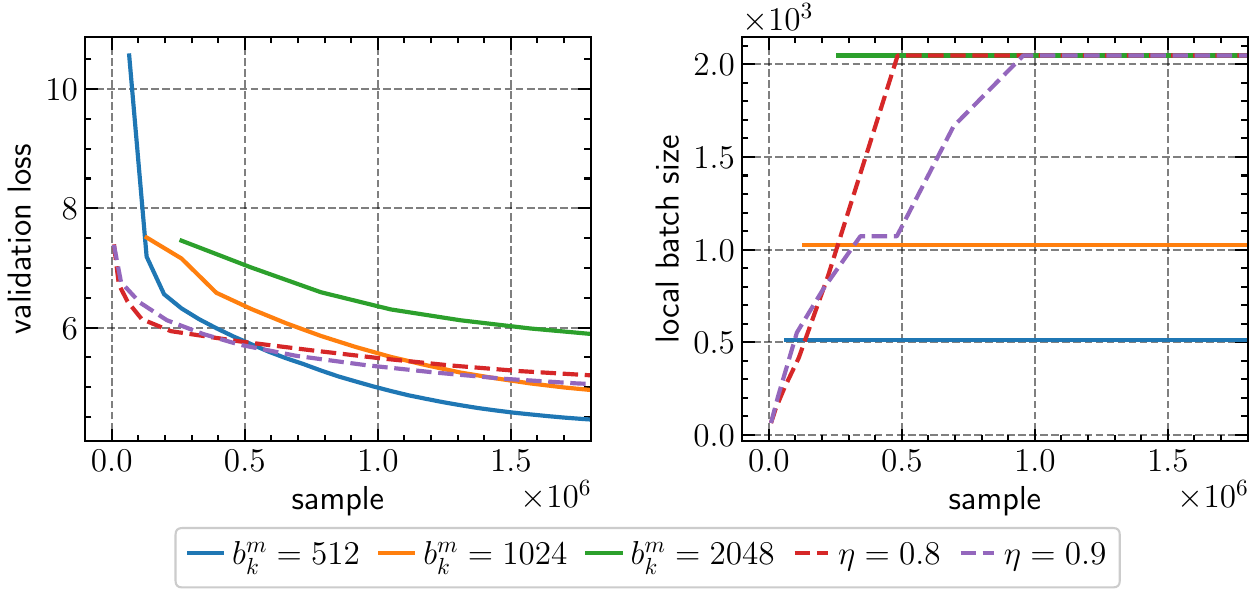}
                \caption{$H=32$ with $b_k^m\in\{512, 1024, 2048\}$ and $\eta\in\{0.8,0.9\}$}
                \label{fig:microllama_const_H32_323}
            \end{subfigure}%
            \hfill
            \begin{subfigure}[h]{0.49\textwidth}
                \centering
                \includegraphics[width=\textwidth]{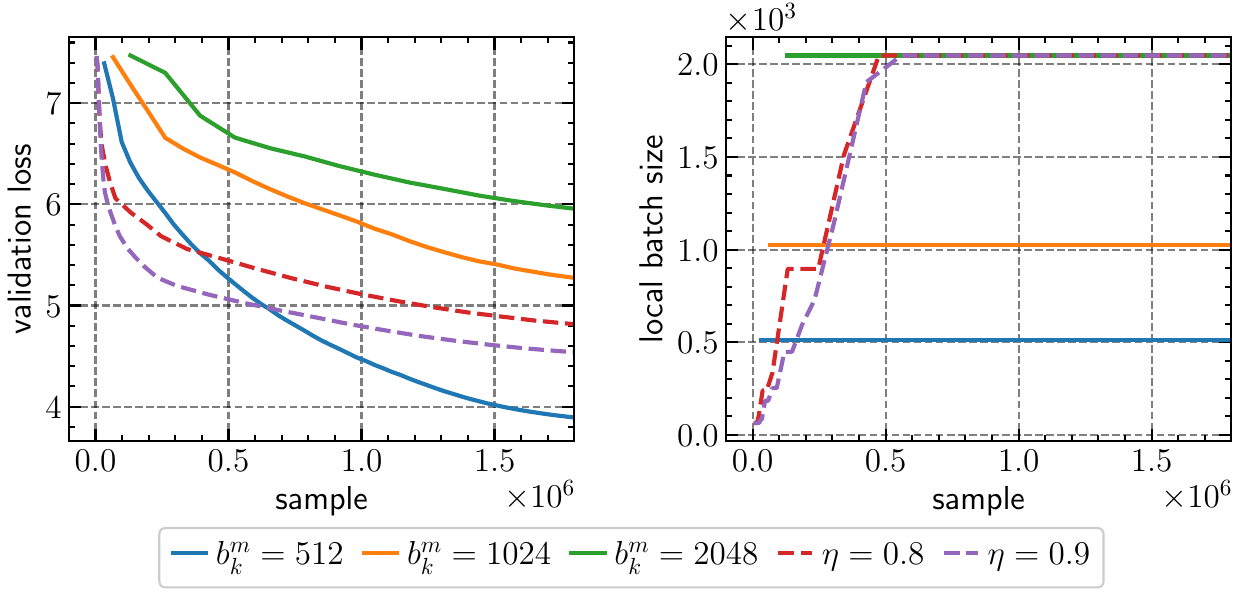}
                \caption{$H=16$ with $b_k^m\in\{512, 1024, 2048\}$ and $\eta\in\{0.8,0.9\}$}
                \label{fig:microllama_const_H16_323}
            \end{subfigure}%
            \hfill
            \begin{subfigure}[h]{0.49\textwidth}
                \centering
                \includegraphics[width=\textwidth]{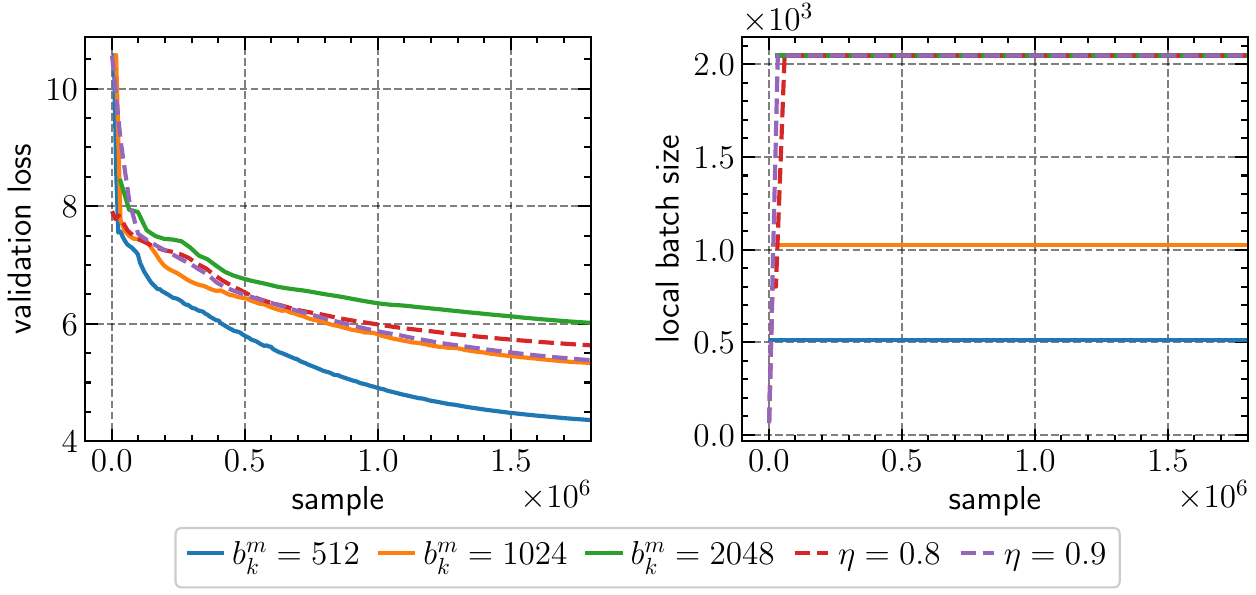}
                \caption{$H=4$ with $b_k^m\in\{512, 1024, 2048\}$ and $\eta\in\{0.8,0.9\}$}
                \label{fig:microllama_const_H4_323}
            \end{subfigure}%
             \hfill
            \begin{subfigure}[h]{0.49\textwidth}
                \centering
                \includegraphics[width=\textwidth]{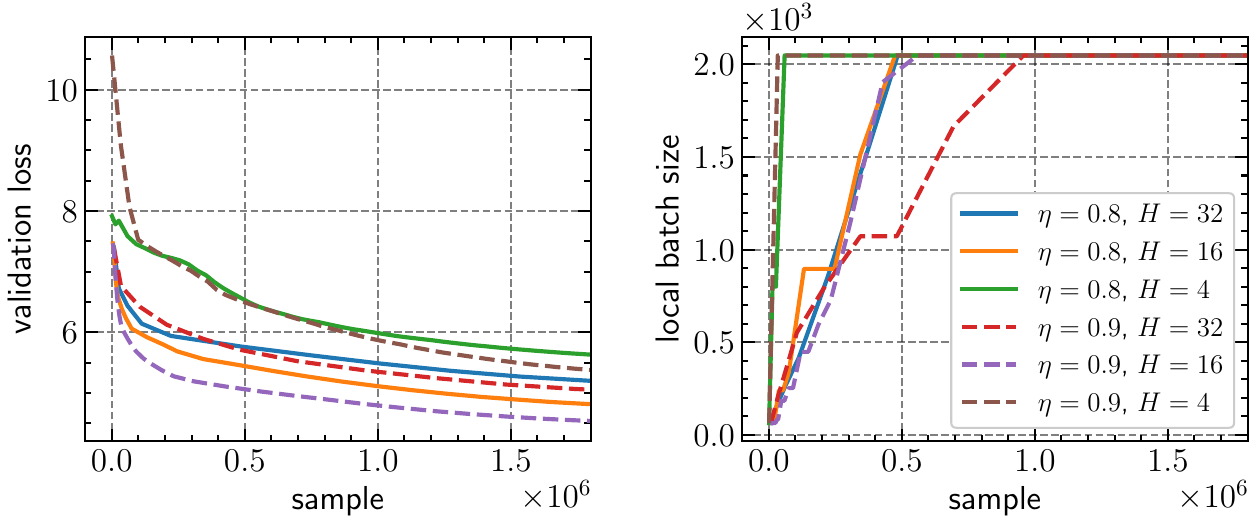}
                \caption{Adaptive batch size strategies for $H\in\{32, 16, 4\}$}
                \label{fig:microllama_final_323}
            \end{subfigure}
            \caption{Validation loss and local batch sizes of Local \AdamW with adaptive batch size strategies for MicroLlama 300M on C4 of the third seed. }
            \label{fig:microllama_323}
        \end{figure}
	
    \clearpage
    \newpage
    \subsection{\ResNet-101 on ImageNet}
    \label{subsec:resnet-101_imagenet_supp}
    
    We also train a \ResNet-101 \citep{he2016deep} on the ImageNet dataset \citep{russakovsky2015imagenet} to evaluate our method on models and datasets of larger scales.

    \begin{table}[h!]
        \centering
        \caption{Training hyperparameters for \ResNet-101 on ImageNet}
        \label{table:hyperparams_resnet-101_imagenet}
        \vspace*{2mm}
        \begin{tabular}{lc}
            Model & \ResNet-101 on ImageNet \\
            \midrule
            Training samples & 256,233,400 (200 epochs) \\
            Weight initialization & Default  \\
            Optimizer &  Momentum \SGD (\SHB) \\
            Learning rate schedule & Linear warmup + cosine decay \\
            Learning rate warmup (samples) & 6,405,835 (2.5\%) \\
            Peak learning rate & 2.5 \\              
            Base learning rate & 0.25 \\
            Base global batch size & 512 \\
            Base local batch size & 128 \\
            Maximum global batch size & 52,000 \\
            Maximum local batch size & 13,000 \\
            Weight decay & 0.0001 \\
            Momentum & 0.9 \\
            Precision & \texttt{bfloat16} \\
            Data-parallel size & 4 \\
            \bottomrule
        \end{tabular}
    \end{table}

    \begin{table}[h!]
        \centering
        \caption{Adaptive batch size strategies for \ResNet-101 on ImageNet; no.~of total steps (steps), wall-clock time (in hours), average local batch sizes (bsz.), and best top-1 validation accuracy (acc.; in \%), and best top-5 validation accuracy (acc.@5; in \%)}
        \label{table:resnet-101_imagenet}
        \scriptsize
        \begin{tabular}{l|rrrrr|rrrrr}
            \toprule[1pt]
            \multirow{2}{*}{Schedule}
            & \multicolumn{5}{c|}{$H=32$}
            & \multicolumn{5}{c}{$H=16$}
            \\
            & steps & time & bsz. & acc. & acc.@5 & steps & time & bsz. & acc. & acc.@5 \\
            \midrule
            Constant & 10656 & 14.56 & 6000 & 59.20 & 81.84 & 10672 & 14.78 & 6000 & 63.76 & 85.18 \\
            Constant & 4896 & 14.35 & 13000 & 38.77 & 63.30 & 4912 & 14.34 & 13000 & 50.87 & 74.89 \\
            $\eta=0.9$ & 5216 & 14.53 & 12284 & 50.61 & 74.59 & 5072 & 14.64 & 12603 & 55.63 & 78.86 \\
            $\eta=0.95$ & 5280 & 14.31 & 12124 & 49.13 & 73.23 & 5088 & 15.09 & 12573 & 58.41 & 81.17 \\
            \bottomrule
        \end{tabular}

        \begin{tabular}{l|rrrrr}
            \toprule[1pt]
            \multirow{2}{*}{Schedule}
            & \multicolumn{5}{c}{$H=4$}
            \\
            & steps & time & bsz. & acc. & acc.@5 \\
            \midrule
            Constant & 10676 & 17.20 & 6000 & 71.28 & 89.97 \\
            Constant & 4924 & 15.41 & 13000 & 62.66 & 84.33 \\
            $\eta=0.9$ & 4952 & 15.62 & 12931 & 65.90 & 86.47 \\
            $\eta=0.95$ & 4976 & 16.75 & 12873 & 67.05 & 87.24 \\
            \bottomrule
        \end{tabular}
    \end{table}

    \paragraph{Observations.}
    We observe from \Cref{table:resnet-101_imagenet} that the proposed adaptive batch size strategy with $\eta=0.9$ achieves better generalization performance (in terms of both accuracy and top-5 accuracy) than that of a constant large local batch size of 13,000 but worse than that of a constant small local batch size of 6,000 (i.e., generalization gap), while requiring also an intermediate amount of training time. The performance gap grows with the number of local steps $H$ (cf.~\Cref{fig:resnet_imagenet_H32,fig:resnet_imagenet_H16,fig:resnet_imagenet_H4}). 

    \begin{figure}[h!]    
        \centering
        \includegraphics[width=15cm]{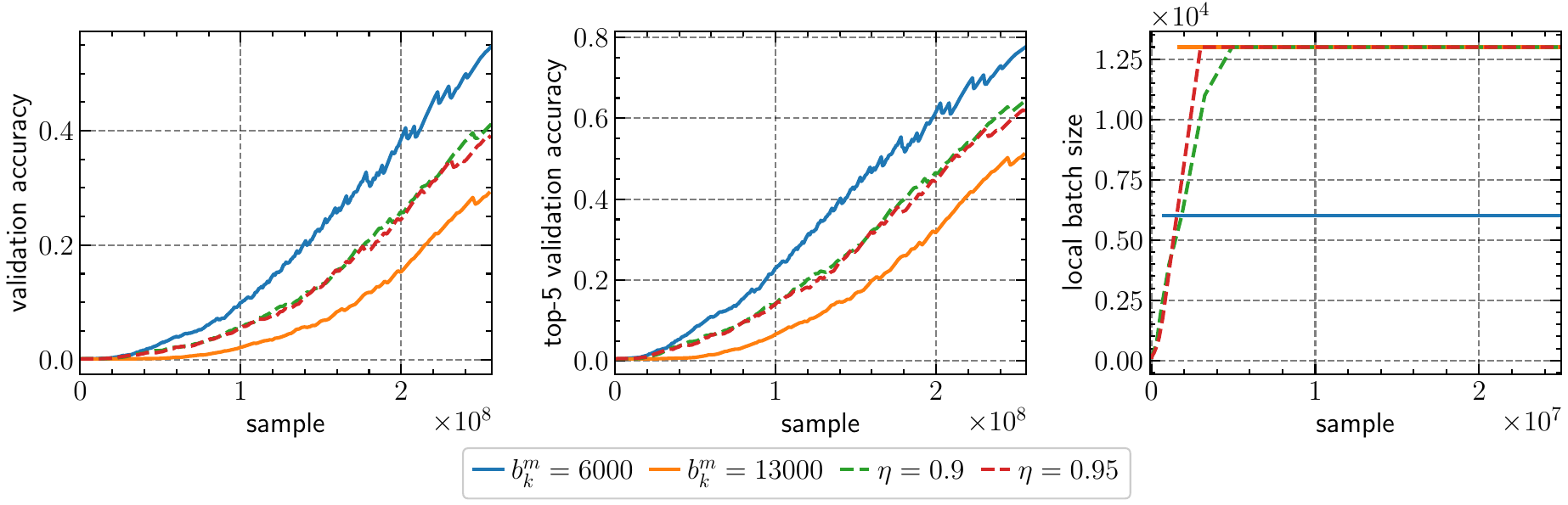}
        \caption{Validation accuracy, top-5 validation accuracy and local batch sizes curves of Local \SHB with adaptive batch size strategies for \ResNet-101 on ImageNet for $H=32$. }
        \label{fig:resnet_imagenet_H32}
    \end{figure}

    \begin{figure}[h!]    
        \centering
        \includegraphics[width=15cm]{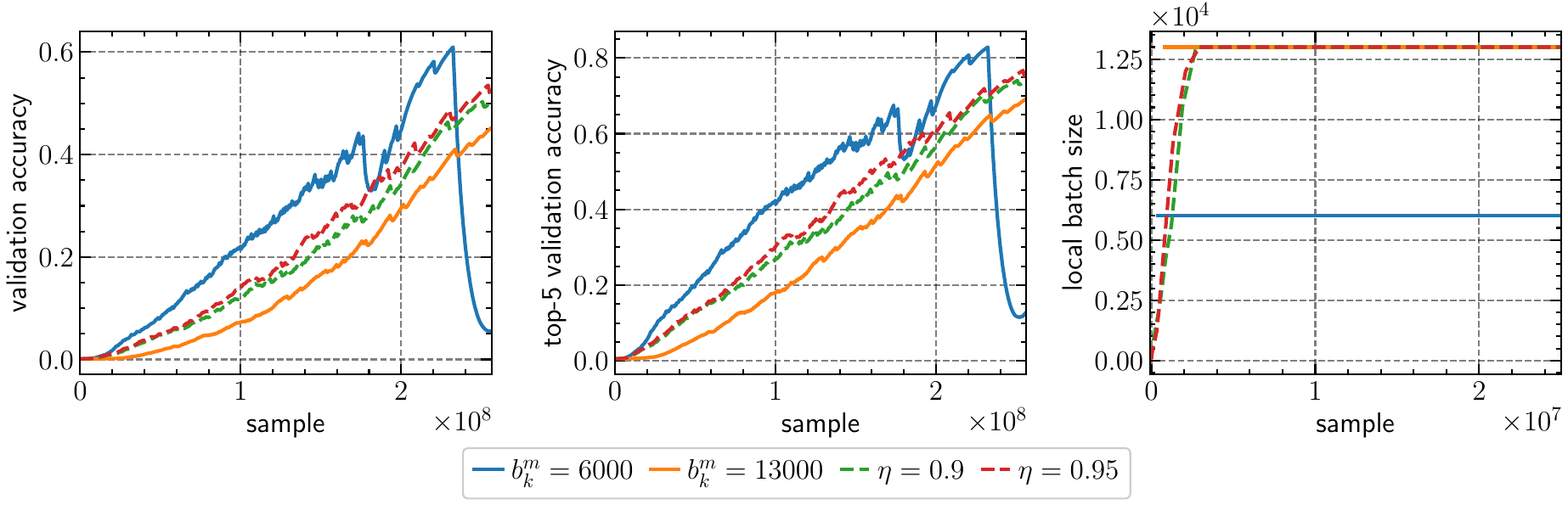}
        \caption{Validation accuracy, top-5 validation accuracy and local batch sizes curves of Local \SHB with adaptive batch size strategies for \ResNet-101 on ImageNet for $H=16$. }
        \label{fig:resnet_imagenet_H16}
    \end{figure}
    
    \begin{figure}[h!]    
        \centering
        \includegraphics[width=15cm]{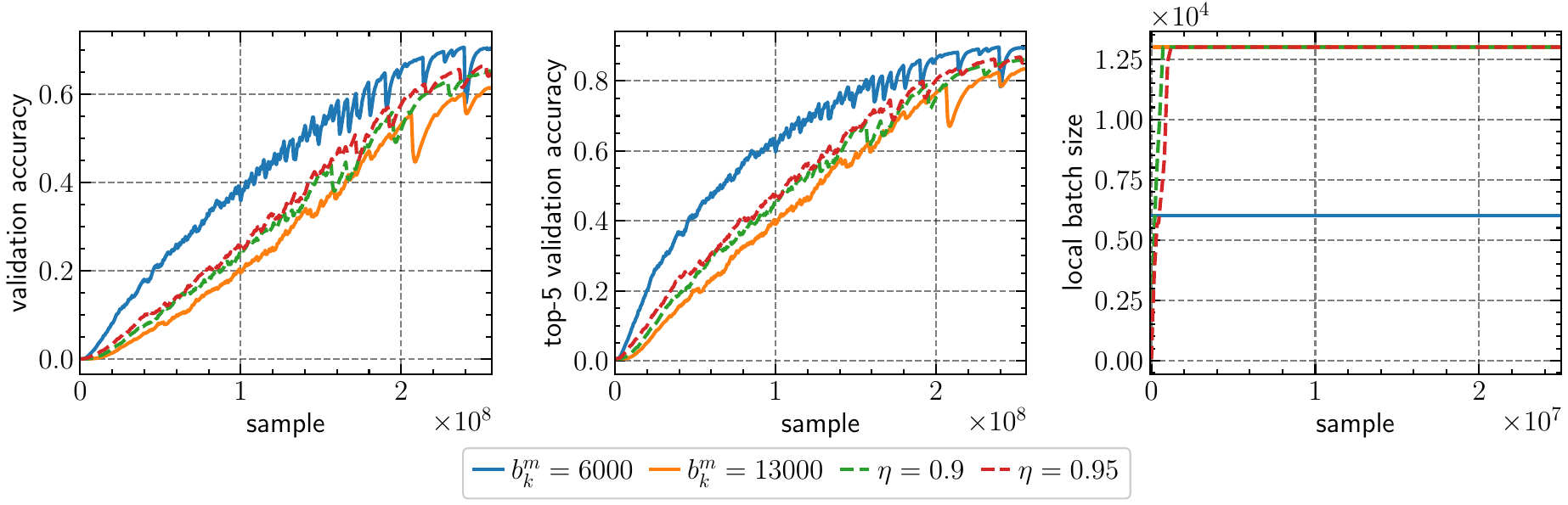}
        \caption{Validation accuracy, top-5 validation accuracy and local batch sizes curves of Local \SHB with adaptive batch size strategies for \ResNet-101 on ImageNet for $H=4$. }
        \label{fig:resnet_imagenet_H4}
    \end{figure}

\end{document}